\documentclass{article}

\usepackage[preprint]{neurips_2024}

\usepackage{microtype}
\usepackage{graphicx}
\usepackage[labelformat=simple]{subcaption}
\usepackage{booktabs} 

\usepackage[colorlinks={true}, citecolor=blue, linkcolor=blue]{hyperref}



\usepackage{amsmath}
\usepackage{amssymb}
\usepackage{mathtools}
\usepackage{amsthm}
\usepackage{comment}
\usepackage{wrapfig}

\usepackage[capitalize,noabbrev]{cleveref}

\theoremstyle{plain}
\newtheorem{theorem}{Theorem}[section]

\newtheorem{lemma}[theorem]{Lemma}

\theoremstyle{definition}
\newtheorem{definition}[theorem]{Definition}

\theoremstyle{remark}
\newtheorem{remark}[theorem]{Remark}

\newcommand{\Var}{\mathrm{Var}}

\newcommand{\fswr}{FS${}_{\text{wR}}$}
\newcommand{\fswrmath}{\ensuremath{\text{FS}_{\text{wR}}}}
\newcommand{\fswor}{FS${}_{\text{woR}}$}
\newcommand{\fswormath}{\ensuremath{\text{FS}_{\text{woR}}}}

\def\hyphen{{\hbox{-}}}


\title{Differentially Private Stochastic Gradient Descent with Fixed-Size Minibatches: Tighter RDP Guarantees with or without Replacement}


\author{%
  Jeremiah~Birrell\\
  Department of Mathematics\\
  Texas State University\\
  San Marcos, TX 78666 \\
  \texttt{jbirrell@txstate.edu} \\
  \And
  Reza~Ebrahimi\\
  School of Information Systems and Management\\
 University of South Florida\\
  Tampa, FL 33620\\
  \texttt{ebrahimim@usf.edu} \\
  \And
   Rouzbeh~Behnia\\
  School of Information Systems and Management\\
 University of South Florida\\
  Tampa, FL 33620\\
  \texttt{behnia@usf.edu} \\   
  \And
   Jason~Pacheco\\
  Department of Computer Science \\
 University of Arizona\\
  Tucson, AZ  85721\\
  \texttt{pachecoj@cs.arizona.edu} \\   
}

\begin{document}
\maketitle

\begin{abstract}
Differentially private stochastic gradient descent (DP-SGD) has been instrumental in privately training deep learning models by providing a framework to control and track the privacy loss incurred during training.  At the core of this computation lies a subsampling method that uses a privacy amplification lemma to enhance the privacy guarantees provided by the additive noise.  
Fixed size subsampling is appealing for its constant memory usage, unlike the variable sized minibatches in Poisson subsampling. It is also of interest in addressing class imbalance and federated learning. However, the current computable guarantees for fixed-size subsampling are not tight and do not consider both add/remove and replace-one adjacency relationships. We present a new and holistic  R{\'e}nyi differential privacy (RDP)  accountant for DP-SGD with fixed-size subsampling without replacement (FSwoR) and with replacement (FSwR). For FSwoR we consider both add/remove and replace-one adjacency. Our FSwoR results improves on the best current computable bound by a factor of $4$. We also show for the first time that the widely-used Poisson subsampling and FSwoR with replace-one adjacency have the same privacy to leading order in the sampling probability. Accordingly, our work suggests that FSwoR is often preferable to Poisson subsampling due to constant memory usage. Our FSwR accountant includes explicit non-asymptotic upper and lower bounds and, to the  authors' knowledge, is the first such analysis of fixed-size RDP with replacement for DP-SGD. We analytically and empirically compare fixed size and Poisson subsampling, and show that DP-SGD gradients in a fixed-size subsampling regime exhibit lower variance in practice in addition to memory usage benefits.
\end{abstract}

\section{Introduction}
\label{intro}
Differentially private stochastic gradient descent (DP-SGD) \citep{abadi2016deep} (DP-SGD) has been one of the cornerstones of privacy preserving deep learning. DP-SGD allows a so-called moments accountant technique to sequentially track privacy leakage \citep{abadi2016deep}. This technique is subsumed by R{\'e}nyi differential privacy (RDP) \citep{mironov2017renyi}, a  relaxation of standard differential privacy (DP) \citep{DworkR16} that is widely used in private deep learning and implemented in modern DP libraries such as Opacus \citep{Opacus} and autodp \citep{zhu2019poission, zhu2022optimal}. RDP facilitates rigorous mathematical analysis when the dataset is accessed by a sequence of randomized mechanisms as in DP-SGD. While other privacy accountant frameworks such as $f$-DP \citep{dong2022gaussian}, privacy loss distributions (PLD) \citep{KoskelaJH20}, Privacy Random Variable (PRV) \citep{prv_gopi2021numerical}, Analytical Fourier Accountant (AFA) \citep{zhu2022optimal}, and Saddle-Point Accountant (SPA) \cite{alghamdi2023saddle} have been proposed to improve conversion to privacy profiles, RDP is still one of the main privacy accountants used in many popular deep learning tools and applications. Accordingly, we expect providing tighter bounds specific to RDP will have important practical implications for the community.

Each iteration of DP-SGD can be understood as a private release of information about a stochastic gradient. DP accounting methods must bound the total privacy loss incurred by applying a  sequence of DP-SGD mechanisms  during training.  Privacy of the overall mechanism is ensured through the application of two random processes: 1) a randomized mechanism that subsamples minibatches of the training data, 2) noise applied to each gradient calculation (e.g., Gaussian).  Informally, if a mechanism $\mathcal{M}$ is $(\epsilon, \delta)$-DP then a mechanism that subsamples with probability $q \in (0,1)$ ensures privacy $(O(q \epsilon), q \delta)$-DP; a result known as the ``privacy amplification lemma" \citep{li2012sampling, zhu2019poission}.

Computing the privacy parameters of a mechanism requires an accounting procedure that can be nontrivial to design.  \citet{abadi2016deep} compute DP parameters for the special case of Gaussian noise, which was later extended to track RDP parameters by \cite{mironov2017renyi}. These analyses are limited to Poisson subsampling mechanisms which produce minibatches of variable size. This creates engineering challenges in modern machine learning pipelines and also does not allow for sampling with replacement. Fixed-size minibatches are preferable as they can be aligned with available GPU memory, leading to higher learning throughput. In practice, fixed size minibathces are of interest in practical applications such as differentially private distributed learning. To address the issue, \citet{wang2019subsampled} and \citet{zhu2022optimal} propose the latest fixed-size accounting methods. \citet{wang2019subsampled} provide computable RDP bounds for FSwoR that cover general mechanisms under replace-one adjacency, while \citet{zhu2022optimal} give computable bounds for FSwoR under add/remove adjacency. Specifically, the study in \cite{wang2019subsampled}  provides a general formula to convert the RDP parameters of a mechanism $\mathcal{M}$ to RDP parameters of a subsampled (without replacement) mechanism, but the bounds they provide are only tight up to a constant factor at leading order (i.e., comparing dominant terms in the asymptotic expansion in $q$). Moreover, while the FSwoR results under add/remove adjacency relationship in \citep{zhu2022optimal} are in parallel to our results, their accountant is not readily applicable to FSwoR under replace-one adjacency relationship.

In our work, we (1) present a new and holistic RDP accountant for DP-SGD with fixed-size subsampling without replacement (FSwoR) and with replacement (FSwR) that considers both add/remove and replace-one adjacency.  We note that FSwoR  is equivalent to shuffling the dataset at the start of every iteration and then taking the first $|B|$ elements, which is conveniently similar to the default behavior of common deep learning libraries (e.g. Pytorch's \texttt{DataLoader} iterator object) while FSwR latter is implemented, e.g., in Pytorch's \texttt{RandomSampler}, and is widely used to address class imbalance in classifiers \citep{ClassImbalance} and biased client selection in federated learning applications \citep{FLClientSamplingCho22}. (2) Our FSwoR improves on the best current computable bound under replace-one adjacency \citep{wang2019subsampled} by a factor of $4$. (3) Our FSwR accountant includes non-asymptotic upper and lower bounds and, to the  authors' knowledge, is the first such analysis of fixed-size RDP with replacement for DP-SGD. (4) For the first time, we show that FSwoR and the widely-used Poisson subsampling have the same privacy  under replace-one adjacency  to leading order in the sampling probability. This has important practical benefits given the memory management advantages of fixed-size subsampling. Thus our results suggest that FSwoR is often preferable to Poisson subsampling.  The implementation of our accountant is included in Supplementary Materials and is also publicly available to the community at \url{https://github.com/star-ailab/FSRDP}.

\section {Background and Related Work} In an influential work, \citet{mironov2017renyi} proposed a differential privacy definition based on R{\'e}nyi divergence, RDP, that subsumes the celebrated moments accountant \citep{abadi2016deep} used to track privacy loss in  DP-SGD. RDP and many other flavors of DP rely on subsampling for \textit{privacy amplification}, which ensures that privacy guarantees hold when DP-SGD is applied on samples (i.e., minibatches) of training data \citep{bun2018composable, balle2018privacy}. 
Later works such as \citep{mironov2019r, zhu2019poission} propose a generic refined bound for subsampled Gaussian Mechanism that improves the tightness of guarantees in RDP. However, these methods also produce variable-sized minibatches. 

\citet{bun2018composable}, \citet{balle2018privacy}, and  \citet{wang2019subsampled} propose fixed-size subsampled  mechanisms.  \citet{balle2018privacy} provide a general fixed-size subsampling analysis with and without replacement but their bounds focus on tight $(\epsilon,\delta)$-DP bouds for a single step, and are therefore not readily applicable to DP-SGD. The AFA developed by \cite{zhu2022optimal} is an enhancement of the moment accountant which  offers a lossless conversion to $(\epsilon, \delta)$ guarantees and is designed for general mechanisms. However, in the context of DP-SGD, to the best of our knowledge, there are still technical and practical difficulties. For instance, in DP-SGD there is generally a  large number of training steps, in which case the numerical integrations required by Algorithm 1 in \citet{zhu2022optimal} involve the sum of  a large number of oscillatory terms. This  can lead to numerical difficulties, especially in light of the lack of rigorous error bounds for the double (Gaussian) quadrature numerical integration approach as discussed in their Appendix E.1. In addition, the bounds in \cite{zhu2022optimal} are not readily applicable under replace-one adjacency.   Due to these considerations, and the fact that RDP is still widely used, in this work  we focus on improving the RDP bounds from \citet{wang2019subsampled}.

\citet{hayes2024bounding} uses the results in \citep{zhu2022optimal} to provide a bound specific to DP-SGD when one is only concerned with training data reconstruction attacks rather than the membership inference attacks, which leads to a relaxation of DP to provide a bound that is only applicable to defending against data reconstruction and not membership inference. Our work can be viewed as a complimentary work to \citep{hayes2024bounding} that is specific to DP-SGD and provides conservative bounds protecting against membership inference attacks (and thus any other type of model inversion attacks including training data reconstruction) in the RDP context.

In summary, to our knowledge, \citet{wang2019subsampled}  provides the best computable  bounds  in the fixed-size regime for RDP that are practical for application to DP-SGD. While the computations in \citet{wang2019subsampled} have the attractive property of applying to a general mechanism, in this work  we show that there is room for obtaining tighter bounds specific to DP-SGD with Gaussian noise.  Our FS-RDP bounds address this theoretical gap by employing a novel Taylor expansion expansion approach, which precisely captures the leading order behavior  in the sampling probability, while employing a computable upper bounds on the integral remainder term to prevent privacy leakage.

For convenience, we provide the definition of RDP below.
\begin{definition}[$(\alpha,\epsilon)$-RDP \citep{mironov2019r}]
Let $\mathcal{M}$ be a randomized mechanism, i.e., $\mathcal{M}(D)$ is a probability distribution for any choice of allowed input (dataset) $D$. A randomized mechanism   is said to have $(\alpha,\epsilon)$-RDP if for any two adjacent datasets $D,D^\prime$ it holds that $D_\alpha(\mathcal{M}(D), \mathcal{M}(D^\prime))\leq \epsilon$, where the R{\'e}nyi divergence of order $\alpha>1$ is defined by
\begin{align}\label{eq:D_alpha_def}
  D_\alpha(Q\|P)\coloneqq&\frac{1}{\alpha-1}\log\left[\int \left(\frac{Q(x)}{P(x)}\right)^\alpha P(x)dx\right]\,.  
\end{align} 
\end{definition}
The definition of dataset adjacency varies among applications. In this work we consider two such relations: 1) The add/remove adjacency relation, where datasets $D$ and $D^\prime$ to be adjacent if one can be obtained from the other by adding or removing a single element; we denote this by $D\simeq_{a/r} D^\prime$.  2) The replace-one adjacency definition, denoted $D\simeq_{r\text{-}o}D^\prime$, wherein $D$ is obtained from $D^\prime$ by replacing a single element.
In the next section we derive new RDP bounds when $\mathcal{M}$ is DP-SGD using fixed-size minibatches, with or without replacement.

\section{R{\'e}nyi-DP Bounds for SGD with Fixed-size Subsampling} \label{sec:FS_RDP_start}
In this section we present our main theoretical results, leading up to  R{\'e}nyi-DP bounds for SGD for  fixed-size subsampling done without replacement (Thm.~\ref{thm:RDP} for add/remove adjacency and Thm.~\ref{thm:FS_woR_replace_one} for replace-one adjacency) and with replacement (Thm.~\ref{thm:FSR_RDP_LB}). In Sec.~\ref{sec:Wang_et_al_comp}, we also compare our results with the analysis in \citep{wang2019subsampled} and show that our analysis yields  tighter bounds by a factor of approximately $4$. 

\subsection{FS-RDP: Definition and  Initial Bounds}
Given a loss function $\mathcal{L}$, a training dataset $D$ with $|D|$ elements, and a fixed minibatch size, we consider the DP-SGD NN parameter updates with fixed-size  minibatches,
\begin{equation}\label{eq:theta_recursive_def}
\Theta^D_{t+1}=\Theta^D_t-\eta_t G_t\,,
    \qquad G_t=\frac{1}{|B|}\left(\sum_{i\in B_t^D}\text{Clip}(\nabla_\theta\mathcal{L}(\Theta_t,D_{i}))+Z_t\right)\,,
\end{equation}
where $t$ is the iteration number, the initial condition  $\Theta^D_0$ is independent of $D$, $\eta_t$ are the learning rates, the noises $Z_t$ are  Gaussians with mean $0$  and covariance $C^2\sigma_t^2I$, and the $B_t^D$ are random minibatches with fixed size; we denote the fixed value of  $|B_t^D|$ by $|B|$ for brevity.  We will consider fixed-size subsampling both without and with replacement; in the former, $B_t^D$ is a  uniformly selected subset of $\{0,...,|D|-1\}$ of fixed size $|B|$ for every $t$ and in the latter, $B_t^D$ is a uniformly selected element of $\{0,...,|D|-1\}^{|B|}$ for every $t$.  We assume the Gaussian noises and minibatches are all independent jointly for all steps. Here $\text{Clip}$ denotes the clipping operation on vectors, with $\ell^2$-norm bound $C>0$, i.e., $\text{Clip}(x)\coloneqq  {x}/{\max\{1,\|x\|_2/C\}}$.
We derive a RDP bound \citep{mironov2017renyi} for the mechanism consisting of $T$ steps of DP-SGD \eqref{eq:theta_recursive_def}, denoted:
\begin{align}\label{eq:mechanism_def}
\mathcal{M}^{FS}_{[0,T]}(D)\coloneqq(\Theta_0^D,...,\Theta_T^D)\,.
\end{align}
Specifically, we consider the cases without replacement, $\mathcal{M}^{FS_{woR}}_{[0,T]}(D)$, and with replacement, $\mathcal{M}^{FS_{{wR}}}_{[0,T]}(D)$.  To obtain these FS-RDP accountants we bound $D_\alpha(\mathcal{M}^{FS}_{[0,T]}(D)\|\mathcal{M}^{FS}_{[0,T]}(D^\prime))$, where    $D^\prime$ is any  dataset that is adjacent to $D$; as previously stated, we will consider both add/remove and replace-one adjacency.    By taking worst-case bounds over the input state at each step, as done in Theorem 2.1 in \citep{abadi2016deep},  we  decompose the problem into a sum over steps 
\begin{equation}\label{eq:Renyi_wc_chain_rule}
    D_\alpha(\mathcal{M}^{FS}_{[0,T]}(D)\|\mathcal{M}^{FS}_{[0,T]}(D^\prime))
    \leq \sum_{t=0}^{T-1} \sup_{\theta_{t}}D_\alpha\!\left(p_{t}(\theta_{t+1}|\theta_{t},D)\|p_{t}(\theta_{t+1}|\theta_{t},D^\prime)\right)\,,
\end{equation}
where the time-inhomogeneous transition probabilities are,
\begin{gather}\label{eq:DP_SGD_p_def}
   p_t(\theta_{t+1}|\theta_t,D)=\frac{1}{Z_{|B|,|D|}}\sum_b  N_{\mu_t(\theta_t,b,D),\widetilde\sigma_t^2}(\theta_{t+1})\,,\\
   \mu_t(\theta_t,b,D)\coloneqq\theta_t-\eta_t\frac{1}{|B|}\sum_{i\in b} \text{Clip}(\nabla_\theta\mathcal{L}(\theta_t,D_i))\,.\notag
\end{gather}
Here  $N_{\mu,\widetilde{\sigma}^2}(\theta)$    is the Gaussian density with mean $\mu$ and covariance $\widetilde{\sigma}^2I$,  $\widetilde\sigma_t^2\coloneqq C^2\eta_{t}^2\sigma_t^2/|B|^2$, and the summation is over the set of allowed index minibatches, $b$, with  normalization constant  $Z_{|B|,|D|}=\binom{|D|}{|B|}$ for FS${}_{\text{woR}}$-RDP and $Z_{|B|,|D|}=|D|^{|B|}$ for FS${}_{\text{wR}}$-RDP. When additional clarity is needed we use the notation $p_t^{\text{\fswor}}$ and $p_t^{\text{\fswr}}$ to distinguish the transition probabilities for these respective cases.

\subsection{FS${}_{\text{woR}}$-RDP Upper Bounds}
The computations thus far mimic those in \citep{abadi2016deep,mironov2019r}, with the choice of subsampling method and adjacency relation playing no essential role. That changes in this section, where we specialize to  the case of subsampling without replacement.  First, in Section \ref{sec:add_remove} we consider the add/remove adjacency relation; the proof in this case contains many of the essential ideas of our method but is simpler from a  computational perspective. The more difficult case of replace-one adjacency will then be studied in Section \ref{sec:replace_one}
\subsubsection{Add/remove Adjacency}\label{sec:add_remove}
In Appendix \ref{app:Renyi_bounds_without_replacement} we derive the following R{\'e}nyi divergence bound for \fswor-subsampled DP-SGD under add/remove adjacency.
\begin{theorem}\label{thm:one_step_renyi}
Let $D\simeq_{a/r} D^\prime$  be adjacent datasets. With transition probabilities  defined as in \eqref{eq:DP_SGD_p_def} and letting $q=|B|/|D|$   we have
    \begin{equation}
        \sup_{\theta_t}D_\alpha(p_t^{\text{\fswor}}\!(\theta_{t+1}|\theta_t,D)\|p_t^{\text{\fswor}}\!({\theta_{t+1}}|\theta_t,D^\prime)) 
        \leq  D_\alpha( q N_{1,\sigma_t^2/4}+(1- q)N_{0,\sigma_t^2/4}\|N_{0,\sigma_t^2/4})\,.\label{eq:one_step_ub}
    \end{equation}
\end{theorem}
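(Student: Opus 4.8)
The plan is to fix the incoming state $\theta_t$ --- the bound I obtain will not depend on $\theta_t$, so the supremum is taken care of automatically --- and to bound the one-step R\'enyi divergence between the two Gaussian-mixture transition kernels $p_t(\cdot\,|\,\theta_t,D)$ and $p_t(\cdot\,|\,\theta_t,D')$ by isolating the worst-case mixture and then reducing to a one-dimensional computation. By symmetry of the construction I would treat in detail the sub-case $D=D'\cup\{x\}$, with the extra point $x$ placed at the last index; the sub-case $D'=D\cup\{x\}$ is handled identically together with one elementary comparison mentioned at the end. Writing $n=|D|$ and $q=|B|/n$, the starting point is the exact decomposition obtained by splitting the size-$|B|$ index subsets of $\{0,\dots,n-1\}$ according to whether or not they contain the special index: $p_t(\cdot\,|\,\theta_t,D)=(1-q)\,p_t(\cdot\,|\,\theta_t,D')+q\,R$, where $R$ is the mixture of Gaussians whose means range over the minibatches that include $x$.

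The heart of the proof is a coupling that realizes $R$ as a small shift of $p_t(\cdot\,|\,\theta_t,D')$. I would generate a uniform size-$|B|$ subset $b$ of $\{0,\dots,n-2\}$ by first drawing a uniform size-$(|B|-1)$ subset $b''$ and then adjoining a uniformly random further index $j\notin b''$; adding $x$ to $b''$ instead of $j$ produces the generic minibatch contributing to $R$. Conditioned on this shared randomness $W=(b'',j)$, the two Gaussian means differ only by $\tfrac{\eta_t}{|B|}\bigl(\mathrm{Clip}(\nabla_\theta\mathcal L(\theta_t,x))-\mathrm{Clip}(\nabla_\theta\mathcal L(\theta_t,D'_j))\bigr)$, whose norm is at most $2C\eta_t/|B|=2\widetilde\sigma_t/\sigma_t$ since each clipped gradient has norm at most $C$. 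Hence $p_t(\cdot\,|\,\theta_t,D)=\mathbb E_W\bigl[(1-q)N_{c(W),\widetilde\sigma_t^2 I}+q\,N_{c(W)+u(W),\widetilde\sigma_t^2 I}\bigr]$ while $p_t(\cdot\,|\,\theta_t,D')=\mathbb E_W\bigl[N_{c(W),\widetilde\sigma_t^2 I}\bigr]$, with $\|u(W)\|\le 2\widetilde\sigma_t/\sigma_t$. Using joint convexity of the map $(P,Q)\mapsto\int(dP/dQ)^\alpha dQ$ for $\alpha>1$ (it is the perspective of $t\mapsto t^\alpha$) along the common index $W$, followed by translation invariance, collapses the bound to $\sup_{\|u\|\le 2\widetilde\sigma_t/\sigma_t}D_\alpha\bigl((1-q)N_{0,\widetilde\sigma_t^2 I}+q\,N_{u,\widetilde\sigma_t^2 I}\,\big\|\,N_{0,\widetilde\sigma_t^2 I}\bigr)$.

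It remains to evaluate the supremum and rescale. By rotational invariance the divergence depends on $u$ only through $\|u\|$, and it is nondecreasing in $\|u\|$ (a standard monotonicity of the subsampled-Gaussian R\'enyi divergence, cf.\ \citep{mironov2019r}), so the supremum is attained at $\|u\|=2\widetilde\sigma_t/\sigma_t$; the coordinates orthogonal to $u$ then factor out, leaving a one-dimensional divergence, and the linear change of variables that rescales the variance from $\widetilde\sigma_t^2$ to $\sigma_t^2/4$ (and the mean to $1$) turns it into exactly $D_\alpha(q N_{1,\sigma_t^2/4}+(1-q)N_{0,\sigma_t^2/4}\,\|\,N_{0,\sigma_t^2/4})$, as claimed. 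In the other adjacency sub-case $D'=D\cup\{x\}$ the same coupling puts $p_t(\cdot\,|\,\theta_t,D)$ in the numerator and the mixture in the denominator, producing a bound of the form $D_\alpha\bigl(N_0\,\|\,(1-q')N_0+q'N_u\bigr)$ with $q'=|B|/(|D|+1)\le q$; I would close this using the elementary inequality $D_\alpha\bigl(P\,\|\,(1-q')P+q'Q\bigr)\le D_\alpha\bigl((1-q')P+q'Q\,\|\,P\bigr)$ together with monotonicity of the mixture divergence in the weight to pass from $q'$ up to $q$.

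I expect the main obstacle to be the coupling-plus-convexity step. One has to choose the coupling so that on the event where the two minibatches differ they differ by a single-element swap --- this is exactly what produces the effective sensitivity $2\widetilde\sigma_t/\sigma_t$, and hence the factor $1/4$ on the variance in the bound rather than the naive sensitivity $\widetilde\sigma_t/\sigma_t$ --- and then apply the component-wise joint-convexity inequality with precisely the coupling randomness as the shared mixing index, so that the worst case genuinely reduces to a two-component scalar Gaussian mixture with mixing weight $q$. The remaining ingredients --- monotonicity of the two-component mixture divergence in the mean gap and in the weight, and the final scalar rescaling --- are routine.
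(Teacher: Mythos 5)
Your proposal is correct and takes essentially the same route as the paper's proof in Appendix~\ref{app:Renyi_bounds_without_replacement}: your coupling $(b'',j)$ realizes the same joint law as the $(B',\widetilde B,J)$ construction of Lemma~\ref{lemma:B_dist}, your joint-convexity/perspective step is the paper's quasiconvexity bound, the single-element swap gives the sensitivity $r_t=2C\eta_t/|B|$ and hence the $\sigma_t^2/4$ variance, and the reversal inequality together with monotonicity in the mixing weight closes the second add/remove sub-case exactly as in the paper. The only superficial difference is presentation order (you write the $(1-q)\,p_t(\cdot\,|\,D')+qR$ decomposition first and then build the coupling, whereas the paper constructs the coupling up front).
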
 
The key step in the proof consists of the decomposition of the mechanism given in Lemma \ref{lemma:B_dist}.
We note that the r.h.s. of \eqref{eq:one_step_ub} differs by a factor of $1/4$ in the variances from the corresponding result for Poisson subsampling in \cite{mironov2019r}. This is due  to the inherent sensitivity difference between fixed-size and Poisson subsampling under add/remove adjacency; see  \eqref{eq:delta_mu_bound}~-~\eqref{eq:rt_def}.  To bound the r.h.s. of   \eqref{thm:one_step_renyi} we will employ a Taylor expansion computation with explicit remainder bound. As the r.h.s. of \eqref{eq:one_step_ub}  has the same mathematical form as the Poisson subsampling result of \cite{mironov2019r}, and therefore can be bounded by the same methods employed there, the Taylor expansion method is not strictly necessary in the add/remove case. However, we find it useful to illustrate the Taylor expansion method in this simpler case before proceeding to the significantly more complicated replace-one case in Section \ref{sec:replace_one}, where the method in \cite{mironov2019r} does not apply.

The R{\'e}nyi divergences on the r.h.s.~of \eqref{eq:one_step_ub} can be written
\begin{equation}
 D_\alpha( q N_{1,\sigma_t^2/4}+(1- q)N_{0,\sigma_t^2/4}\|N_{0,\sigma_t^2/4}) =\frac{1}{\alpha-1}\log[H_{\alpha,\sigma_t}(q)]\,,
\end{equation}
where  for any $\alpha>1$, $\sigma>0$ we define
\begin{equation}
H_{\alpha,\sigma}(q)\label{H_def}
\coloneqq\!\int \!\left(\frac{ qN_{ 1,\sigma^2/4}(\theta) +(1- q) N_{0,\sigma^2/4}(\theta)}{N_{0,\sigma^2/4}(\theta)}\right)^\alpha\!\!\! N_{0,\sigma^2/4}(\theta)d\theta\,.
\end{equation}
Eq.~\eqref{H_def} does not generally have a closed form expression. We upper bound it via Taylor expansion with integral formula for the remainder at order $m$ (c.f.~Thm.~1.2, \citep{stewart2022numerical}):
\begin{align}\label{eq:H_Taylor_main}
H_{\alpha,\sigma}(q)=\sum_{k=0}^{m-1} \frac{q^k}{k!}\frac{d^k}{dq^k}H_{\alpha,\sigma}(0)+R_{\alpha,\sigma,m}(q)\,,
\end{align}
where the remainder term is given by
\begin{align}\label{eq:remainder_term_main}
  \!\!\!\!\!R_{\alpha,\sigma,m}(q)=  
  q^{m} \int_0^1 \frac{(1-s)^{m-1}}{(m-1)!}\frac{d^{m}}{dq^{m}}H_{\alpha,\sigma}(sq)ds\,.
\end{align}
Note that we do not take $m\to\infty$ and so $H_{\alpha,\sigma}$ is not required to be analytic in order to make use of \eqref{eq:H_Taylor_main}. Also note that \eqref{eq:H_Taylor_main} is an equality, therefore if we can compute/upper-bound each of the terms then we will arrive at a computable non-asymptotic upper bound on $H_{\alpha,\sigma}(q)$, without needing to employ non-rigorous stopping criteria for an infinite series, thus avoiding privacy leakage.  The order $m$ is a parameter that can be freely chosen by the user.  
To implement \eqref{eq:H_Taylor_main} one must compute the derivatives $\frac{d^k}{dq^k}H_{\alpha,\sigma}(0)$ and bound the remainder \eqref{eq:remainder_term_main}.  We show how to do both of those steps for general $m$ in Appendix \ref{app:Taylor}; in the following theorem we summarize the results for $m=3$, which we find provides sufficient accuracy in our experiments.  

\begin{theorem}[Taylor Expansion Upper Bound]\label{thm:Taylor} 
For $q<1$ we have
  \begin{equation}\label{eq:H_taylor_m_3_main}
   H_{\alpha,\sigma}(q) =1+\frac{q^2}{2} \alpha(\alpha-1) M_{\sigma,2}
   +R_{\alpha,\sigma,3}(q)\,, 
  \end{equation}
  where the remainder has the bound
\begin{align}\label{eq:R_m_3_main}
R_{\alpha,\sigma,3}(q)\leq &q^{3} \alpha(\alpha-1)|\alpha-2|  \begin{cases}
\sum_{\ell=0}^{\lceil\alpha\rceil-3}q^\ell \frac{(\lceil\alpha\rceil-3)!}{(\lceil\alpha\rceil-3-\ell)!(3+\ell)!}   \widetilde{B}_{\sigma,\ell+3}+ \frac{1}{6}\widetilde{B}_{\sigma,3}  & \text{ if }\alpha-3>0\\[6pt]
 \frac{1}{6}(1-q)^{\alpha-3} \widetilde{B}_{\sigma,3}   &\text{ if }\alpha-3\leq 0
\end{cases}    \,\notag\\
\coloneqq&\widetilde{R}_{\alpha,\sigma,3}(q)\,,
\end{align}
 and the   $M$ and $\widetilde{B}$ parameters   are given by \eqref{eq:M_k_formula} and \eqref{eq:B_tilde_def} respectively.
\end{theorem}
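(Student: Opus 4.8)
The plan is to work from the reformulation of \eqref{H_def} obtained by writing $qN_{1,\sigma^2/4}+(1-q)N_{0,\sigma^2/4}=\bigl(1+q(f-1)\bigr)N_{0,\sigma^2/4}$, where $f(\theta):=N_{1,\sigma^2/4}(\theta)/N_{0,\sigma^2/4}(\theta)$ is the equal‑variance Gaussian likelihood ratio, so that
\[
H_{\alpha,\sigma}(q)=\int\bigl(1+q(f(\theta)-1)\bigr)^{\alpha}\,N_{0,\sigma^2/4}(\theta)\,d\theta .
\]
Since $f\ge 0$ we have $1+q(f-1)=(1-q)+qf\ge 1-q>0$ for every $q\in[0,1)$, uniformly in $\theta$, while $1+q(f-1)$ grows only linearly in $f$; because $f$ is a ratio of Gaussian densities with a common variance, $\int f^{p}\,N_{0,\sigma^2/4}\,d\theta<\infty$ for every $p$ (these moments are computed in closed form and feed \eqref{eq:M_k_formula}--\eqref{eq:B_tilde_def}). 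This integrability justifies repeated differentiation under the integral sign, giving, for $q\in[0,1)$ and each $k$,
\[
\frac{d^{k}}{dq^{k}}H_{\alpha,\sigma}(q)=\alpha(\alpha-1)\cdots(\alpha-k+1)\int\bigl(1+q(f-1)\bigr)^{\alpha-k}(f-1)^{k}\,N_{0,\sigma^2/4}\,d\theta .
\]

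Evaluating at $q=0$ yields $\frac{d^{k}}{dq^{k}}H_{\alpha,\sigma}(0)=\alpha(\alpha-1)\cdots(\alpha-k+1)\,M_{\sigma,k}$ with $M_{\sigma,k}:=\int(f-1)^{k}N_{0,\sigma^2/4}\,d\theta$. Two special values collapse the first three coefficients: $M_{\sigma,0}=1$, and $M_{\sigma,1}=\int f\,N_{0,\sigma^2/4}\,d\theta-1=\int N_{1,\sigma^2/4}\,d\theta-1=0$. Hence the $k=0$ term of \eqref{eq:H_Taylor_main} is $1$, the $k=1$ term vanishes, and the $k=2$ term is $\tfrac{q^{2}}{2}\alpha(\alpha-1)M_{\sigma,2}$; substituting into \eqref{eq:H_Taylor_main}--\eqref{eq:remainder_term_main} with $m=3$ produces \eqref{eq:H_taylor_m_3_main} together with $R_{\alpha,\sigma,3}(q)=q^{3}\int_{0}^{1}\tfrac{(1-s)^{2}}{2}H_{\alpha,\sigma}'''(sq)\,ds$. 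Note that for $q<1$ one has $sq\le q<1$ for all $s\in[0,1]$, so $1+sq(f-1)\ge 1-q>0$ uniformly in $s$ and $\theta$; this is what keeps the remainder integral finite and is the reason for the hypothesis $q<1$.

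To bound the remainder, insert the $k=3$ derivative formula and pass absolute values inside, using $(f-1)^{3}\le|f-1|^{3}$ and $\alpha(\alpha-1)(\alpha-2)\le\alpha(\alpha-1)|\alpha-2|$:
\[
R_{\alpha,\sigma,3}(q)\le q^{3}\alpha(\alpha-1)|\alpha-2|\int_{0}^{1}\frac{(1-s)^{2}}{2}\int\bigl(1+sq(f-1)\bigr)^{\alpha-3}|f-1|^{3}\,N_{0,\sigma^2/4}\,d\theta\,ds .
\]
Now split on the sign of $\alpha-3$. If $\alpha-3\le 0$, then since $x\mapsto x^{\alpha-3}$ is nonincreasing and $1+sq(f-1)\ge 1-q$, we get $\bigl(1+sq(f-1)\bigr)^{\alpha-3}\le(1-q)^{\alpha-3}$; pulling this factor out, writing $\int|f-1|^{3}N_{0,\sigma^2/4}\,d\theta=\widetilde B_{\sigma,3}$ and $\int_{0}^{1}\tfrac{(1-s)^{2}}{2}\,ds=\tfrac16$, gives the second branch of \eqref{eq:R_m_3_main}. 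If $\alpha-3>0$, the key elementary step is the inequality $(1+y)^{\alpha-3}\le 1+(1+y)^{\lceil\alpha\rceil-3}$, valid for every $y>-1$ (for $y\ge 0$ because $0<\alpha-3\le\lceil\alpha\rceil-3$; for $-1<y<0$ because the left side is then at most $1$). Applying it with $y=sq(f-1)$, expanding $(1+sq(f-1))^{\lceil\alpha\rceil-3}$ by the binomial theorem, bounding $(f-1)^{\ell}|f-1|^{3}\le|f-1|^{\ell+3}$ termwise, and integrating in $\theta$ bounds the inner integral by $\widetilde B_{\sigma,3}+\sum_{\ell=0}^{\lceil\alpha\rceil-3}\binom{\lceil\alpha\rceil-3}{\ell}(sq)^{\ell}\widetilde B_{\sigma,\ell+3}$; the $s$-integral is then the Beta integral $\int_{0}^{1}\tfrac{(1-s)^{2}}{2}s^{\ell}\,ds=\tfrac{\ell!}{(\ell+3)!}$, and $\binom{\lceil\alpha\rceil-3}{\ell}\tfrac{\ell!}{(\ell+3)!}=\tfrac{(\lceil\alpha\rceil-3)!}{(\lceil\alpha\rceil-3-\ell)!(\ell+3)!}$, which reproduces the first branch of \eqref{eq:R_m_3_main}.

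I expect the main obstacle to be not any single clever trick but making every step genuinely non-asymptotic: pinning down explicit dominating functions that validate differentiation under the integral on a neighbourhood of $q=0$ and, for the remainder, uniformly over $s\in[0,1]$ at fixed $q<1$, and verifying finiteness and closed forms for all the $M_{\sigma,k}$ and $\widetilde B_{\sigma,k}$ via the Gaussian moment generating function (including the case split at $f=1$ for odd $k$). The only genuinely new analytic ingredient is $(1+y)^{\alpha-3}\le 1+(1+y)^{\lceil\alpha\rceil-3}$, which converts the non‑integer power into an integer‑degree polynomial in $f$ whose coefficients integrate to the computable quantities $\widetilde B_{\sigma,\ell+3}$; the rest is careful bookkeeping of the binomial and Beta‑function constants. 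The general-$m$ versions of these computations, carried out in Appendix~\ref{app:Taylor}, follow the same template with $\int_0^1\tfrac{(1-s)^{m-1}}{(m-1)!}s^\ell\,ds=\tfrac{\ell!}{(m+\ell)!}$ replacing the $m=3$ constants.
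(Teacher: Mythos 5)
Your proposal is correct and follows essentially the same route as the paper: Taylor's theorem with integral remainder, differentiation under the integral via dominated convergence, $M_{\sigma,1}=0$ to kill the linear term, and the same case split on $\alpha-3$ with the same two ingredients (the monotone bound $(1+sq(f-1))^{\alpha-3}\le(1-q)^{\alpha-3}$, and the elementary inequality $x^{\alpha-3}\le 1+x^{\lceil\alpha\rceil-3}$ for $x>0$ followed by the binomial theorem, the bound $(f-1)^\ell|f-1|^3\le|f-1|^{\ell+3}$, and the Beta integral). The only nitpick is notational: $\int|f-1|^3\,N_{0,\sigma^2/4}\,d\theta=B_{\sigma,3}\le\widetilde B_{\sigma,3}$ is an inequality (Cauchy--Schwarz for odd index), not the equality you wrote, though this does not affect the bound you obtain.
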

The reason for the complexity of the formulas \eqref{eq:H_taylor_m_3_main} - \eqref{eq:R_m_3_main} is the need to obtain a rigorous upper bound, and not simply an asymptotic result. Results for other choices of $m$ are given in Appendix   \ref{app:Taylor}. 

Combining Theorem \ref{thm:Taylor} with  equations \eqref{eq:Renyi_wc_chain_rule} - \eqref{H_def} we now arrive at a computable  $T$-step \fswor-RDP guarantee.
\begin{theorem}[$T$-step \fswor-RDP  Upper Bound: Add/Remove Adjacency]\label{thm:RDP} Assuming $q<1$, the mechanism $\mathcal{M}^{\fswormath}_{[0,T]}(D)$, defined in \eqref{eq:mechanism_def}, has $(\alpha,\epsilon_{[0,T]}^{\fswormath}(\alpha))$-RDP  under add/remove adjacency, where
\begin{align}\label{eq:RDP_final_m_3}
    &\epsilon_{[0,T]}^{\fswormath}(\alpha)\leq\sum_{t=0}^{T-1}\frac{1}{\alpha-1}\log\left[1+\frac{q^2}{2} \alpha(\alpha-1) M_{\sigma,2}+\widetilde{R}_{\alpha,\sigma,3}(q)\right]\,,
\end{align}
where $M$ and $\widetilde{R}$ are given by \eqref{eq:M_k_formula} and \eqref{eq:R_m_3_main} respectively.
\end{theorem}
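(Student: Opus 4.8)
The plan is to assemble the statement from the ingredients already established, with essentially no new analytic content. First I would apply the worst-case composition bound \eqref{eq:Renyi_wc_chain_rule}: for an arbitrary pair $D\simeq_{a/r}D'$ it reduces $D_\alpha(\mathcal{M}^{\fswormath}_{[0,T]}(D)\|\mathcal{M}^{\fswormath}_{[0,T]}(D'))$ to the sum over $t=0,\dots,T-1$ of $\sup_{\theta_t}D_\alpha(p_t^{\text{\fswor}}(\theta_{t+1}|\theta_t,D)\|p_t^{\text{\fswor}}(\theta_{t+1}|\theta_t,D'))$, which is precisely the quantity bounded in Theorem~\ref{thm:one_step_renyi}. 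Applying that theorem termwise replaces each summand by $D_\alpha(qN_{1,\sigma_t^2/4}+(1-q)N_{0,\sigma_t^2/4}\|N_{0,\sigma_t^2/4})$ with $q=|B|/|D|$, and the identity stated just before \eqref{H_def} rewrites this as $\tfrac{1}{\alpha-1}\log[H_{\alpha,\sigma_t}(q)]$. Crucially, the bound of Theorem~\ref{thm:one_step_renyi} is uniform over the adjacent pair, so the resulting inequality holds for every $D\simeq_{a/r}D'$ and is therefore a genuine $(\alpha,\epsilon)$-RDP guarantee, not merely a bound for one fixed pair.

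Next I would invoke Theorem~\ref{thm:Taylor}, which applies since $q<1$ by hypothesis, to write $H_{\alpha,\sigma_t}(q)=1+\tfrac{q^2}{2}\alpha(\alpha-1)M_{\sigma_t,2}+R_{\alpha,\sigma_t,3}(q)$ with $R_{\alpha,\sigma_t,3}(q)\le\widetilde R_{\alpha,\sigma_t,3}(q)$. Because $\alpha>1$, the map $x\mapsto\tfrac{1}{\alpha-1}\log x$ is strictly increasing on $(0,\infty)$ and the argument satisfies $H_{\alpha,\sigma_t}(q)\ge 1>0$, so I may substitute the upper bound $\widetilde R$ for $R$ inside each logarithm without changing the direction of the inequality. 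Summing the resulting per-step bounds over $t$ then yields \eqref{eq:RDP_final_m_3}. (Writing $\sigma$ in place of $\sigma_t$ in the statement corresponds to a common noise multiplier across steps; with step-dependent $\sigma_t$ one simply keeps the subscript inside the sum.)

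There is no genuinely hard step here, since all of the work has already been absorbed into Theorem~\ref{thm:one_step_renyi} (via the mechanism decomposition of Lemma~\ref{lemma:B_dist} and the sensitivity computation) and into Theorem~\ref{thm:Taylor} (the Taylor expansion with explicit, computable remainder control). The points that merely require care are: (i) checking that the hypothesis $q<1$ is exactly what Theorem~\ref{thm:Taylor} needs; (ii) confirming that $H_{\alpha,\sigma_t}(q)$ stays positive so the monotone substitution of $\widetilde R$ for $R$ inside $\log$ is legitimate; and (iii) observing that uniformity of the one-step bound over the adjacent pair is precisely what upgrades a divergence inequality into an RDP statement in the sense of the definition.
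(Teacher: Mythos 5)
Your proof is correct and follows exactly the paper's route: decompose via the worst-case chain rule \eqref{eq:Renyi_wc_chain_rule}, bound each step with Theorem~\ref{thm:one_step_renyi}, rewrite as $\tfrac{1}{\alpha-1}\log[H_{\alpha,\sigma_t}(q)]$, and then apply the $m=3$ Taylor bound from Theorem~\ref{thm:Taylor}. Your added remarks about uniformity over adjacent pairs, monotonicity of $\log$, and the $\sigma$ vs.\ $\sigma_t$ notation are all correct observations that the paper leaves implicit.
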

More generally, using the calculations in Appendix \ref{app:Taylor} one obtains RDP bounds for any choice of the number of terms, $m$. The result \eqref{eq:RDP_final_m_3} corresponds to $m=3$, which we find to be sufficiently accurate in practice; see Figure \ref{fig:FSwoR_ar}. 

\subsubsection{Replace-one Adjacency}\label{sec:replace_one}
Theorem \ref{thm:one_step_renyi} applies to add/remove adjacency, but our method of proof can also be used in the replace-one adjacency case, where it yields the following  FS${}_{\text{woR}}$-RDP upper  bounds. 
\begin{theorem}[FS${}_{\text{woR}}$-RDP Upper Bounds for Replace-one Adjacency]\label{thm:FS_woR_replace_one}
Let $D\simeq_{r\text{-}o}D^\prime$ be adjacent datasets.  Assuming $q\coloneqq|B|/|D|<1$, for any integer $m\geq 3$ we have
\begin{align}\label{eq:FS_one_step_final_main}
     &\sup_{\theta_t}D_\alpha(p_t^{\text{\fswor}}\!(\theta_{t+1}|\theta_t,D)\|p_t^{\text{\fswor}}\!({\theta_{t+1}}|\theta_t,D^\prime))\\
       \leq&\frac{1}{\alpha-1}\!\log\!\!\Bigg[1+q^2\alpha(\alpha-1)\left(e^{4/{\sigma_t}^2}-e^{2/{\sigma_t}^2}\right)+\!\sum_{k=3}^{m-1} \frac{q^k}{k!} \widetilde{F}_{\alpha,\sigma_t,k}+\widetilde{E}_{\alpha,\sigma_t,m}(q)\Bigg],\notag
\end{align}
where $\widetilde{F}_{\alpha,\sigma_t,k}$ is given by \eqref{eq:F_prime_k_bound} and $\widetilde{E}_{\alpha,\sigma_t,m}(q)$ by \eqref{eq:E_tilde_def}.
\end{theorem}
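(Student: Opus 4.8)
The plan is to follow the same Taylor-expansion-with-rigorous-remainder strategy used for the add/remove case (Theorems~\ref{thm:one_step_renyi} and~\ref{thm:Taylor}), but now with the transition probabilities written out under replace-one adjacency. First I would establish the replace-one analogue of Lemma~\ref{lemma:B_dist}: decompose the index minibatches $b$ according to whether they contain the differing index. Writing $D' $ as $D$ with the element at index $j_0$ replaced, the minibatches split into those that do not contain $j_0$ (on which $\mu_t(\theta_t,b,D)=\mu_t(\theta_t,b,D')$, contributing an ``unchanged'' Gaussian mixture component) and those that do contain $j_0$ (on which the two means differ by $(\eta_t/|B|)\bigl(\mathrm{Clip}(\nabla_\theta\mathcal{L}(\theta_t,D_{j_0}))-\mathrm{Clip}(\nabla_\theta\mathcal{L}(\theta_t,D'_{j_0}))\bigr)$, whose norm is at most $2C\eta_t/|B|$). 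This is the source of the sensitivity $2$ rather than $1$, which is why the factor $e^{4/\sigma_t^2}-e^{2/\sigma_t^2}$ appears (two shifted Gaussians at distance up to $2$, after rescaling by $\widetilde\sigma_t$) in place of the $M_{\sigma,2}$ term from the add/remove case. Conditioning on $\theta_t$ and using the data-processing / joint-convexity reduction as in the add/remove proof, I would reduce $\sup_{\theta_t}D_\alpha(p_t^{\fswormath}(\cdot|\theta_t,D)\,\|\,p_t^{\fswormath}(\cdot|\theta_t,D'))$ to $\frac{1}{\alpha-1}\log G_{\alpha,\sigma_t}(q)$ for a scalar integral $G_{\alpha,\sigma_t}(q)$ that is the replace-one counterpart of $H_{\alpha,\sigma}(q)$, built from a two-parameter Gaussian mixture whose mixture weights are hypergeometric in $q$ (reflecting sampling without replacement of a batch that may or may not contain $j_0$).

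Next I would Taylor-expand $G_{\alpha,\sigma_t}(q)$ in $q$ about $q=0$ to order $m$, using the exact integral-remainder formula \eqref{eq:remainder_term_main} applied to $G$ instead of $H$. The constant and linear terms vanish (as in the add/remove case: at $q=0$ the two mixtures coincide, and the first-order term cancels because the ``extra mass'' from adding versus removing the differing index balances), leaving the leading $q^2$ coefficient, which a direct computation identifies as $\alpha(\alpha-1)(e^{4/\sigma_t^2}-e^{2/\sigma_t^2})$ — this is the moment of the relevant likelihood ratio and is the exact analogue of $\tfrac{1}{2}\alpha(\alpha-1)M_{\sigma,2}$. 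The intermediate coefficients $\widetilde F_{\alpha,\sigma_t,k}$ for $3\le k\le m-1$ come from differentiating $G$ $k$ times at $0$; each such derivative is a finite sum of Gaussian integrals of powers of likelihood ratios, which can be evaluated (or upper bounded) in closed form via the moment generating function of the Gaussian, yielding the formula referenced as \eqref{eq:F_prime_k_bound}. The tail term $\widetilde E_{\alpha,\sigma_t,m}(q)$ is obtained by bounding \eqref{eq:remainder_term_main}: differentiate under the integral sign $m$ times, bound the resulting integrand uniformly for $s\in[0,1]$ using that $0\le sq\le q<1$, and integrate out $s$; the case split on $\alpha-m$ versus $0$ (as in \eqref{eq:R_m_3_main}) arises from whether $x\mapsto x^{\alpha-m}$ is increasing or decreasing. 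Finally, combining this one-step bound with the worst-case chain rule \eqref{eq:Renyi_wc_chain_rule} gives the stated $T$-step guarantee, though the theorem as stated is the one-step version.

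The main obstacle I expect is controlling the $m$-th derivative of $G_{\alpha,\sigma_t}$ at general (non-integer) $\alpha$ on the interval $[0,q)$, i.e.\ establishing \eqref{eq:E_tilde_def}. Unlike the add/remove integrand $H_{\alpha,\sigma}$, whose $q$-dependence enters only through a simple $q N_1 + (1-q)N_0$ mixture, the replace-one integrand $G_{\alpha,\sigma_t}(q)$ involves mixture weights that are ratios of binomial coefficients (hypergeometric in $q$), so differentiating $m$ times produces a more intricate combinatorial sum; one must bound each piece by a Gaussian moment integral and then control the combinatorics uniformly in $s$. The non-integer $\alpha$ case is delicate because $(\,\cdot\,)^{\alpha}$ is not a polynomial, so the chain of derivatives of $u\mapsto u^\alpha$ (and the associated $\alpha(\alpha-1)\cdots(\alpha-m+1)$ factors, with the $|\alpha-2|$-type absolute values) must be tracked carefully — this is precisely where the $\lceil\alpha\rceil$-dependent case splits in the add/remove bound \eqref{eq:R_m_3_main} originate, and the replace-one analogue needs the same care but with heavier bookkeeping. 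A secondary technical point is verifying that all differentiations under the integral sign are justified, which follows from dominated convergence once one exhibits an integrable dominating function — routine given the Gaussian tails, but it must be checked for each of the $m$ derivatives.
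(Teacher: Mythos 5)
Your high-level plan matches the paper: apply a decomposition of the minibatch distribution, reduce to a scalar Gaussian-mixture integral via quasiconvexity of the R\'enyi divergence, Taylor-expand in $q$ with an explicit integral remainder, and handle non-integer $\alpha$ via $\lceil\alpha\rceil$-dependent case splits. Two clarifications on the setup: the paper does not build a new decomposition lemma but reuses Lemma~\ref{lemma:B_dist} verbatim, now applying it to \emph{both} $D$ and $D'$ (possible because they have equal size), and after that reduction the mixture weights are simply Bernoulli $(q,1-q)$, not hypergeometric. The crucial structural change versus add/remove is that \emph{both} arguments of $D_\alpha$ become $q$-mixtures, $qQ_{\tilde b}+(1-q)P_{b'}$ versus $qQ'_{\tilde b}+(1-q)P_{b'}$, with the same $(b',\tilde b)$ on both sides after quasiconvexity, which is what produces the two-parameter integrand $F_\alpha(q)$ in \eqref{eq:F_alpha_def}.

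The genuine gap is in how you obtain the $q^2$ coefficient. You attribute $e^{4/\sigma_t^2}-e^{2/\sigma_t^2}$ to ``two shifted Gaussians at distance up to $2$,'' but that reasoning does not produce this constant and, if carried through naively, yields a bound roughly four times larger (recovering \cite{wang2019subsampled} rather than improving on it). The second derivative at $q=0$ is
\begin{align*}
 \tfrac{d^2}{dq^2}F_\alpha(0)=\alpha(\alpha-1)\Big(e^{\|\Delta\mu\|^2/\widetilde\sigma_t^2}+e^{\|\Delta\mu'\|^2/\widetilde\sigma_t^2}-2\,e^{\Delta\mu\cdot\Delta\mu'/\widetilde\sigma_t^2}\Big),
\end{align*}
and the tight bound $2\alpha(\alpha-1)(e^{4/\sigma_t^2}-e^{2/\sigma_t^2})$ requires using \emph{all three} constraints \eqref{eq:Delta_mu_bounds} together with the polarization identity $\Delta\mu\cdot\Delta\mu'=\tfrac12(\|\Delta\mu\|^2+\|\Delta\mu'\|^2-\|\Delta\mu-\Delta\mu'\|^2)$. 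You mention that $\|\Delta\mu-\Delta\mu'\|\le 2C\eta_t/|B|$, so the coupling constraint is available to you, but it never enters your bounding argument, and this is precisely the step that separates the paper's result from the prior bound (see Remark~\ref{remark:Wang_factor_4} and Appendix~\ref{app:Wang_et_al_adj_comp}). Similarly, your sketch of the $k\ge 3$ coefficients and the remainder as ``Gaussian integrals of powers of likelihood ratios'' glosses over the fact that the integrand now involves products of powers of two \emph{different} likelihood ratios, so one needs the adapted Lemma~\ref{lemma:Wang24} and the H\"older-based Lemma~\ref{lemma:prod_int_bound} (with the $\chi^\beta$ machinery) to produce $\widetilde F_{\alpha,\sigma_t,k}$ and $\widetilde E_{\alpha,\sigma_t,m}$ in computable form.
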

The derivation, found in Appendix \ref{app:RDP_replace_one}, follows many of the same steps as the add/remove-adjacency case from Theorem \ref{thm:one_step_renyi}, though  deriving  bounds on the terms in the Taylor expansion is significantly more involved and the resulting formulas are more complicated. The decomposition from Lemma \ref{lemma:B_dist} again constitutes a key step in the proof.  For comparison purposes, in Appendix \ref {app:Poisson_ro} we derive the analogous result for Poisson-subsampling under replace-one adjacency; see Theorem \ref{thm:Poisson_replace_one}.

A corresponding $T$-step RDP bound, analogous to Theorem \ref{thm:RDP}, for replace-one adjacency can similarly be obtained  by combining  Theorem \ref{thm:FS_woR_replace_one} with Eq.~\eqref{eq:Renyi_wc_chain_rule}.
\begin{theorem}[$T$-step \fswor-RDP  Upper Bound: Replace-one Adjacency]\label{thm:RDP_T_step_re} Assuming $q<1$ and for any integer $m\geq 3$, the mechanism $\mathcal{M}^{\fswormath}_{[0,T]}(D)$, defined in \eqref{eq:mechanism_def}, has $(\alpha,\epsilon_{[0,T]}^{\fswormath}(\alpha))$-RDP  under replace-one adjacency, where
\begin{align}\label{eq:re_RDP_final_T_step}
    &\epsilon_{[0,T]}^{\fswormath}(\alpha)\leq\sum_{t=0}^{T-1}\frac{1}{\alpha-1}\log\Bigg[1+q^2\alpha(\alpha-1)\left(e^{4/{\sigma_t}^2}-e^{2/{\sigma_t}^2}\right)+\!\sum_{k=3}^{m-1} \frac{q^k}{k!} \widetilde{F}_{\alpha,\sigma_t,k}+\widetilde{E}_{\alpha,\sigma_t,m}(q)\Bigg]\,.
\end{align}
 Here $\widetilde{F}_{\alpha,\sigma_t,k}$ is given by \eqref{eq:F_prime_k_bound} and $\widetilde{E}_{\alpha,\sigma_t,m}(q)$ by \eqref{eq:E_tilde_def}.
\end{theorem}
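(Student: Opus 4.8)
The plan is to obtain the $T$-step guarantee as an immediate consequence of the single-step bound in Theorem~\ref{thm:FS_woR_replace_one} combined with the worst-case composition inequality~\eqref{eq:Renyi_wc_chain_rule}, exactly paralleling the derivation of Theorem~\ref{thm:RDP} from Theorem~\ref{thm:Taylor} in the add/remove case. First I would fix an arbitrary pair of replace-one adjacent datasets $D\simeq_{r\text{-}o}D'$. Since the initial state $\Theta_0^D$ is independent of $D$ and the iteration~\eqref{eq:theta_recursive_def} is Markovian with transition kernels $p_t$ given by~\eqref{eq:DP_SGD_p_def}, the bound~\eqref{eq:Renyi_wc_chain_rule}---the R{\'e}nyi-divergence analogue of the worst-case adaptive-composition argument in Theorem~2.1 of \citep{abadi2016deep}---yields
\begin{equation*}
D_\alpha(\mathcal{M}^{\fswormath}_{[0,T]}(D)\,\|\,\mathcal{M}^{\fswormath}_{[0,T]}(D'))
\leq \sum_{t=0}^{T-1}\sup_{\theta_t}D_\alpha\!\left(p_t^{\text{\fswor}}(\theta_{t+1}|\theta_t,D)\,\|\,p_t^{\text{\fswor}}(\theta_{t+1}|\theta_t,D')\right).
\end{equation*}

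Next, under the hypothesis $q=|B|/|D|<1$ and for any fixed integer $m\geq 3$, I would bound the $t$-th summand using Theorem~\ref{thm:FS_woR_replace_one}, which states that it is at most the bracketed logarithmic expression appearing on the right-hand side of~\eqref{eq:FS_one_step_final_main} evaluated with noise parameter $\sigma_t$. Summing this inequality over $t=0,\dots,T-1$ produces exactly the right-hand side of~\eqref{eq:re_RDP_final_T_step}. Because the adjacent pair $D\simeq_{r\text{-}o}D'$ was arbitrary and the resulting bound is independent of the pair, taking the supremum over all replace-one adjacent datasets shows that $D_\alpha(\mathcal{M}^{\fswormath}_{[0,T]}(D)\|\mathcal{M}^{\fswormath}_{[0,T]}(D'))$ is dominated by that same sum, which is precisely the statement that $\mathcal{M}^{\fswormath}_{[0,T]}$ is $(\alpha,\epsilon_{[0,T]}^{\fswormath}(\alpha))$-RDP with the displayed choice of $\epsilon_{[0,T]}^{\fswormath}(\alpha)$.

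There is essentially no new analytical work here---all of the difficulty of the replace-one case is contained in Theorem~\ref{thm:FS_woR_replace_one}---but the one point meriting care is the applicability of~\eqref{eq:Renyi_wc_chain_rule} in the replace-one setting: one should verify that replacing a single element of $D$ changes only the transition kernel $p_t(\cdot|\theta_t,\cdot)$ at each step (not the state space or the structure of the recursion), so that the chain-rule / worst-case-over-$\theta_t$ bound applies with the same adjacency relation entering every factor. This is immediate from the assumption that the minibatches and Gaussian noises are drawn independently across all steps, so the $T$-step bound follows.
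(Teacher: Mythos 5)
Your proposal is correct and follows precisely the route the paper takes: the paper derives Theorem~\ref{thm:RDP_T_step_re} by applying the worst-case composition bound~\eqref{eq:Renyi_wc_chain_rule} and then inserting the one-step replace-one bound from Theorem~\ref{thm:FS_woR_replace_one} for each term, exactly as you describe. Your added remark on verifying that the chain-rule argument respects the replace-one adjacency at every step is a sensible sanity check but does not depart from the paper's reasoning.
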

We emphasize that one strength of our approach is that it provides a unified method for deriving computable bounds in both the add/remove and replace-one adjacency cases. We also note that Theorem 11(b) of \citet{zhu2022optimal},  Theorem 6 of \citet{balle2018privacy}, and Theorem 5 in \citet{mironov2019r} combine to provide an alternative method for deriving the add/remove result of Theorem \ref{thm:one_step_renyi} but not the replace-one result of Theorem \ref{thm:FS_woR_replace_one}.   Unlike in the corresponding add/remove theorems, using $m>3$ in Theorems \ref{thm:FS_woR_replace_one} and \ref{thm:RDP_T_step_re}  is often required to obtain acceptable accuracy, though we generally find that $m=4$ is sufficient; see Figure \ref{fig:comp_with_Wang_et_al_ro_adjacency}.

\subsection{\fswr-RDP Upper and Lower Bounds }\label{sec:main_thm2}
We similarly obtain RDP bounds for DP-SGD using fixed-size minibatches with replacement, i.e., \eqref{eq:theta_recursive_def} with minibatches $B^D_t$ that are iid uniformly random samples from $\{0,...,|D|-1\}^{|B|}$.  First we present upper bounds on the RDP of $\mathcal{M}^{\fswrmath}_{[0,T]}(D)$. The derivation follows a similar pattern to that of our results for \fswor-subsampling, first using a probabilistic decomposition of the mechanism (see Lemma \ref{lemma:unif_w_replacement_rv_decomp}) and then expanding (see Appendix \ref{app:FSR_UB}).
\begin{theorem}[Fixed-size RDP with Replacement Upper Bound]\label{thm:FSR_RDP_UB}
Assuming $|D|>|B|$ and under add/remove adjacency, the mechanism $\mathcal{M}^{\fswrmath}_{[0,T]}(D)$ has $(\alpha,\epsilon^{\fswrmath}_{[0,T]}(\alpha))$-RDP with
\begin{gather} 
\epsilon^{\fswrmath}_{[0,T]}(\alpha)\leq \sum_{t=0}^{T-1}\frac{1}{\alpha-1}\log\left[\sum_{n=1}^{|B|}\widetilde{a}_n H_{\alpha,\sigma_t/n}(\widetilde{q})   \right]\,,   \\
\widetilde q\coloneqq 1-(1-|D|^{-1})^{|B|},\quad \widetilde{a}_n\coloneqq\widetilde{q}^{-1}  \binom{|B|}{n}|D|^{-n}(1-|D|^{-1})^{|B|-n}\,,\notag
\end{gather}
where $H_{\alpha,\sigma}$ is defined in \eqref{H_def}.
\end{theorem}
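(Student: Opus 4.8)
The plan is to first invoke the worst-case chain rule \eqref{eq:Renyi_wc_chain_rule}, which reduces the claim to the one-step estimate $\sup_{\theta_t} D_\alpha\big(p_t^{\fswrmath}(\theta_{t+1}|\theta_t,D)\,\|\,p_t^{\fswrmath}(\theta_{t+1}|\theta_t,D')\big)\le \frac{1}{\alpha-1}\log\!\big[\sum_{n=1}^{|B|}\widetilde{a}_n\, H_{\alpha,\sigma_t/n}(\widetilde q)\big]$ for every step $t$ and every add/remove pair $D,D'$, after which one sums over $t$. The first substantive step is the with-replacement decomposition of Lemma \ref{lemma:unif_w_replacement_rv_decomp}: taking $D$ to be the larger dataset, $D=D'\cup\{x_*\}$, condition on the number $n$ of the $|B|$ with-replacement draws that hit the index of $x_*$; this $n$ is $\mathrm{Binomial}(|B|,|D|^{-1})$, is nonzero with probability exactly $\widetilde q$, and conditioned on $n\ge1$ has law $\widetilde{a}_n$. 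Letting $\nu_{t,n}$ denote the transition density conditioned on exactly $n$ such hits — a Gaussian mixture with covariance $\widetilde\sigma_t^2 I$ in the notation of \eqref{eq:DP_SGD_p_def}, in which $n$ of the gradient summands are pinned to $\text{Clip}(\nabla_\theta\mathcal{L}(\theta_t,x_*))$ — and observing that the $n=0$ component is exactly $p_t^{\fswrmath}(\cdot|\theta_t,D')$, Lemma \ref{lemma:unif_w_replacement_rv_decomp} yields $p_t^{\fswrmath}(\cdot|\theta_t,D)=(1-\widetilde q)\,p_t^{\fswrmath}(\cdot|\theta_t,D')+\widetilde q\sum_{n=1}^{|B|}\widetilde{a}_n\,\nu_{t,n}$.

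With $P\coloneqq p_t^{\fswrmath}(\cdot|\theta_t,D')$, the ``remove'' direction $D_\alpha(p_t^{\fswrmath}(\cdot|\theta_t,D)\|P)$ is the clean case, since $x_*$ enters only through the first argument. Because $p_t^{\fswrmath}(\cdot|\theta_t,D)/P=\sum_n\widetilde{a}_n[(1-\widetilde q)+\widetilde q\,\nu_{t,n}/P]$, $\sum_n\widetilde{a}_n=1$, and $x\mapsto x^\alpha$ is convex for $\alpha>1$, Jensen's inequality gives
\[
e^{(\alpha-1)D_\alpha(p_t^{\fswrmath}(\cdot|\theta_t,D)\|P)}=\int\Big(\sum_{n=1}^{|B|}\widetilde{a}_n\big[(1-\widetilde q)+\widetilde q\,\tfrac{\nu_{t,n}}{P}\big]\Big)^{\!\alpha}P\,d\theta_{t+1}\;\le\;\sum_{n=1}^{|B|}\widetilde{a}_n\, e^{(\alpha-1)D_\alpha((1-\widetilde q)P+\widetilde q\,\nu_{t,n}\|P)}.
\]
For each fixed $n$ I would couple $P$ and $\nu_{t,n}$ by reusing the same iid uniform draws for the $|B|-n$ ``free'' coordinates, so that both become expectations, over this shared randomness, of Gaussians $N_{\mu,\widetilde\sigma_t^2}$ and $N_{\mu_n,\widetilde\sigma_t^2}$ with $\|\mu-\mu_n\|\le 2nC\eta_t/|B|$ by the $\ell^2$-norm clipping bound. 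Quasi-convexity of the Rényi divergence passes the shared expectation to its worst realization; translation/rotation invariance collapses each term to a one-dimensional Gaussian-mixture divergence with mean gap $\|\mu-\mu_n\|$; and monotonicity of that divergence in the mean gap, combined with rescaling by $2nC\eta_t/|B|$ (which, since $\widetilde\sigma_t^2=C^2\eta_t^2\sigma_t^2/|B|^2$, sends mean gap $2nC\eta_t/|B|$ and variance $\widetilde\sigma_t^2$ to mean gap $1$ and variance $\sigma_t^2/(4n^2)$), yields $D_\alpha((1-\widetilde q)P+\widetilde q\,\nu_{t,n}\|P)\le\frac{1}{\alpha-1}\log H_{\alpha,\sigma_t/n}(\widetilde q)$ with $H$ as in \eqref{H_def}. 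Substituting into the displayed inequality and invoking \eqref{eq:Renyi_wc_chain_rule} proves the bound in this direction.

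It remains to handle the reversed adjacency — when the mechanism's input is the \emph{smaller} of the pair, so $x_*$ now contributes to the second argument of the divergence. Jensen no longer applies directly; instead one expands $\big((1-\widetilde q)P+\widetilde q\sum_n\widetilde{a}_n\nu_{t,n}\big)^{1-\alpha}$ with a rigorous remainder, in the spirit of the Taylor-with-remainder estimates of Theorem \ref{thm:Taylor} and Section \ref{sec:replace_one}, which is the content of Appendix \ref{app:FSR_UB}; this produces a bound with $\widetilde q$ replaced by $1-(1-(|D|+1)^{-1})^{|B|}\le\widetilde q$, and a monotonicity argument in $|D|$ (using that $\sum_n\widetilde{a}_n H_{\alpha,\sigma_t/n}(\widetilde q)$ is decreasing in $|D|$, since $q\mapsto H_{\alpha,\sigma}(q)$ is increasing and smaller $|D|$ shifts the $\widetilde{a}_n$ mass toward larger $n$) then shows the stated bound, written via $|D|$, dominates this case as well. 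I expect the crux to be the \emph{uniform-in-$n$} control of the collision count: in the clean direction, isolating the sharp worst-case mean shift $2nC\eta_t/|B|$ through the coupling — the factor $n$ is genuinely absent from the without-replacement analysis and arises because a single added point can be drawn up to $|B|$ times, which is precisely what forces the rescaling $\sigma_t\mapsto\sigma_t/n$ and the sum over $n$ — and, in the reversed direction, carrying that same $n$-dependence through the more delicate expansion so that the resulting $n$-sum of remainder terms stays controlled by the $\widetilde{a}_n$ weights.
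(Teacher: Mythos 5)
Your treatment of the remove direction is essentially correct and parallels the paper's Appendix~\ref{app:FSR_UB}: both arguments rest on Lemma~\ref{lemma:unif_w_replacement_rv_decomp}, the reorganization of the binomial mixture via $\widetilde q$ and $\widetilde a_n$, Jensen's inequality (convexity of $y\mapsto y^\alpha$) to pull the sum over $n$ outside, and a quasiconvexity/worst-case reduction followed by the mean bound $\|\Delta\mu_n\|\le n r_t$ and rescaling to land on $H_{\alpha,\sigma_t/n}(\widetilde q)$. You apply Jensen over $n$ before the worst-case reduction, whereas the paper pulls the $\max_{b',\pi}$ out first (Eq.~\eqref{eq:binomial_mixture}) and then reorganizes the resulting $n$-mixture; the two orderings produce the same bound, and the paper's order makes the per-$n$ divergences unconditionally Gaussian-mixture-vs-Gaussian, which spares you the coupling argument you sketch.

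The gap is in your treatment of the add direction. You assert that ``Jensen no longer applies directly'' and propose a Taylor-with-remainder expansion of $\big((1-\widetilde q)P+\widetilde q\sum_n\widetilde a_n\nu_{t,n}\big)^{1-\alpha}$, claiming this is what Appendix~\ref{app:FSR_UB} does. Neither claim is correct. For $\alpha>1$ the map $y\mapsto y^{1-\alpha}$ is convex on $(0,\infty)$, so Jensen applies to the mixture in the \emph{second} slot exactly as in the remove direction: after the quasiconvexity reduction, the paper writes
\begin{equation*}
\int N_{\mu}^\alpha\Big(\sum_{n\ge 0}\widetilde a'_n\big(\widetilde q N_{\mu_n}+(1-\widetilde q)N_{\mu}\big)\Big)^{1-\alpha}
\le \sum_{n\ge 0}\widetilde a'_n\int N_{\mu}^\alpha\big(\widetilde q N_{\mu_n}+(1-\widetilde q)N_{\mu}\big)^{1-\alpha}\,,
\end{equation*}
and then closes each $n$-term by Theorem~5 of \cite{mironov2019r}, which gives $D_\alpha\big(N_\mu\|\widetilde q N_{\mu_n}+(1-\widetilde q)N_\mu\big)\le D_\alpha\big(\widetilde q N_{\mu_n}+(1-\widetilde q)N_\mu\|N_\mu\big)$, bringing you right back to the remove-direction bound $H_{\alpha,\sigma_t/n}(\widetilde q)$. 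There is no Taylor expansion anywhere in this step; that technique is reserved for bounding $H_{\alpha,\sigma}$ itself (Theorem~\ref{thm:Taylor}), which is shared by both directions. Your intuition for the final monotonicity step (that shrinking $|D|$ shifts binomial mass toward larger $n$, worsening the bound) is correct, but the paper makes this precise via the explicit monotonicity of $a_0$ (increasing) and $a_n$ for $n\ge1$ (decreasing) in $|D|$, combined with $H_{\alpha,\sigma_t/n}(\widetilde q)\ge 1$; if you wanted to fill in your sketch you would need that observation rather than a fresh remainder expansion.
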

The functions $H_{\alpha,\sigma}$ were previously bounded above in Theorem \ref{thm:Taylor} (for order $m=3$), and more generally in Appendix \ref{app:Taylor} (for general $m$).  It is straightforward to show that $\widetilde{q}=O(q)$ and $\widetilde{a}_n=O(q^{n-1})$. Therefore, recalling the asymptotics \eqref{eq:H_taylor_m_3_main} of $H_{\alpha,\sigma}$,  the behavior of $\epsilon^{\fswrmath}_{[0,T]}(\alpha)$ and $\epsilon_{[0,T]}^{\fswormath}(\alpha)$ is the same at leading order in $q$.  We also present the following novel lower bounds on the R{\'e}nyi divergence for one step of FS${}_{\text{wR}}$ DP-SGD; see Appendix \ref{app:FSR_lower_bound} for the derivation.   
\begin{theorem}[Fixed-size RDP with Replacement Lower Bound]\label{thm:FSR_RDP_LB}
Let the mechanism $\mathcal{M}_t^{\fswrmath}(D,\theta_t)$ denote the $t$'th step of FS${}_{\text{wR}}$ DP-SGD \eqref{eq:theta_recursive_def} when using the training dataset $D$ and starting from the NN parameters $\theta_t$,  i.e., the transition probabilities \eqref{eq:DP_SGD_p_def} where the  average is over minibatches $b$ that are elements of $\{0,...,|D|-1\}^{|B|}$. In this result we consider the choice of $|B|$ to be fixed but will allow $D$ and $\theta_t$ to vary.  Suppose there exists $\theta_t$, $d$, and $d^\prime$  such that
\begin{equation*}
    \text{Clip}(\nabla_\theta\mathcal{L}(\theta_t,d))=-\text{Clip}(\nabla_\theta\mathcal{L}(\theta_t,d^\prime))\; \text{and}\;
    \|\text{Clip}(\nabla_\theta\mathcal{L}(\theta_t,d))\|=\|\text{Clip}(\nabla_\theta\mathcal{L}(\theta_t,d^\prime))\|=C\,, 
\end{equation*}
i.e., when using the NN parameters $\theta_t$, the clipped gradients at the samples $d$ and $d^\prime$ are anti-parallel and saturate the clipping threshold. Then for any integer $\alpha\geq 2$ and any choice of dataset size $N$  we have the following worst-case R{\'e}nyi divergence lower bound
\begin{equation}\label{eq:FSR_thm_lower_bounds}
    \sup_{\substack{(\theta_t,D,D^\prime): |D|=N,\\D^\prime \simeq_{a/r} D}}\!\!\!D_\alpha\!\left(\mathcal{M}^{\fswrmath}_{t}(D,\theta_t)\|\mathcal{M}^{\fswrmath}_{t}(D^\prime,\theta_t)\!\right) \!
    \geq \frac{1}{\alpha-1}\log\!\!\left[\sum_{n_1,...,n_\alpha=0}^{|B|}  \!\!\!\!\!\!\!\!a_{n_1}...a_{n_\alpha} e^{\frac{4}{\sigma_t^2}\sum_{i<j}n_in_j}\right],
\end{equation}
where $a_n\coloneqq \binom{|B|}{n}N^{-n}(1-N^{-1})^{|B|-n}$.
\end{theorem}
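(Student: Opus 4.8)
The plan is to construct a specific pair of adjacent datasets that forces the Rényi divergence to be as large as \eqref{eq:FSR_thm_lower_bounds}, then compute the divergence exactly for that pair. Concretely, I would take $D^\prime$ to be any dataset with $|D^\prime| = N-1$ all of whose elements, at the parameters $\theta_t$, have clipped gradient equal to $\mathrm{Clip}(\nabla_\theta\mathcal{L}(\theta_t,d^\prime))$ (or, more simply, use the structure already available: it suffices that the clipped gradients at the "extra" points behave coherently), and let $D = D^\prime \cup \{d\}$ where $d$ is the sample whose clipped gradient is anti-parallel to and saturates against that of $d^\prime$. Since the $\mu_t$ in \eqref{eq:DP_SGD_p_def} depends on the minibatch only through the sum of clipped gradients, and a with-replacement minibatch of size $|B|$ drawn from $D$ hits the special point $d$ some random number $n$ of times with $n\sim\mathrm{Binomial}(|B|,1/N)$, the mixture \eqref{eq:DP_SGD_p_def} for $\mathcal{M}^{\fswrmath}_t(D,\theta_t)$ becomes, after an affine change of variables that centers and rescales so the noise has variance $\sigma_t^2/4$ in the relevant coordinate, a Gaussian mixture $\sum_{n=0}^{|B|} a_n N_{2n,\sigma_t^2/4}$ (in the one relevant dimension), while $\mathcal{M}^{\fswrmath}_t(D^\prime,\theta_t)$ is the single Gaussian $N_{0,\sigma_t^2/4}$ after the same change of variables. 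This mirrors the decomposition of Lemma~\ref{lemma:unif_w_replacement_rv_decomp} but now used to produce an exact expression rather than an upper bound.

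With this reduction in hand, the second step is the explicit computation of $D_\alpha\!\left(\sum_n a_n N_{2n,\sigma_t^2/4}\,\big\|\,N_{0,\sigma_t^2/4}\right)$ for integer $\alpha\geq 2$. Writing out the $\alpha$-th moment integral, $\int \big(\sum_n a_n N_{2n,\sigma_t^2/4}(\theta)\big)^\alpha N_{0,\sigma_t^2/4}(\theta)^{1-\alpha}\,d\theta$, and expanding the $\alpha$-th power of the sum as a sum over $(n_1,\dots,n_\alpha)\in\{0,\dots,|B|\}^\alpha$ with coefficient $a_{n_1}\cdots a_{n_\alpha}$, each term is a Gaussian integral of the form $\int \prod_{i=1}^\alpha N_{2n_i,\sigma_t^2/4}(\theta)\cdot N_{0,\sigma_t^2/4}(\theta)^{1-\alpha}\,d\theta$. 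Completing the square (this is where integrality of $\alpha$ matters, so that $1-\alpha$ is a nonpositive integer and the product of Gaussian densities raised to integer powers is again proportional to a Gaussian that is genuinely integrable) yields exactly $\exp\!\big(\tfrac{4}{\sigma_t^2}\sum_{i<j} n_i n_j\big)$ — the cross-terms $n_i n_j$ survive while the diagonal $n_i^2$ terms cancel against the $N_{0,\sigma_t^2/4}^{1-\alpha}$ factor. Taking $\tfrac{1}{\alpha-1}\log$ of the resulting double sum gives precisely the right-hand side of \eqref{eq:FSR_thm_lower_bounds}, and since this is achieved by a valid adjacent pair it is a lower bound on the supremum.

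The main obstacle I anticipate is the bookkeeping in the Gaussian integral / completing-the-square step: one must carefully track how the $\alpha$ mean-$2n_i$ factors combine against the $(1-\alpha)$ copies of the reference density, verify that the resulting quadratic form is normalizable (which is exactly why we need $\alpha$ to be an integer $\geq 2$, not just real $>1$), and confirm that the normalization constants collapse to leave only $\exp(\tfrac{4}{\sigma_t^2}\sum_{i<j}n_i n_j)$ with no leftover $n_i$-dependent prefactor. A secondary technical point is justifying the affine reduction to one dimension and variance $\sigma_t^2/4$: one uses that the clipped gradients at $d$ and $d^\prime$ are anti-parallel of length $C$, so their difference has length $2C$, and that $\widetilde\sigma_t^2 = C^2\eta_t^2\sigma_t^2/|B|^2$ together with the $1/|B|$ prefactor on the gradient sum in \eqref{eq:DP_SGD_p_def} rescales the effective separation per extra copy of $d$ to $2$ and the effective noise variance to $\sigma_t^2/4$, matching the form of $H_{\alpha,\sigma}$ and of the add/remove one-step bound \eqref{eq:one_step_ub}. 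Everything else is routine algebra.
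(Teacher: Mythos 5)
Your construction and reduction mirror the paper's own proof: take $D^\prime$ to be $N-1$ copies of $d^\prime$ and $D=D^\prime\cup\{d\}$ with anti-parallel saturated gradients, note that the sampling-with-replacement mixture then collapses to a binomial Gaussian mixture in one dimension independent of $b^\prime,\pi$ (so the quasiconvexity step \eqref{eq:binomial_mixture} is an equality), and evaluate the $\alpha$-th power integral term-by-term via the Gaussian MGF. This is exactly what the paper does in Appendix~\ref{app:FSR_lower_bound}.

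However, there is a concrete arithmetic error in your rescaling. You claim the reduced mixture is $\sum_n a_n N_{2n,\sigma_t^2/4}$, i.e., mean shift $2$ per copy of $d$ with noise variance $\sigma_t^2/4$. The per-copy mean shift is $r_t=2C\eta_t/|B|$ and the noise standard deviation is $\widetilde\sigma_t=C\eta_t\sigma_t/|B|$, so $\widetilde\sigma_t/r_t=\sigma_t/2$ (Eq.~\eqref{eq:sigma_tilde_over_r}): dividing through by $r_t$ gives mean $n$ and variance $\sigma_t^2/4$, i.e., $N_{n,\sigma_t^2/4}$, which is also what is needed to match $H_{\alpha,\sigma}$ in \eqref{H_def} (mean $1$, not mean $2$). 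If you actually carry out the completing-the-square computation with means $2n_i$ and variance $\sigma_t^2/4$ you obtain $\exp\bigl(\tfrac{16}{\sigma_t^2}\sum_{i<j}n_in_j\bigr)$, which overstates the claimed lower bound by a factor of $4$ in the exponent. You have conflated two consistent scalings (mean $n$, variance $\sigma_t^2/4$; or mean $2n$, variance $\sigma_t^2$) into an inconsistent one. The fix is trivial, and since you flag that the prefactors need verification you would likely catch it, but as written the stated reduction would fail to produce \eqref{eq:FSR_thm_lower_bounds}.
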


\begin{wrapfigure}[13]{r}{0.4\textwidth}
\vspace{-12mm}
\begin{center}
\centerline{\hspace{2mm}\includegraphics[width=0.45\textwidth]{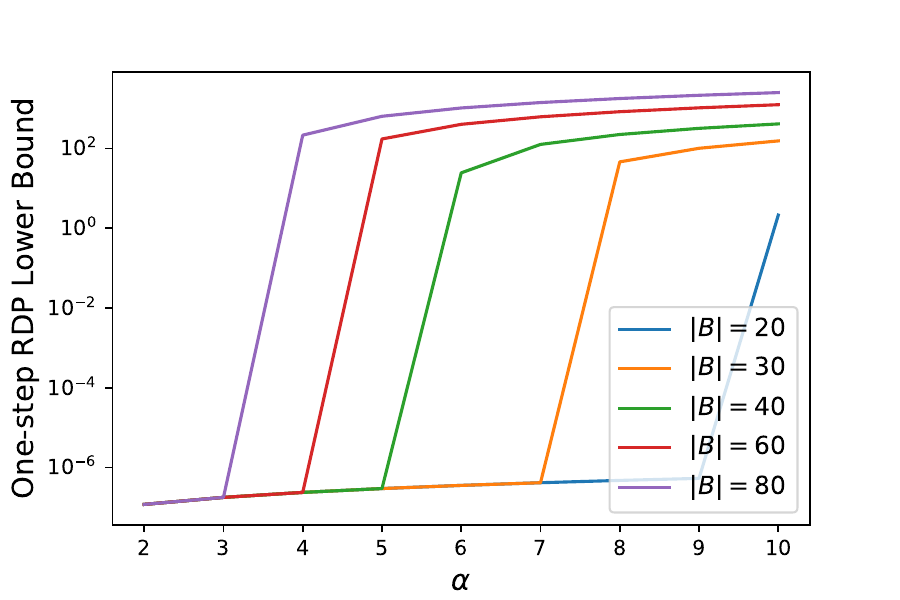}}
\caption{\small FS${}_{\text{wR}}$-RDP lower bounds from Theorem \ref{thm:FSR_RDP_LB} as a function of $\alpha$, with $\sigma=6$ and $q=0.001$. }
\label{fig:FSR_LB_vs_alpha}
\end{center}
\vskip -0.2in
\end{wrapfigure}
The lower bound \eqref{eq:FSR_thm_lower_bounds} provides a complement to one step of the upper bound from Theorem \ref{thm:FSR_RDP_UB}.
%
We demonstrate the behavior of the lower bound \eqref{eq:FSR_thm_lower_bounds} in  Figure \ref{fig:FSR_LB_vs_alpha}, where we employed the computational method discussed in Appendix \ref{app:FSR_recursive_lower_bound}.  Specifically, we show the \fswr-RDP lower bound as a function of $\alpha$ for fixed $q$ and several choices of $|B|$.   For small $\alpha$, the lower bounds for different $|B|$'s are approximately equal. This holds up until a critical $|B|$-dependent threshold where there is a ``phase transition" to a regime with non-trivial $|B|$ dependence (even for fixed $q$) wherein the RDP bound increases quickly. This is in contrast to  \fswor-RDP which depends on $|B|$ only through the ratio $q$.
The critical $\alpha$ is smaller for larger $|B|$, which implies worse privacy guarantees for larger $|B|$, with all else being equal.   Intuitively, a larger minibatch size provides a greater chance for the element that distinguished $D$ from $D^\prime$ to be selected multiple times in a single minibatch.  If it is selected too many times then it can overwhelm the noise and become ``noticeable",  thus  substantially degrading privacy; this is the key property that distinguishes \fswr from \fswor. In practice, this behavior implies that  \fswr-RDP requires much smaller  minibatch sizes than \fswor-RDP in order to avoid this critical threshold and maintain privacy.

\subsection{Comparison  with \citep{wang2019subsampled}}\label{sec:Wang_et_al_comp}

We compare our fixed-size RDP upper bounds with the upper and lower bounds obtained by applying the general-purpose method from  \cite{wang2019subsampled} to DP-SGD with Gaussian noise. First we compare the results asymptotically  to leading order in $q$ and $1/\sigma_t^2$, which is the domain relevant to applications, and then we will compare them numerically using the full non-asymptotic bounds. Unless stated otherwise, from here on the phrase leading order implicitly refers to the parameters $q$ and $1/\sigma_t^2$.

The method from \cite{wang2019subsampled} applied to DP-SGD (App.~\ref{app:Wang_et_al_upper_bound}) gives the one-step R{\'e}nyi  bounds
\begin{align}\label{eq:Wang_et_al_leading_order}
    \epsilon^\prime_{\text{Wang},t}(\alpha)\leq
    &2q^2\alpha(e^{4/\sigma_t^2}-1) +O(q^3)
    =8q^2\alpha/\sigma_t^2 +O(q^2/\sigma_t^4) +O(q^3)\,.
\end{align}
In contrast, to second order in $q$,  one step of our FS-RDP results under add/remove adjacency from Theorem \ref{thm:RDP} or Theorem \ref{thm:FSR_RDP_UB}  gives 
\begin{align}\label{eq:eps_leading_order_FS}
\epsilon^\prime_t(\alpha)\leq &\frac{q^2}{2}\alpha (e^{4/\sigma_t^2}-1)+O(q^3)=2q^2\alpha/\sigma_t^2+O(q^2/\sigma_t^4)+O(q^3)\,,\notag
\end{align}
\begin{wrapfigure}{r}{0.4\textwidth}
\begin{center}
\centerline{\hspace{2mm}\includegraphics[width=0.45\textwidth]{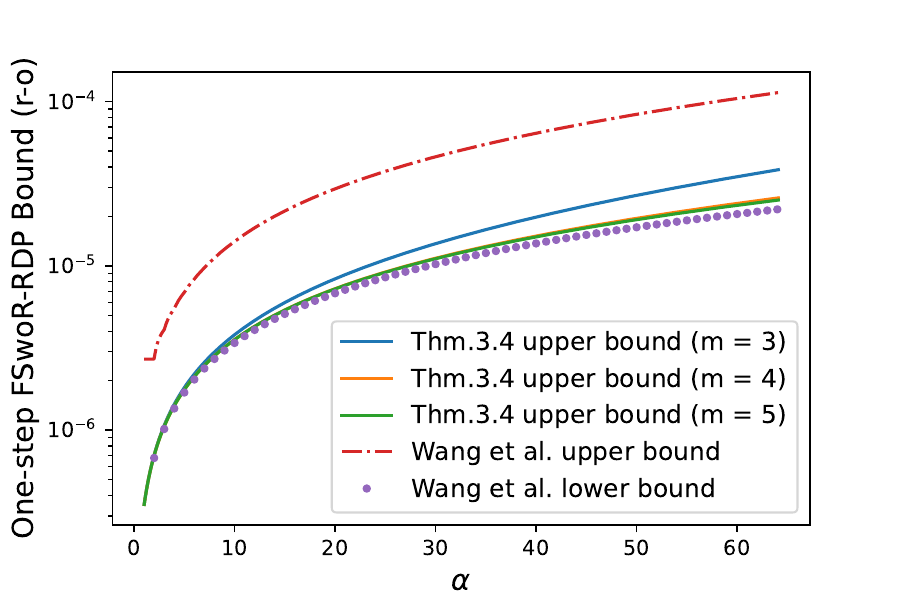}}
\vspace*{-3mm}\caption{\small Comparison of \fswor-RDP bounds under replace-one adjacency from Theorem \ref{thm:FS_woR_replace_one} for various choices of $m$ with the upper and lower bounds from \cite{wang2019subsampled}. We used $\sigma_t=6$, $|B|=120$, and $|D|=50,000$.}
\label{fig:comp_with_Wang_et_al_ro_adjacency}
\end{center}
\end{wrapfigure}
while our FS${}_{\text{woR}}$-RDP bound under replace-one adjacency  from Theorem \ref{thm:FS_woR_replace_one} gives
  \begin{align}
\epsilon_t^\prime(\alpha)\leq&q^2\alpha\left(e^{4/\sigma_t^2}-e^{2/\sigma_t^2}\right)+O(q^3)\\
=&2q^2\alpha/\sigma_t^2+O(q^2/\sigma_t^4)+O(q^3)\,.\notag
\end{align}

Therefore, to leading order, our methods  all have the same behavior and give tighter RDP bounds by a factor of $4$ than that of \cite{wang2019subsampled}. We emphasize that the adjacency relation used by \cite{wang2019subsampled} is replace-one and  therefore, strictly speaking, should be compared only with our Theorem \ref{thm:FS_woR_replace_one}, which also uses replace-one adjacency. Figure \ref{fig:comp_with_Wang_et_al_ro_adjacency} shows a non-asymptotic comparison between these RDP bounds, with ours being tighter significantly tighter than that of \cite{wang2019subsampled}. In particular, our bound from Theorem \ref{thm:FS_woR_replace_one} with $m=4$ remains close to the theoretical lower bound from \cite{wang2019subsampled} over the entire range of $\alpha$'s pictured, which is a sufficiently large range for practical application to DP, e.g., larger than the default range used in \cite{Opacus}.
\begin{wrapfigure}[18]{r}{0.4\textwidth}
\vspace{-15mm}
\begin{center}
\centerline{\includegraphics[width=0.4\textwidth]{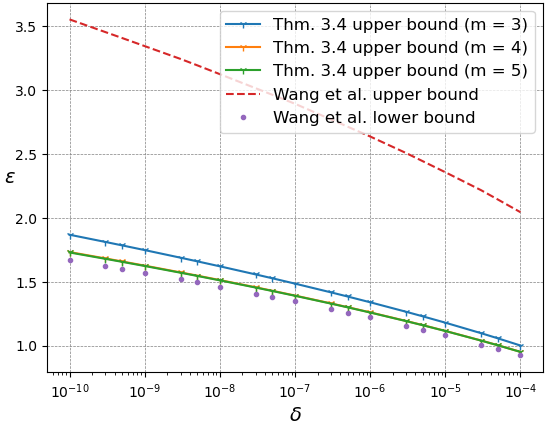}}
\vspace{-2mm}\caption{\small \fswor~ $(\epsilon,\delta)$-DP guarantees under replace-one adjacency; comparison of bounds obtained using Theorem \ref{thm:FS_woR_replace_one} for various choices of $m$ with those obtained from the upper and  lower bounds from \cite{wang2019subsampled}. Following \cite{abadi2016deep}, we used $\sigma_t=6$, $|B|=120$, $|D|=50,000$, and 250 training epochs. }
\label{fig:guarantee}
\end{center}
\end{wrapfigure}

Following the setting in \citep{abadi2016deep}, we provide corresponding $(\epsilon,\delta)$-DP  guarantees in Figure \ref{fig:guarantee} shows corresponding $\epsilon$ for $\delta \in \{1e-4, 1e-5, 1e-6, 1e-7, 1e-8, 1e-9, 1e-10\}$ after 250 training epochs in DP-SGD. To translate RDP bounds to $(\epsilon, \delta)\hyphen$DP, we used Theorem 21 in \citep{rdp-to-dp}, which is also used in the Opacus library. As shown in Figure \ref{fig:guarantee}, our FS${}_{\text{woR}}$  bounds with $m\in\{3,4,5\}$ (solid lines) are  close to the lower-bound provided in \citep{wang2019subsampled} (circles) over a range of $\delta$'s that depend on the value of $m$.  These guarantees are significantly tighter than the upper bound from \cite{wang2019subsampled} (dashed line). In our experiments, we observe that $m=4$ and $m=5$ yields essentially the same results and we find that $m=4$ suffices to obtain good results in practice while outperforming \citep{wang2019subsampled}.

\section{Comparison Between Fixed-size and Poisson Subsampling}
In Sec.~\ref{sec:Wang_et_al_comp} we demonstrated that our  R{\'e}nyi-DP-SGD bounds with fixed-size subsampling in Thm.~\ref{thm:RDP} are tighter  than the general-purpose fixed-size subsampling method from \cite{wang2019subsampled}. However, the most commonly used implementations of DP-SGD rely on Poisson subsampling, not fixed-size, and there the comparison is more nuanced.  In this section we compare Poisson and fixed-size subsampling and point out  advantages and disadvantages of each.

\subsection{Gradient Variance Comparison}\label{sec:var}
First  we compare the variance of fixed-size and Poisson subsampling.  Let $a_i\in\mathbb{R}$, $i=1,...,|D|$ (for application to DP-SGD, one can take $a_i=\nabla_\theta \mathcal{L}(d_i)\cdot v$, the gradient of the loss at the $i$'th training sample $d_i$ in the direction $v$) and let $|B|\in\{1,...,|D|\}$.  Let $J_i$, $i=1,...,|D|$ be iid ${\rm Bernoulli} (|B|/|D|)$ random variables, $B$ be a random variable that uniformly selects a subset of size $|B|$ from $\{2,...,|D|\}$, and $R_j$, $j=1,...,|B|$ be iid uniformly distributed over $\{1,...,|D|\}$.  We will compare the variances of Poisson-subsampling, fixed-size subsampling with replacement, and fixed-size subsampling without replacement,
\begin{equation}
    Z_P\coloneqq\frac{1}{|B|}\sum_{i=1}^{|D|} a_i J_i\,,\,\,\,\,
        Z_R\coloneqq\ \frac{1}{|B|}\sum_{j=1}^{|B|} a_{R_j}\,,\,\,\,\,
    Z_B\coloneqq \frac{1}{|B|}\sum_{i\in B} a_i\,,
\end{equation}
respectively. It is straightforward to show that all three have the same mean: ${E}[Z_P]=E[Z_R]=E[Z_B]=\frac{1}{|D|}\sum_{i=1}^{|D|} a_i\coloneqq \overline{a}\,,$ and, defining $\overline{a^2}\coloneqq \frac{1}{|D|}\sum_{i=1}^{|D|} a_i^2$, their variances are,


\begin{gather}
\Var[Z_P]=\left(\frac{1}{|B|}-\frac{1}{|D|}\right) \overline{a^2}\,,\quad \Var[Z_R]=\frac{1-1/|D|}{1-q}\Var[Z_B]\,,\notag\\
    \Var[Z_B] =\frac{|D|}{|D|-1}\left(\Var[Z_P]-\left(\frac{1}{|B|}-\frac{1}{|D|}\right)\overline{a}^2\right)\,.
\end{gather}
See Appendix \ref{app:var} for details. Note that fixed-size sampling with replacement has  larger variance than without replacement, though by a factor that is very close to $1$ when $q$ is small. Therefore, for practical purposes we can consider them equivalent from the perspective of the variance.  When $|B|<|D|$ the ratio of fixed-size  and Poisson variances is
\begin{align}
   \frac{\Var[Z_B]}{\Var[Z_P]}=\frac{1}{1-1/|D|}\left(1-\frac{\overline{a}^2}{\overline{a^2}}\right)\,.
\end{align}
The Cauchy-Schwarz inequality implies $0\leq \overline{a}^2/\overline{a^2}\leq 1$ and so   we see that $\Var[Z_B]<\Var[Z_P]$ when $\overline{a}\neq 0$ and $|D|$ is sufficiently large; in DP-SGD the condition $\overline{a}\neq 0$ corresponds to being away from a (local) minimizer or critical point of the loss.  This calculation  suggests that fixed-sized minibatches are better to use when one is away from the optimizer, as they lead to a (potentially substantial) reduction in variance, but they can become (slightly) worse than Poisson-sampled minibatches when one is close to a local minimizer (i.e., $\overline{a}\approx 0$). However,  for a typical training-set size $|D|\gg1$ we have $(1-1/|D|)^{-1}\approx 1$ and so the variance advantage of Poisson subsampling when $\overline{a}\approx0$ is very minor.  In addition, due to the effect of noise, DP-SGD cannot reach $\overline{a}=0$.  Therefore, from the perspective of the variance, we conclude that fixed-size subsampling (with or without replacement) is generally preferable over Poisson subsampling.    

\subsection{Privacy Comparison}\label{sec:privacy_comparison}
In terms of privacy guarantees, all else being equal,
when using replace-one adjacency we find that  DP-SGD with fixed-size subsampling yields the same privacy guarantees as Poisson subsampling  to leading order; specifically, when moving from add/remove to replace one-adjacency, our fixed-size subsampling DP bounds do not change at leading order but the Poisson subsampling DP bounds change by a factor of 2 to match the fixed-size results.  In contrast, when using add/remove adjacency we find that   Poisson subsampling has a natural advantage over fixed-size subsampling   by approximately a factor of $2$.   Here we outline the reason for this intuitively.

The derivation of R{\'e}nyi-DP bounds when using Poisson subsampling, (see \cite{abadi2016deep,mironov2019r}) shares many steps with our derivation for fixed-size subsampling without replacement in Appendix \ref{app:Renyi_bounds_without_replacement}.  In both cases, to leading order the bound on the R{\'e}nyi divergence is proportional to $\|\mu-\mu^\prime\|^2$, where $\mu$ and $\mu^\prime$ are the sum of the gradients under the adjacent datasets $D$ and $D^\prime$.  When using Poisson subsampling, under  add/remove adjacency, $\mu^\prime$ differs from $\mu$ by the addition or deletion of a single term in the sum with   probability $q$, while under replace-one adjacency, $\mu^\prime$ differs from $\mu$ by the replacement of one element with probability $q$.  Therefore in the former case, $\|\mu-\mu^\prime\|$ is the norm of a single clipped vector, hence it scales with the clipping threshold, $C$, while in the latter, $\|\mu-\mu^\prime\|$ is the norm of the difference between two clipped vector, hence it scales with $2C$,  as in the worst case the two vectors are anti-parallel. In fixed-size subsampling and under both adjacency relations, $\mu^\prime$ differs from $\mu$ by replacing one element with another (due to the fixed-size minibatch constraint) and therefore $\|\mu-\mu^\prime\|$  scales like $2C$. After squaring and at leading order, this leads to a factor of $4$ difference in the R{\'e}nyi divergences between Poisson and fixed-size cases under add/remove adjacency while under replace-one adjacency the Poisson and fixed-size results agree; see Appendix \ref{app:Poisson_ro} for a more precise analysis of the latter comparison.

Translating this to $(\epsilon,\delta)$-DP then leads to an approximate factor of $2$ difference in $\epsilon$ for the same $\delta$ under add/remove adjacency, as shown in Appendix \ref{app:DP_comp}, while under replace-one adjacency the different subsampling methods lead to the same DP guarantees at leading order. When the higher order terms are taken into account, Poisson subsampling regains a slight privacy advantage under replace-one adjacency; see Figures  \ref{fig:guarantees_cifar} and \ref{fig:comp_with_Poisson_ro_adjacency}.

\subsection{Comparison on CIFAR10}
\label{exp}
Our numerical comparisons in Section \ref{sec:Wang_et_al_comp} showed that FS${}_{\text{woR}}$-RDP yields significantly tighter guarantees than its  counterpart from \cite{wang2019subsampled}. It is also useful to compare  the empirical privacy guarantees of fixed-size versus the Poisson subsampling RDP  commonly used in DP implementations such as Opacus. To this end, we use the CIFAR10 dataset and a simple convolutional neural network (CNN) to compare the canonical Poisson subsampling RDP with FS-RDP. Our setup closely follows the procedure in \citep{abadi2016deep}. Note that the goal of this comparison is not to achieve the state-of-the-art privacy guarantees on CIFAR10 with any specific neural network. Instead we aim to compare the performance and guarantees with the alternative fixed-size and Poisson subsampling methods.

\begin{wrapfigure}[33]{r}{0.3\textwidth}
\begin{center}
  \begin{tabular}{c}
     \hspace{-5mm}\includegraphics[width=0.32\textwidth]{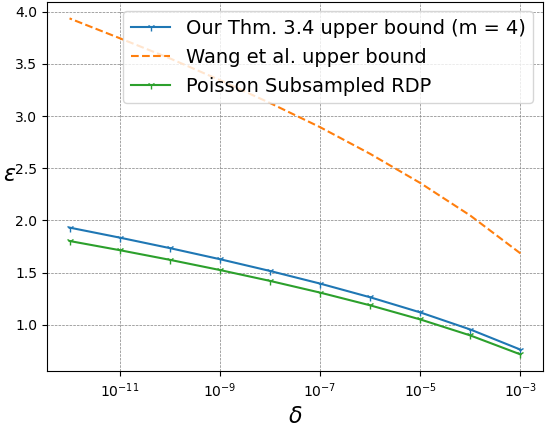}\\
     \hspace{-5mm}\includegraphics[width=0.3\textwidth]{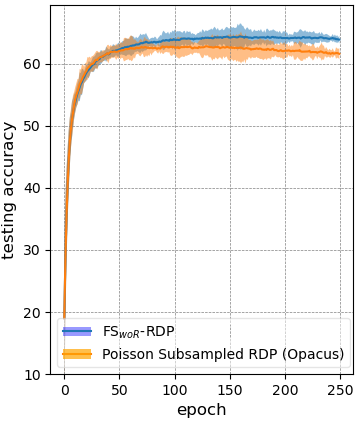}
  \end{tabular}
\caption{\small Comparing privacy guarantees of FS${}_{\text{woR}}$-RDP with Wang et al. and Poisson Subsampled RDP (\textbf{top}). Comparing FS${}_{\text{woR}}$-RDP performance against Poisson subsampled RDP (\textbf{bottom}). We used $\sigma_t=6$, $C=3$, $|B|=120$, $|D|=50,000$, and $lr = 1e-3$.}
\label{fig:guarantees_cifar}
\end{center}
\end{wrapfigure}

Following \cite{abadi2016deep}, we used a simple CNN network that is pre-trained on CIFAR100 dataset non-privately. Our CNN has six convolutional layers 32, 32, 64, 128, 128, and 256 square filters of size $3\times3$, in each layer respectively. Since using Batch Normalization layers leads to privacy violation, we used group normalization instead as a common practice that is also used the Opacus library \citep{Opacus}. Our network has 3 group normalization layers of size 64, 128, and 256 after the second, fourth, and sixth convolution layer. Similar to \cite{abadi2016deep}, this model reaches the accuracy of 81.33\% after 250 epochs of training in a non-private setting. For the Poisson subsampling RDP, we simply used the one available from Opacus.

Fig.~\ref{fig:guarantees_cifar} compares the privacy guarantees $(\epsilon, \delta)$  for FS${}_{\text{woR}}$-RDP and the method proposed by \cite{wang2019subsampled} on the above CNN model after 250 training epochs with $\sigma_t=6$, $|B|=120$, clipping threshold $C=3.0$, and learning rate $lr=1e-3$ on CIFAR10 following the same setting in \cite{abadi2016deep}. Fig.~\ref{fig:guarantees_cifar} also compares the testing accuracy of Poisson subsampled RDP in Opacus with our FS${}_{\text{woR}}$-RDP. We ran each method 5 times with different random seeds for sampling: $0$, $1,364$, $2,560$, $3,000$, and $4,111$. Shaded areas show 3-sigma standard deviations around the mean. Both logical and physical minibatch sizes were set to 120 in Opacus. Experiments were run on a single work station with an NVIDIA RTX 3090 with 24GB memory. The runtime for all experiments was under 12 hours.

As shown in Figure \ref{fig:guarantees_cifar}, FS${}_{\text{woR}}$-RDP yields substantially tighter bounds than \citep{wang2019subsampled} (left panel) and reaches the average accuracy of 63.87\% after 250 epochs (vs. 61.57\% for Poisson subsampled RDP) (right panel).   Figure~\ref{fig:guarantees_cifar} indicates that for a fixed $\sigma$, FS${}_{\text{woR}}$-RDP surpasses the accuracy of Poisson subsampled RDP in Opacus, which we conjecture is due to the gradient variance reduction described in Section \ref{sec:var}, but with  higher epsilon guarantees as shown in the left panel in Figure~\ref{fig:guarantees_cifar}; this behavior matches the approximate calculation from Eq.~\eqref{eq:eps_FS_P_relation}. We emphasize that these results all apply to the replace-one notion of adjacency, as used in \citep{wang2019subsampled} and also our Theorem \ref{thm:FS_woR_replace_one}; we did not find existing computable RDP bounds for Poisson-subsampling under replace-one adjacency  and so we used the new result in Theorem \ref{thm:Poisson_replace_one}. As discussed in Section \ref{sec:privacy_comparison},   Poisson and FS${}_{\text{woR}}$-RDP agree to leading order but the higher order contributions give Poisson subsampling a slight privacy advantage at the same level of added noise; this can be seen in Figure \ref{fig:guarantees_cifar} (top).   We provide additional experiments in Appendix~\ref{additional} to gauge the sensitivity of FS${}_{\text{woR}}$-RDP to key parameters including $\sigma$ and batch size $|B|$.

\subsection{Memory Usage Comparison}
\label{mem}

The RDP accountant used in common DP libraries assumes a Poisson subsampling mechanism that leads to  variable-sized minibatches during the training. That is, the size of each mini-batch cannot be determined in advance. Allowing the minibatches to have variable sizes creates an engineering challenge: an uncharacteristically large minibatch can cause an out-of-memory error, and depending on how fine tuned the minibatch size is to the available GPU memory this could be a  frequent occurrence. To tackle this issue, the Opacus implementation requires the additional complication of wrapping the Pytorch \texttt{DataLoader} object into a \texttt{BatchMemoryManager} object, to alleviate the memory intensive nature of Poisson subsampling. \texttt{BatchMemoryManager} requires privacy practitioners to define two minibatch sizes: one is a \textit{logical} minibatch size for Poisson sampling and the other is a \textit{physical} minibatch size that determines the actual space allocated in memory and is determined by the practitioner via a variable named \texttt{max\_physical\_batch\_size} \citep{Opacus}. This design follows the distinction between `lots' and minibatches described in the  moments accountant \citep{abadi2016deep}.

\begin{wrapfigure}[19]{r}{0.5\textwidth}
\vspace{-10mm}
\begin{center}
\centerline{\includegraphics[width=0.5\textwidth]{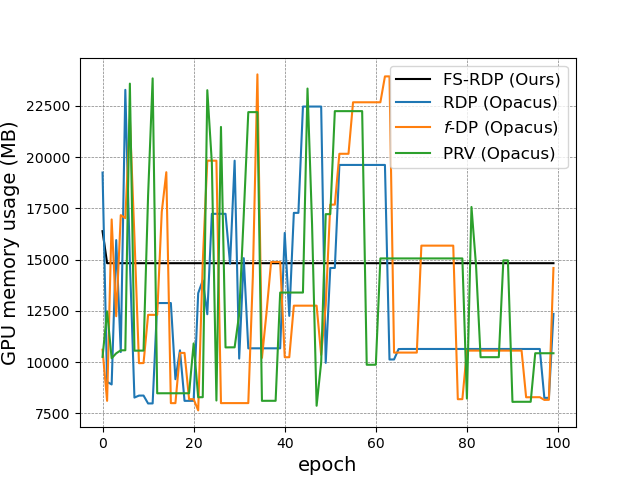}}
\caption{Comparing memory usage of FS-RDP with other Opacus privacy accountants in each training epoch. We used $|B|=120$, and $|D|=50,000$. Unlike other methods, FS-RDP's memory usage remains constant throughout training.}
\label{fig:mem}
\end{center}
\end{wrapfigure}

Our proposed FS-RDP does not require this memory management and is much less memory intensive. Figure \ref{fig:mem} depicts the GPU memory footprints during 100 epochs of privately training the above CNN on CIFAR10 with FS${}_{\text{woR}}$-RDP and three of the most common DP accountants: RDP \citep{mironov2019r}, $f$-DP \citep{dong2022gaussian}, and PRV \citep{prv_gopi2021numerical}. As shown in Figure \ref{fig:mem}, the memory usage of FS-RDP remains constant at 14,826 MB. However, the memory usage varies from 7,984 MB to 23,280 MB (almost twice larger than FS-RDP) for Poisson subsampled RDP in Opacus. Similar memory usage is observed for other Opacus accountants that use Poisson subsampling mechanism (i.e., $f$-DP and PRV). This fluctuating memory usage, while possibly tolerable on powerful GPUs, could result in undesirable outcomes in settings where resource-constrained devices are considered. For instance, in collaborative learning environments (e.g., federated learning), erratic memory usage could result in a higher dropout rate and delay the global model's convergence \citep{liu2021adaptive}. It could affect the final model's quality by undermining its accuracy and fairness, resulting from preventing lower-end devices from participating in the training process \citep{imteaj2021survey}.

\section{Conclusion}
Differentially private stochastic gradient descent with fixed-size minibatches has attractive properties including reduced gradient estimator variance and simplified memory management. Current computable fixed-size privacy accountants are either not tight or do not account for different adjacency relationships. To address this, our work presents a holistic RDP accountant for DP-SGD with fixed-size subsampling without replacement (FSwoR) and with replacement (FSwR)  and, in the FSwoR case, consider both add/remove and replace-one adjacency. As was shown by our results, given the fact that FSwoR under replace-one adjacency leads to the same leading-order privacy guarantees as the widely-used Poisson subsampling, we suggest using the former over the latter to benefit from the memory management and reduced variance properties we shown theoretically and empirically. For subsampling without replacement under replace-one adjacency, we obtained significantly tighter  RDP bounds (approximately 4 times improvement) over the most recent computable results \citep{wang2019subsampled}. For subsampling with replacement we obtained the first non-asymptotic upper and lower RDP bounds for DP-SGD. We also provided the first comparison of gradient estimator's variance and privacy guarantees between FS-RDP and Poisson subsampled RDP, which is commonly used in DP libraries such as Opacus. Our analysis revealed that FS-RDP reduces the variance of the gradient estimator. We highlighted the memory usage advantages of FS-RDP over Poisson subsampled RDP, which makes it a practical choice for privately training large deep learning models and discussed its advantages for federated learning. We made FS-RDP's implementation and its corresponding evaluations available online so our  new RDP accountant can be easily added to common DP libraries. 

\bibliography{main.bib}
\bibliographystyle{apalike}

\newpage
\appendix
\onecolumn

\section{Proof of Theorem \ref{thm:one_step_renyi}: FS${}_{\text{woR}}$-RDP Bound under Add/Remove Adjacency}\label{app:Renyi_bounds_without_replacement}
In this appendix we prove a one-step R{\'e}nyi bound for DP-SGD using subsampling without replacement under the add/remove adjacency relation, as stated in Theorem \ref{thm:one_step_renyi}. To do this, we will show that the  sum over minibatches $b$ in \eqref{eq:DP_SGD_p_def}  can be rewritten as an average of Gaussian mixtures with parameter $q=|B|/|D|$.  This will then allow us to mimic several steps from the  derivation in \cite{mironov2019r}, which applies to Poisson subsampling. In the Poisson subsampling case the Gaussian mixture structure is more or less apparent from the start, but here it requires some additional work to isolate.   We first consider the case where $D^\prime$ is obtained from $D$ by removing one element. Without loss of generality we can let $D^\prime=(D_0,...,D_{|D|-2})$. Later we will show that the bound we derive in this  case also bounds the  case where $D^\prime$ has one additional element.

Noting that $\frac{1}{N_D}\sum_b$ is simply the expectation with respect to the distribution of $B^D_t$, we start by showing that the distribution of $B^D_t$ can be obtained from the distribution of $B^{D^\prime}_t$ through the introduction of two auxiliary random variables $J$ and $\widetilde B$ as follows: Let $J$ be a $\text{Bernoulli}(q)$ random variable where $q=|B|/|D|$ and  let $(B^\prime, \widetilde B)$ be random variables independent of $J$ such that $B^\prime$ is a uniformly selected subset of $\{0,...,|D|-2\}$ of size $|B|$ (so $B^\prime\sim B^{D^\prime}_t$) and $P(\widetilde B\in \cdot|B^\prime=b^\prime)$ is the uniform distribution on subsets of $b^\prime$ of size $|B|-1$.  Defining the random variable $B$  by
\begin{align}\label{eq:B_decomp}
    B=\begin{cases}
        B^\prime\,\text{ if }J=0\,,\\
        \widetilde B\cup\{|D|-1\}\,\text{ if }J=1
    \end{cases}
\end{align}
we have the following:
\begin{lemma}\label{lemma:B_dist}
The distribution of $B$, defined in \eqref{eq:B_decomp}, equals that of $B^D_t$.
\end{lemma}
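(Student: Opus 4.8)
The plan is to show that $B$ and $B^D_t$ assign the same probability to every admissible minibatch, by conditioning on the one event that governs the two branches of the construction \eqref{eq:B_decomp}: whether the new index $|D|-1$ is selected. Write $N=|D|$ and $k=|B|$ for brevity, so $q=k/N$. Both $B$ and $B^D_t$ are supported on the $\binom{N}{k}$ subsets of $\{0,\dots,N-1\}$ of size $k$, and this support splits into the $\binom{N-1}{k}$ subsets omitting $N-1$ and the $\binom{N-1}{k-1}$ subsets containing $N-1$. It therefore suffices to check (a) that $\Pr(N-1\in B)=\Pr(N-1\in B^D_t)$, and (b) that the two laws agree conditionally on each of the events $\{N-1\notin\cdot\}$ and $\{N-1\in\cdot\}$.

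Part (a) and the easier half of (b) are immediate bookkeeping. From \eqref{eq:B_decomp}, $N-1\in B$ exactly when $J=1$, so $\Pr(N-1\in B)=q$; on the other hand $\Pr(N-1\in B^D_t)=\binom{N-1}{k-1}\big/\binom{N}{k}=k/N=q$, so the inclusion (and hence omission) probabilities match. Conditioned on $N-1$ being omitted, $B=B'$, which was defined to be uniform on the $k$-subsets of $\{0,\dots,N-2\}$; and conditioning the uniform $k$-subset $B^D_t$ on $N-1\notin B^D_t$ also gives the uniform law on $k$-subsets of $\{0,\dots,N-2\}$. These agree.

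The substance of the proof — and the only step needing an actual argument — is the conditional law given $N-1\in\cdot$. For $B^D_t$ this conditional law is that of $\{N-1\}\cup U$ with $U$ uniform on $(k-1)$-subsets of $\{0,\dots,N-2\}$; for $B$ it is that of $\{N-1\}\cup\widetilde B$. So I must show that the \emph{marginal} law of $\widetilde B$ is uniform on $(k-1)$-subsets of $\{0,\dots,N-2\}$, even though $\widetilde B$ is built by the two-stage rule ``draw $B'$ uniform among $k$-subsets of $\{0,\dots,N-2\}$, then draw $\widetilde B$ uniform among $(k-1)$-subsets of $B'$.'' I would argue this by symmetry: this composite sampling procedure is invariant under every permutation of the ground set $\{0,\dots,N-2\}$, and the symmetric group acts transitively on $(k-1)$-subsets, so the induced distribution can only be the uniform one. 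If one prefers a direct count, a fixed $(k-1)$-subset $c$ is contained in exactly $N-k$ of the $k$-subsets $b'$, each weighted by $1/\binom{N-1}{k}$ and then yielding $c$ with probability $1/\binom{k}{k-1}=1/k$, so $\Pr(\widetilde B=c)=(N-k)/(k\binom{N-1}{k})$, independent of $c$. Combining (a) with both halves of (b) shows the two laws coincide. As a cross-check one can instead evaluate $\Pr(B=b)$ directly from \eqref{eq:B_decomp} in the two cases $N-1\notin b$ and $N-1\in b$ and verify each equals $1/\binom{N}{k}$; I expect the symmetry route to be the cleanest to write up, and the marginalization of $\widetilde B$ is the only place where any real (though mild) obstacle arises.
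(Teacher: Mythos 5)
Your proof is correct, and it organizes the argument a bit differently from the paper. The paper simply evaluates $P(B=b)$ case by case: when $|D|-1\notin b$ it reduces to $P(B'=b)(1-q)$, and when $|D|-1\in b$ it unwinds the conditioning on $B'$ and counts the $k$-subsets of $\{0,\dots,|D|-2\}$ containing the fixed $(k-1)$-subset $b\setminus\{|D|-1\}$, arriving at $1/\binom{|D|}{|B|}$ in both cases. You instead first match $\Pr(|D|-1\in\cdot)$ on the two sides, then match the two conditional laws, and isolate the one nontrivial claim: that the two-stage draw ``$B'$ uniform on $k$-subsets, then $\widetilde B$ uniform on $(k-1)$-subsets of $B'$'' has a uniform marginal. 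Your symmetry argument for that claim is sound: any permutation $\sigma$ of $\{0,\dots,|D|-2\}$ preserves the joint law of $(B',\widetilde B)$, and the symmetric group acts transitively on $(k-1)$-subsets, forcing the marginal to be uniform. Your backup direct count $\Pr(\widetilde B=c)=(N-k)/\bigl(k\binom{N-1}{k}\bigr)=1/\binom{N-1}{k-1}$ is also correct and is, in fact, exactly the arithmetic the paper does inside the $|D|-1\in b$ branch (the paper's factor $\frac{|B|}{|D|}\cdot\frac{1}{|B|}\cdot\frac{|D|-|B|}{\binom{|D|-1}{|B|}}$ is your count with the $q$-branch probability folded in). So the two proofs encode the same computation; your conditioning-plus-symmetry framing is cleaner conceptually and makes the design intent of the decomposition \eqref{eq:B_decomp} more transparent, while the paper's direct verification is more mechanical but requires no invariance reasoning. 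Neither buys extra generality.
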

\begin{proof}
    It is clear that $B$ is valued in the subsets of $\{0,...,|D|-1\}$ having size $|B|$. Given $b\subset \{0,...,|D|-1\}$ of size $|B|$ we have
\begin{align}\label{eq:P_B_start}
    P(B=b)=&P( B= b, J=0)+P(B=b,J=1)\\
    =&P(B^\prime=b)P(J=0)+P(\widetilde B\cup\{|D|-1\}= b)P(J=1)\,.\notag
\end{align}
If $|D|-1\not\in  b$ then  $b\subset \{0,...,|D|-2\}$ and the second term in \eqref{eq:P_B_start} is zero, hence
\begin{align}
    P(B= b)=&P(B^\prime=b)(1-q)=\frac{1-q}{\binom{|D|-1}{|B|}}=\frac{(|D|-|B|)/|D|}{(|D|-1)!/(|B|!(|D|-1-|B|)!)}\\
    =&\frac{1}{|D|!/(|B|!(|D|-|B|)!)}=\frac{1}{\binom{|D|}{|B|}}    =P(B^D_t= b)\notag
\end{align}
as claimed. Now suppose $|D|-1\in b$.  Then $ b\setminus\{|D|-1\}$ is a subset of $\{0,...,|D|-2\}$ of size $|B|-1$, hence
\begin{align}
P( B= b)    =&qP(\widetilde B\cup\{|D|-1\}= b)=\frac{|B|}{|D|}P(\widetilde B= b\setminus\{|D|-1\})\\
=&\frac{|B|}{|D|}E_{b^\prime\sim B^\prime}\left[ P(\widetilde B= b\setminus\{|D|-1\}|B^\prime=b^\prime) \right]\notag\\
=&\frac{|B|}{|D|}E_{b^\prime\sim B^\prime}\left[ 1_{ b\setminus\{|D|-1\}\subset b^\prime} P(\widetilde B= b\setminus\{|D|-1\}|B^\prime=b^\prime) \right]\notag\\
=&\frac{1}{|D|} P( b\setminus\{|D|-1\}\subset B^\prime) =\frac{1}{|D|}\frac{|D|-1-(|B|-1)}{\binom{|D|-1}{|B|}}\notag\\
=&\frac{1}{\binom{|D|}{|B|}}=P(B^D_t= b)\,.\notag
\end{align}
This completes the proof. 
\end{proof}

 Using Lemma \ref{lemma:B_dist}, we can express the expectation with respect to $B^D_t$ in terms of $J$, $B^\prime$, and $\widetilde B$. This will provide us with the desired decomposition of the sum  in \eqref{eq:DP_SGD_p_def}. The definition \eqref{eq:B_decomp} is reminiscent of the constructions in Proposition 23 in Appendix C of \cite{wang2019subsampled}, however our  analysis will also   result in tighter bounds for DP-SGD than one gets from the general-purpose method in \cite{wang2019subsampled}.

Using Lemma \ref{lemma:B_dist} in the manner discussed above,  the transition probabilities \eqref{eq:DP_SGD_p_def} for one step of DP-SGD with \fswor-subsampling can be rewritten as follows:
\begin{align}\label{eq:DP_SGD_p_2}
  p_t(\theta_{t+1}|\theta_t,D) =&E_{b\sim B^D_t}\left[  N_{\mu_t(\theta_t,b,D),\widetilde\sigma_t^2}(\theta_{t+1}) \right]  \\
  =&E_{(b^\prime,\widetilde b)\sim (B^\prime,\widetilde B)}\left[ qN_{\mu_t(\theta_t,\widetilde b\cup\{|D|-1\},D),\widetilde\sigma_t^2}(\theta_{t+1})+(1-q)N_{\mu_t(\theta_t, b^\prime,D),\widetilde\sigma_t^2}(\theta_{t+1})\right]\,.\notag
\end{align}
Recalling that $B^\prime\sim B^{D^\prime}_t$ and noting that $\mu_t(\theta_t,B^\prime,D^\prime)=\mu_t(\theta_t,B^\prime,D)$ (as $D^\prime$ agrees with $D$ except for the last missing entry, which by definition cannot be contained in $B^\prime$),  the transition probabilities $p_t(\theta_{t+1}|\theta_t,D^\prime)$ can similarly be written 
  \begin{align}\label{eq:DP_SGD_p_prime_2}
     p_t(\theta_{t+1}|\theta_t,D^\prime)=&E_{b^\prime\sim\widetilde B^{D^\prime}_t}\left[N_{\mu_t(\theta_t,b^\prime,D^\prime)}(\theta_{t+1})\right]\\
    =&E_{(b^\prime,\widetilde b)\sim(B^\prime,\widetilde B)}\left[ N_{\mu_t(\theta_t, b^\prime,D),\widetilde\sigma_t^2}(\theta_{t+1})\right]\,.\notag
   \end{align}
The outer expectations in the expressions \eqref{eq:DP_SGD_p_2} and \eqref{eq:DP_SGD_p_prime_2} are now both with respect to the same distribution, with the key difference being captured by the parameter $q$. The construction \eqref{eq:B_decomp} was introduced precisely so that we might obtain such a decomposition. It allows the next step to proceed as in DP-SGD with Poisson subsampling  \cite{abadi2016deep,mironov2019r}. Namely, we use quasiconvexity of the R{\'e}nyi divergences (i.e., the data processing inequality) \cite{van2014renyi} to obtain
 \begin{align}\label{eq:RDP_reverse} 
       &D_\alpha(p(\theta_{t+1}|\theta_t,D)\|p_t({\theta_{t+1}}|\theta_t,D^\prime))\\
       \leq& \max_{(b^\prime,\widetilde b)}D_\alpha\left( qN_{\mu_t(\theta_t,\widetilde b\cup\{|D|-1\},D),\widetilde\sigma_t^2} +(1-q) N_{\mu_t(\theta_t,b^\prime,D),\widetilde\sigma_t^2}\|N_{\mu_t(\theta_t,b^\prime,D),\widetilde\sigma_t^2}\right)\notag\\
       =& \max_{(b^\prime,\widetilde b)}D_\alpha\left( qN_{\|\mu_t(\theta_t,\widetilde b\cup\{|D|-1\},D)-\mu_t(\theta_t,b^\prime,D)\|,\widetilde\sigma_t^2} +(1-q) N_{0,\widetilde\sigma_t^2}\|N_{0,\widetilde\sigma_t^2}\right)\,,\notag
       \end{align}  
where, as in \cite{abadi2016deep,mironov2019r}, we used a change of variables to reduce the computation to 1-dimension in the last line.

\begin{remark}\label{remark:general_divergence_bound}
We note that the above analysis can be applied to any divergence, $\mathcal{D}$, that is quasiconvex in its arguments, not just the R{\'e}nyi divergences.  In the general case one similarly finds
\begin{align}
    &\mathcal{D}(p(\theta_{t+1}|\theta_t,D)\|p_t({\theta_{t+1}}|\theta_t,D^\prime)\\
    \leq& \max_{(b^\prime,\widetilde b)}\mathcal{D}\left( qN_{\mu_t(\theta_t,\widetilde b\cup\{|D|-1\},D),\widetilde\sigma_t^2} +(1-q) N_{\mu_t(\theta_t,b^\prime,D),\widetilde\sigma_t^2}\|N_{\mu_t(\theta_t,b^\prime,D),\widetilde\sigma_t^2}\right)\notag\,.
\end{align}
For instance, this holds for all $f$ divergences, including the important class of hockey-stick divergences that are important for other DP frameworks such as in \cite{zhu2022optimal}.
\end{remark}

Returning to \eqref{eq:RDP_reverse}, recalling that $C$ is the clipping bound  and that $\widetilde b\subset b^\prime$ and $|b^\prime\setminus\widetilde b|=1$, we can compute
\begin{align}\label{eq:delta_mu_bound}
   &\|\mu_t(\theta_t,\widetilde b\cup\{|D|-1\},D)-\mu_t(\theta_t,b^\prime,D)\|^2\\
   =&\frac{\eta_t^2}{|B|^2}\| \text{Clip}(\nabla_\theta\mathcal{L}(\theta_t,D_{|D|-1}))- \sum_{i\in b^\prime\setminus\widetilde b} \text{Clip}(\nabla_\theta\mathcal{L}(\theta_t,D_i))\|^2\notag\\
   \leq&\frac{4C^2\eta_t^2}{|B|^2}\coloneqq r_t^2\,.\label{eq:rt_def}
\end{align}
We note that this result is $4$ times as large as the corresponding calculation in the case of Poisson subsampling, i.e., the sensitivity in the case of fixed-sized subsampling is inherently twice that of Poisson subsampling when using add/remove adjacency.  This is because in Poisson subsampling when the minibatch from $D$ differs from that of $D^\prime$ it is due to the inclusion of a single additional element. However, in fixed-size subsampling, when the minibatches  are not identical then they differ by a replacement; this  contributes more to the difference in means  by a factor of $2$.

Combining \eqref{eq:rt_def} with the fact that scaling the means in  Gaussian mixtures by the same factor $c\geq 1$ can only increase the R{\'e}nyi divergence, see  Section 2 in \cite{mironov2019r}, we obtain the following uniform R{\'e}nyi bound.
    \begin{align}\label{eq:RDP_one_step_uniform} 
       &\sup_{\theta_t}D_\alpha(p(\theta_{t+1}\|\theta_t,D)|p_t({\theta_{t+1}}|\theta_t,D^\prime))       \leq D_\alpha\left( qN_{r_t,\widetilde\sigma_t^2} +(1-q) N_{0,\widetilde\sigma_t^2}\|N_{0,\widetilde\sigma_t^2}\right)\,.
       \end{align}  
Now we show that the right-hand-side of \eqref{eq:RDP_one_step_uniform}  also bounds the case where $D^\prime$ has one additional element.    Repeating the steps in the above derivation, in this case we obtain
\begin{align}
&\sup_{\theta_t}D_\alpha(p(\theta_{t+1}|\theta_t,D)\|p_t(\theta_{t+1}|\theta_t,D^\prime))\leq D_\alpha(N_{0,\widetilde\sigma_t^2}\|\widetilde q N_{r_t,\widetilde\sigma_t^2}+(1-\widetilde q)N_{0,\widetilde\sigma_t^2})\,,
\end{align}
where $\widetilde q=|B|/(|D|+1)$; the appearance of $|D^\prime|=|D|+1$ here is due to the interchange of the roles of $D$ and $D^\prime$ in the decomposition \eqref{eq:B_decomp} for this case. 
  Next use Theorem 5 in   \cite{mironov2019r} to obtain
 \begin{align}
&D_\alpha(N_{0,\widetilde\sigma_t^2}\|\widetilde q N_{r_t,\widetilde\sigma_t^2}+(1-\widetilde q)N_{0,\widetilde\sigma_t^2})\leq D_\alpha(\widetilde q N_{r_t,\widetilde\sigma_t^2}+(1-\widetilde q)N_{0,\widetilde\sigma_t^2}\|N_{0,\widetilde\sigma_t^2})\,.\label{eq:reverse_bound}
 \end{align}
 Quasiconvexity of the R{\'e}nyi divergences implies  the right-hand-side of \eqref{eq:reverse_bound} is non-decreasing in $\widetilde q$.  Therefore, as $\widetilde q\leq q$, we obtain
 \begin{align}
&\sup_{\theta_t}D_\alpha(p(\theta_{t+1}|\theta_t,D)\|p_t({\theta_{t+1}}|\theta_t,D^\prime))
\leq D_\alpha( q N_{r_t,\widetilde\sigma_t^2}+(1- q)N_{0,\widetilde\sigma_t^2}\|N_{0,\widetilde\sigma_t^2})\,.
\end{align}
The upper bound here is the same as the one we obtained in \eqref{eq:RDP_one_step_uniform}  when $D^\prime$ had one fewer element than $D$.  Finally, by changing variables we can rescale the mean to $1$ and therefore, noting that 
\begin{align}\label{eq:sigma_tilde_over_r}
   \widetilde\sigma_t/r_t=\sigma_t/2 \,,
\end{align}
we obtain
 \begin{align}
&\sup_{\theta_t}D_\alpha(p(\theta_{t+1}|\theta_t,D)\|p_t({\theta_{t+1}}|\theta_t,D^\prime))
\leq D_\alpha( q N_{1,\sigma_t^2/4}+(1- q)N_{0,\sigma_t^2/4}\|N_{0,\sigma_t^2/4})\,.
\end{align}
This completes the proof.

\section{Taylor Expansion of $H_{\alpha,\sigma}(q)$}\label{app:Taylor}
In this appendix we provide details regarding the Taylor expansion \eqref{eq:H_Taylor_main} of $H_{\alpha,\sigma}(q)$ (Eq.~\eqref{H_def}), which can be written
\begin{align}
&H_{\alpha,\sigma}(q)
\coloneqq\!\int \!\left(q\frac{ N_{ 1,\sigma^2/4}(\theta)}{N_{0,\sigma^2/4}(\theta)} +(1- q) \right)^\alpha\!\!\! N_{0,\sigma^2/4}(\theta)d\theta\,.\notag
\end{align}
In Section \ref{sec:add_remove} we obtained one-step RDP bounds in terms of the R{\'e}nyi divergence 
\begin{equation}
 D_\alpha( q N_{1,\sigma_t^2/4}+(1- q)N_{0,\sigma_t^2/4}\|N_{0,\sigma_t^2/4}) =\frac{1}{\alpha-1}\log[H_{\alpha,\sigma_t}(q)]\,.
\end{equation}
To obtain the  computable RDP bounds  in Theorem \ref{thm:RDP} and also in Theorem \ref{thm:FSR_RDP_UB} we must bound $H_{\alpha,\sigma}(q)$. We proceed by computing the coefficients in its Taylor expansion, including an explicit upper bound on the remainder term \eqref{eq:remainder_term_main}. First note that using the  dominated convergence theorem (see, e.g., Theorem 2.27 in \cite{folland1999real}), it is straightforward to see that $H_{\alpha,\sigma}(q)$ is smooth in $q$ and can be differentiated under the integral any number of times:
\begin{align}
    &\frac{d^k}{dq^k}H_{\alpha,\sigma}(q)
    =\prod_{j=0}^{k-1}(\alpha-j)\int \left( q\frac{N_{1,\sigma^2/4}(\theta) }{N_{0,\sigma^2/4}(\theta)}+(1- q)\right)^{\alpha-k}\left(\frac{N_{ 1,\sigma^2/4}(\theta) }{N_{0,\sigma^2/4}(\theta)}-1\right)^k N_{0,\sigma^2/4}(\theta)d\theta\,.
\end{align}
This justifies the use of Taylor's theorem with integral remainder to $H_{\alpha,\sigma}$ for any order $m\in\mathbb{Z}^+$ to arrive at  
\begin{align}\label{eq:H_Taylor}
H_{\alpha,\sigma}(q)
    =&1+\sum_{k=2}^{m-1}\frac{q^k}{k!} \left(\prod_{j=0}^{k-1}(\alpha-j)\right) M_{\sigma,k}+R_{\alpha,\sigma,m}(q)\,,
    \end{align}
        where
    \begin{align}\label{eq:M_def}
        M_{\sigma,k}\coloneqq \int \left(\frac{N_{ 1,\sigma^2/4}(\theta) }{N_{0,\sigma^2/4}(\theta)}-1\right)^k N_{0,\sigma^2/4}(\theta)d\theta
    \end{align}    
     (note that the $M_{\sigma,1}=0$, which is why the summation in Eq.~\eqref{eq:H_Taylor} starts at $k=2$)
    and the remainder term is given by 
\begin{align}\label{eq:remainder_term}
  R_{\alpha,\sigma,m}(q)=  q^{m} \int_0^1 \frac{(1-s)^{m-1}}{(m-1)!}\frac{d^{m}}{dq^{m}}H_{\alpha,\sigma}(sq)ds\,.
\end{align}
For integer $k\geq 0$, the $M_{\sigma,k}$'s can be computed using the binomial theorem together with the formula for the moment generating function (MGF) of a Gaussian:
\begin{align}\label{eq:M_k_formula}
M_{\sigma,k}=&\sum_{\ell=0}^k(-1)^{k-\ell}\binom{k}{\ell}\int \left(\frac{N_{ 1,\sigma^2/4}(\theta) }{N_{0,\sigma^2/4}(\theta)}\right)^\ell N_{0,\sigma^2/4}(\theta)d\theta\\
=&\sum_{\ell=0}^k(-1)^{k-\ell}\binom{k}{\ell}e^{-\ell /(2\sigma^2/4)}\int e^{\ell\theta/(\sigma^2/4)} N_{0,\sigma^2/4}(\theta)d\theta\notag\\
=&\sum_{\ell=0}^k(-1)^{k-\ell}\binom{k}{\ell}e^{\ell(\ell-1 )/(\sigma^2/2)}\notag\\
=&\sum_{\ell=2}^k(-1)^{k-\ell}\binom{k}{\ell}e^{2\ell(\ell-1 )/\sigma^2}+(-1)^{k-1}(k-1)\,.\notag
\end{align}
In particular,  $M_{\sigma,0}=1$ and $M_{\sigma,1}=0$. If $\alpha$ is an integer then the expansion \eqref{eq:H_Taylor} truncates at a finite number of terms (terms with $k>\alpha$ are zero), but for non-integer $\alpha$ one must bound the remainder term.  To   enhance numerical stability of the computations, in practice we  add terms starting from the highest order and proceeding to the lowest order.

 {\bf Bounding the Taylor Expansion Remainder Term:} \\
To bound the remainder term \eqref{eq:remainder_term} we need to bound $\frac{d^{m}}{dq^{m}}H_{\alpha,\sigma}(sq)$ for $t,q\in(0,1)$. We will break the calculation into two cases, where different methods are appropriate. Here we assume $q<1$.  

In these bounds, it will be useful to employ the following definition for integer $j\geq 0$:
\begin{align}
B_{\sigma,j}\coloneqq\int\left|\frac{N_{ 1,\sigma^2/4}(\theta) }{N_{0,\sigma^2/4}(\theta)}-1\right|^j N_{0,\sigma^2/4}(\theta)d\theta\,.\label{eq:B_m_def}
\end{align}
For $j$ even we have $B_{\sigma,j}=M_{\sigma,j}$ and for $j$ odd we can use the Cauchy-Schwarz inequality to bound
\begin{align}
    B_{\sigma,j}\leq B_{\sigma,j-1}^{1/2}B_{\sigma,j+1}^{1/2}=M_{\sigma,j-1}^{1/2}M_{\sigma,j+1}^{1/2}\,,
\end{align}
as $j\pm 1$ are both even.  Therefore an explicitly computable upper bound for all integer $j\geq 1$ is given by
\begin{align}
     B_{\sigma,j}\leq \begin{cases}
         M_{\sigma,j} &\text{ if } j \text{ even}\\
         M_{\sigma,j-1}^{1/2}M_{\sigma,j+1}^{1/2} &\text{ if }j \text{ odd}
     \end{cases}     \coloneqq \widetilde{B}_{\sigma,j}\,,\label{eq:B_tilde_def}
\end{align}
where $M_{\sigma,k}$ is given by \eqref{eq:M_k_formula}.

{\bf Case 1: $\alpha-m>0$}\\
Using the bound $x^{\alpha-m}\leq 1+x^{\lceil\alpha\rceil-m}$ for all $x>0$ we can compute
\begin{align}
    &\left|\frac{d^m}{dq^m}H_{\alpha,\sigma}(sq)\right|\\
    \leq&\prod_{j=0}^{m-1}|\alpha-j|\int\left(1+ \left( sq\frac{N_{1,\sigma^2/4}(\theta) }{N_{0,\sigma^2/4}(\theta)}+(1- sq)\right)^{\lceil\alpha\rceil-m}\right)\left|\frac{N_{ 1,\sigma^2/4}(\theta) }{N_{0,\sigma^2/4}(\theta)}-1\right|^m N_{0,\sigma^2/4}(\theta)d\theta\notag\\
    =&    \prod_{j=0}^{m-1}|\alpha-j|\left[\int \left( sq\left(\frac{N_{1,\sigma^2/4}(\theta) }{N_{0,\sigma^2/4}(\theta)}-1\right)+1\right)^{\lceil\alpha\rceil-m}\left|\frac{N_{ 1,\sigma^2/4}(\theta) }{N_{0,\sigma^2/4}(\theta)}-1\right|^m N_{0,\sigma^2/4}(\theta)d\theta+ B_{\sigma,m}\right]\notag\\
     =&    \prod_{j=0}^{m-1}|\alpha-j|\Bigg[\sum_{\ell=0}^{\lceil\alpha\rceil-m}\binom{\lceil\alpha\rceil-m}{\ell} (sq)^\ell\int  \left(\frac{N_{1,\sigma^2/4}(\theta) }{N_{0,\sigma^2/4}(\theta)}-1\right)^\ell\\     &\qquad\qquad\qquad\qquad\qquad\qquad\qquad\qquad\times\left|\frac{N_{ 1,\sigma^2/4}(\theta) }{N_{0,\sigma^2/4}(\theta)}-1\right|^m N_{0,\sigma^2/4}(\theta)d\theta+ B_{\sigma,m}\Bigg]\notag\\
     \leq& \prod_{j=0}^{m-1}|\alpha-j|\left[\sum_{\ell=0}^{\lceil\alpha\rceil-m}\binom{\lceil\alpha\rceil-m}{\ell} (sq)^\ell \widetilde{B}_{\sigma,\ell+m}+ \widetilde B_{\sigma,m}\right]\notag\,.
\end{align}
Therefore
\begin{align}
      &|R_{\alpha,\sigma,m}(q)|\\
      \leq&  q^{m} \prod_{j=0}^{m-1}|\alpha-j| \left[\sum_{\ell=0}^{\lceil\alpha\rceil-m}\binom{\lceil\alpha\rceil-m}{\ell} q^\ell  \widetilde{B}_{\sigma,\ell+m}\int_0^1 \frac{(1-s)^{m-1}}{(m-1)!}s^\ell ds + \frac{1}{m!}\widetilde{B}_{\sigma,m}\right]\notag\\
      =&  q^{m} \prod_{j=0}^{m-1}|\alpha-j| \left[\sum_{\ell=0}^{\lceil\alpha\rceil-m}q^\ell \frac{(\lceil\alpha\rceil-m)!}{(\lceil\alpha\rceil-m-\ell)!(m+\ell)!}   \widetilde{B}_{\sigma,\ell+m}+ \frac{1}{m!}\widetilde{B}_{\sigma,m}\right]\,.\notag
\end{align}

{\bf Case 2: $\alpha-m\leq 0$}\\
 In this case we have 
 \begin{align}
     \left( sq\frac{N_{1,\sigma^2/4}(\theta) }{N_{0,\sigma^2/4}(\theta)}+(1- sq)\right)^{\alpha-m}\leq  (1-q)^{\alpha-m}
 \end{align}
 and therefore
\begin{align}
    &\left|\frac{d^m}{dq^m}H_{\alpha,\sigma}(sq)\right|
    \leq(1-q)^{\alpha-m}\prod_{j=0}^{m-1}|\alpha-j| B_{\sigma,m}\,.
\end{align}
This leads to the following bound on the remainder
\begin{align}\label{eq:Rm_bound_case1}
|R_{\alpha,\sigma,m}(q)|\leq& q^{m} \int_0^1 \frac{(1-s)^{m-1}}{(m-1)!}(1-q)^{\alpha-m}\prod_{j=0}^{m-1}|\alpha-j| B_{\sigma,m}ds\\
=&\frac{q^{m}}{m!}(1-q)^{\alpha-m}\prod_{j=0}^{m-1}|\alpha-j| B_{\sigma,m}\,,
\end{align}
where $\widetilde{B}_{\sigma,m}$ is defined by \eqref{eq:B_tilde_def}.

{\bf Explicit bounds for $m=3$:} We find that $m=3$  generally yields sufficiently accurate bounds.  Here we specialize the above results to this case. For $q<1$ we have
\begin{align}\label{eq:H_3_app}
 H_{\alpha,\sigma}(q) =1+\frac{q^2}{2} \alpha(\alpha-1) M_{\sigma,2}
   +R_{\alpha,\sigma,3}(q)  
\end{align}
with
\begin{align}\label{eq:R_3_app}
R_{\alpha,\sigma,3}(q)\leq q^{3} \alpha(\alpha-1)|\alpha-2|  \begin{cases}
\sum_{\ell=0}^{\lceil\alpha\rceil-3}q^\ell \frac{(\lceil\alpha\rceil-3)!}{(\lceil\alpha\rceil-3-\ell)!(3+\ell)!}   \widetilde{B}_{\sigma,\ell+3}+ \frac{1}{6}\widetilde{B}_{\sigma,3}  & \text{ if }\alpha-3>0\\[6pt]
 \frac{1}{6}(1-q)^{\alpha-3} \widetilde{B}_{\sigma,3}   &\text{ if }\alpha-3\leq 0
\end{cases}    \,.
\end{align}
This completes the proof of Theorem \ref{thm:Taylor}.

To illustrate the use of these bounds, and the fact that $m=3$ is often sufficient in practice, in Figure \ref{fig:FSwoR_ar} we plot  the \fswor-RDP bound from Theorem \ref{thm:RDP} (which uses the expansion and bound \eqref{eq:H_3_app}~-~\eqref{eq:R_3_app}) for one step (i.e, $T=1$) and with the parameter values $\sigma_t=6$, $|B|=120$, and $|D|=50,000$. Note that the difference between the bound for $m=3$ and $m=4$ is negligible over a wide range of $\alpha$ (sufficient to cover the default $\alpha$ range needed by \cite{Opacus}).  The value of $q$ in this example is far from being unrealistically small; smaller values of $q$ will only make the effect of $m$ even less important. Therefore $m=3$ is generally sufficient.

\begin{figure}[h]
\begin{center}
\centerline{\includegraphics[width=.5\columnwidth]{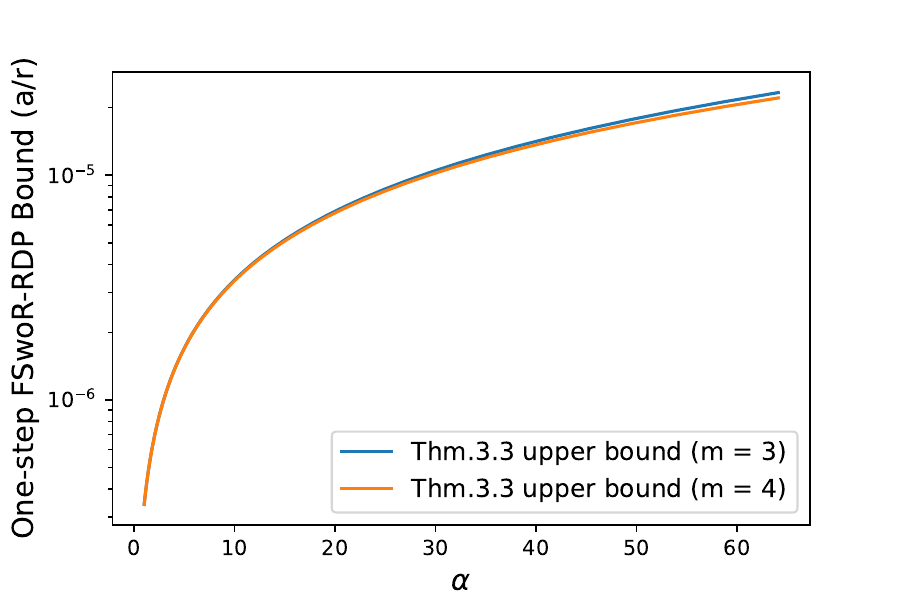}}
\caption{One step of Theorem \ref{thm:RDP} for $m=3,4$, using the parameter values $\sigma_t=6$, $|B|=120$, and $|D|=50,000$.}
\label{fig:FSwoR_ar}
\end{center}
\vskip -0.2in
\end{figure}


\section{Proof of Theorem \ref{thm:FS_woR_replace_one}: FS${}_{\text{woR}}$-RDP Bound under Replace-One Adjacency  }\label{app:RDP_replace_one}

We now prove the  FS${}_{\text{woR}}$-RDP bound under replace-one adjacency as stated in Theorem \ref{thm:FS_woR_replace_one} above. Without loss of generality, we can assume that $D$ and $D^\prime$ share all elements except the last. The derivation shares several initial steps with the add/remove case from Appendix \ref{app:Renyi_bounds_without_replacement}.
Specifically, we apply Lemma \ref{lemma:B_dist} to the transition probabilities for one step of DP-SGD with \fswor-subsampling,  both for $D$ and for $D^\prime$ (as they now have the same number of elements), to obtain
\begin{align}
  p_t(\theta_{t+1}|\theta_t,D) 
  =&E_{(b^\prime,\widetilde b)\sim (B^\prime,\widetilde B)}\!\left[ qN_{\mu_t(\theta_t,\widetilde b\cup\{|D|-1\},D),\widetilde\sigma_t^2}(\theta_{t+1})+(1-q)N_{\mu_t(\theta_t, b^\prime,D),\widetilde\sigma_t^2}(\theta_{t+1})\right]\,,\notag\\
    p_t(\theta_{t+1}|\theta_t,D^\prime) 
  =&E_{(b^\prime,\widetilde b)\sim (B^\prime,\widetilde B)}\!\left[ qN_{\mu_t(\theta_t,\widetilde b\cup\{|D|-1\},D^\prime),\widetilde\sigma_t^2}(\theta_{t+1})+(1-q)N_{\mu_t(\theta_t, b^\prime,D^\prime),\widetilde\sigma_t^2}(\theta_{t+1})\right]\,,
\end{align}
similarly to \eqref{eq:DP_SGD_p_2}. Now we again use quasiconvexity, this time in both arguments simultaneously, to bound the effect of the mixture (expectation) over $(b^\prime,\widetilde{b})$ by the worst case, yielding the one-step R{\'e}nyi bound 
\begin{align}\label{eq:RDP_reverse_FS_q_convex} 
       &D_\alpha\left(p(\theta_{t+1}|\theta_t,D)\|p_t({\theta_{t+1}}|\theta_t,D^\prime)\right)
       \leq \max_{(b^\prime,\widetilde b)} D_\alpha\left( qQ_{\tilde{b}}+(1-q) P_{b^\prime}\|qQ^\prime_{\tilde{b}} +(1-q) P^\prime_{b^\prime}\right)\,,\\
       &Q_{\tilde{b}}\coloneqq N_{\mu_t(\theta_t,\widetilde b\cup\{|D|-1\},D),\widetilde\sigma_t^2}\,,\,\,\, Q^\prime_{\tilde{b}}\coloneqq N_{ \mu_t(\theta_t,\widetilde b\cup\{|D^\prime|-1\},D^\prime),\widetilde\sigma_t^2} \,,\notag\\
       & P_{b^\prime}\coloneqq N_{\mu_t(\theta_t,b^\prime,D),\widetilde\sigma_t^2}\,,\,\,\, P^\prime_{b^\prime}\coloneqq N_{\mu_t(\theta_t,b^\prime,D^\prime),\widetilde\sigma_t^2}\,.\notag
       \end{align}
 Using the fact $D$ and $D^\prime$ agree on the index minibatch $b^\prime$, and hence $\mu_t(\theta_t,b^\prime,D)=\mu_t(\theta_t,b^\prime,D^\prime)$, we see that $P_{b^\prime}=P^\prime_{b^\prime}$.  We can therefore change variables to center both of their means at zero, giving
\begin{align}\label{eq:RDP_reverse_FS} 
       &D_\alpha(p(\theta_{t+1}|\theta_t,D)\|p_t({\theta_{t+1}}|\theta_t,D^\prime))\\
       \leq& 
     \max_{(b^\prime,\widetilde b)}D_\alpha\left( qN_{\Delta \mu_t(b^\prime,\widetilde b),\widetilde\sigma_t^2} +(1-q) N_{0,\widetilde\sigma_t^2}\|qN_{ \Delta \mu^\prime_t(b^\prime,\widetilde b),\widetilde\sigma_t^2} +(1-q) N_{0,\widetilde\sigma_t^2}\right)\,,\notag    \\
       &\Delta \mu_t(b^\prime,\widetilde b)\coloneqq \mu_t(\theta_t,\widetilde b\cup\{|D|-1\},D)-\mu_t(\theta_t,b^\prime,D)\,,\notag\\
       &\Delta \mu^\prime_t(b^\prime,\widetilde b)\coloneqq \mu_t(\theta_t,\widetilde b\cup\{|D|-1\},D^\prime)-\mu_t(\theta_t,b^\prime,D)\,.\notag
\end{align}
 The means of the Gaussians satisfy the bounds
\begin{align}\label{eq:Delta_mu_bounds}
    \|\Delta \mu_t(b^\prime,\widetilde b)\|\leq r_t\,,\,\,\,\|\Delta \mu^\prime_t(b^\prime,\widetilde b)\|\leq r_t\,,\,\,\,\|\Delta \mu_t(b^\prime,\widetilde b)-\Delta \mu^\prime_t(b^\prime,\widetilde b)\|\leq r_t\,,
\end{align}
where $r_t$ was defined in \eqref{eq:rt_def}; the first two inequalities follow directly from \eqref{eq:delta_mu_bound} while the proof of the third is almost identical due to $D$ and $D^\prime$ differing only in the last element.  
\begin{remark}\label{remark:general_divergence_bound2}
Note that, similarly to Remark \ref{remark:general_divergence_bound}, Eq.~\eqref{eq:RDP_reverse_FS}  remains true when the R{\'e}nyi divergences $D_\alpha$ are replaced by any other divergence $\mathcal{D}$ that is quasiconvex in both of its arguments, e.g., the hockey-stick divergences.  This fact is  useful for other differential privacy paradigms, though using the quasiconvexity bound does not always lead the tightest possible bounding pair of distributions; see \cite{lebeda2024avoiding}.  
\end{remark}

The key difference between the bound \eqref{eq:RDP_reverse_FS}  and the corresponding result \eqref{eq:RDP_reverse}  in the add/remove adajcency case is that both arguments of the R{\'e}nyi divergence on the right-hand side of \eqref{eq:RDP_reverse_FS} are now Gaussian mixtures with mixing parameter $q$; in the  add/remove adjacency case only one argument is a Gaussian mixture. This difference makes the following computations more involved, though the same Taylor expansion technique can be applied.

Using the definition of R{\'e}nyi divergences we can write
\begin{align}
    &D_\alpha\left( qN_{\Delta \mu_t(b^\prime,\widetilde b),\widetilde\sigma_t^2} +(1-q) N_{0,\widetilde\sigma_t^2}\|qN_{ \Delta \mu^\prime_t(b^\prime,\widetilde b),\widetilde\sigma_t^2} +(1-q) N_{0,\widetilde\sigma_t^2}\right)
    =\frac{1}{\alpha-1}\log\left[F_\alpha(q)\right]\,,\notag\\
    &F_\alpha(q)\coloneqq \int\frac{\left(qN_{\Delta \mu_t(b^\prime,\widetilde b),\widetilde\sigma_t^2}(\theta) +(1-q) N_{0,\widetilde\sigma_t^2}(\theta)\right)^\alpha}{\left(qN_{ \Delta \mu^\prime_t(b^\prime,\widetilde b),\widetilde\sigma_t^2}(\theta) +(1-q) N_{0,\widetilde\sigma_t^2}(\theta)\right)^{\alpha-1}}  d\theta\,.\label{eq:F_alpha_def}
\end{align}

Next we Taylor expand the argument of the logarithm:
\begin{align}\label{eq:H_tilde_taylor}
F_\alpha(q)=\sum_{k=0}^{m-1} \frac{q^k}{k!}\frac{d^k}{dq^k}F_\alpha(0)+E_{\alpha,m}(q)\,,
\end{align}
where here the remainder term is given by
\begin{align}\label{eq:E_int_formula}
E_{\alpha,m}(q)\coloneqq 
  q^{m} \int_0^1 \frac{(1-s)^{m-1}}{(m-1)!}\frac{d^{m}}{dq^{m}}F_\alpha(sq)ds\,.
\end{align}
Using the formula $\frac{d^k}{dx^k}(fg)=\sum_{j=0}^k \binom{k}{j}\frac{d^j}{dx^j}f\frac{d^{k-j}}{dx^{k-j}}g$ along the dominated convergence theorem to justify differentiating under the integral, the derivatives can be computed as follows:
\begin{align}\label{eq:F_deriv_general_q}
&\frac{d^{k}}{dq^{k}}F_\alpha(q)\\
=&\sum_{j=0}^k\binom{k}{j}\!\left(\prod_{\ell=0}^{j-1}(\alpha-\ell)\!\right)\left(\prod_{\ell=0}^{k-j-1}(1-\alpha-\ell)\!\right)\!\!\int\!\! \frac{\left(qN_{\Delta \mu_t(b^\prime,\widetilde b),\widetilde\sigma_t^2}(\theta) +(1-q) N_{0,\widetilde\sigma_t^2}(\theta)\right)^{\alpha-j}}{ \left(qN_{ \Delta \mu^\prime_t(b^\prime,\widetilde b),\widetilde\sigma_t^2}(\theta) +(1-q) N_{0,\widetilde\sigma_t^2}(\theta)\right)^{\alpha+k-j-1}}\notag\\
&\qquad\times\left(N_{\Delta \mu_t(b^\prime,\widetilde b),\widetilde\sigma_t^2}(\theta)-N_{0,\widetilde\sigma_t^2}(\theta)\right)^j\left(N_{ \Delta \mu^\prime_t(b^\prime,\widetilde b),\widetilde\sigma_t^2}(\theta)-N_{0,\widetilde\sigma_t^2}(\theta)\right)^{k-j} d\theta\,.\notag
\end{align}

{\bf Taylor expansion coefficients for $k=0,1,2$:} Evaluating \eqref{eq:F_deriv_general_q} at $q=0$ we have
\begin{align}\label{eq:dk_H_tilde}
 &\frac{d^{k}}{dq^{k}}F_\alpha(0)   \\
 =&\sum_{j=0}^k(-1)^{k-j}\binom{k}{j}\!\left(\prod_{\ell=0}^{j-1}(\alpha-\ell)\!\right)\left(\prod_{\ell=0}^{k-j-1}(\alpha+\ell-1)\!\right)\notag\\
 &\qquad\times\int\! \left(N_{\Delta \mu_t(b^\prime,\widetilde b),\widetilde\sigma_t^2}(\theta)/N_{0,\widetilde\sigma_t^2}(\theta)-1\right)^j\!\left(N_{ \Delta \mu^\prime_t(b^\prime,\widetilde b),\widetilde\sigma_t^2}(\theta)/N_{0,\widetilde\sigma_t^2}(\theta)-1\right)^{k-j}\! N_{0,\widetilde\sigma_t^2}(\theta)d\theta\,.\notag
\end{align}

Note that we have not yet been able to maximize over $b^\prime,\widetilde{b}$; doing so while incorporating the constraints \eqref{eq:Delta_mu_bounds} is nontrivial and constitutes one of the  the primary difficulties in obtaining sufficiently tight bounds. We will make particular attention to the first few terms and then use looser (but still sufficiently tight in practice) approximations for the higher order terms. First note that $F_\alpha(0)=1$ and
\begin{align}
    \frac{d}{dq}F_\alpha(0)   =&-(\alpha-1)\int\left(N_{ \Delta \mu^\prime_t(b^\prime,\widetilde b),\widetilde\sigma_t^2}(\theta)-N_{0,\widetilde\sigma_t^2}(\theta)\right)d\theta\\
        &+\alpha\int\! \left(N_{\Delta \mu_t(b^\prime,\widetilde b),\widetilde\sigma_t^2}(\theta)-N_{0,\widetilde\sigma_t^2}(\theta)\right) d\theta=0\notag\,.
\end{align}
For $k=2$ we can expand \eqref{eq:dk_H_tilde} and then evaluate the integrals by using the formula for the MGF of a Gaussian to obtain
\begin{align}\label{eq:dk_H_tilde_k_2}
 &\frac{d^{2}}{dq^{2}}F_\alpha(0)   \\
=& \alpha(\alpha-1)\left(\exp\left(\frac{\|\Delta \mu_t(b^\prime,\widetilde b)\|^2}{\widetilde\sigma_t^2}\right)+\exp\left(\frac{\|\Delta \mu^\prime_t(b^\prime,\widetilde b)\|^2}{\widetilde\sigma_t^2}\right)-2\exp\left(\frac{\Delta \mu_t(b^\prime,\widetilde b)\cdot \Delta \mu^\prime_t(b^\prime,\widetilde b)}{\widetilde \sigma_t^2}\right)\!\right)\notag\,.
\end{align}
Using the third constraints in Eq.~\eqref{eq:Delta_mu_bounds}  we obtain the bound
\begin{align}
&\exp\left(\frac{\Delta \mu_t(b^\prime,\widetilde b)\cdot \Delta \mu^\prime_t(b^\prime,\widetilde b)}{\widetilde \sigma_t^2}\right)\\
=&\exp\left(\frac{\|\Delta \mu_t(b^\prime,\widetilde b)\|^2+\|\Delta \mu^\prime_t(b^\prime,\widetilde b)\|^2-\|\Delta \mu_t(b^\prime,\widetilde b)-\Delta \mu^\prime_t(b^\prime,\widetilde b)\|^2}{2\widetilde \sigma_t^2}\right)\notag\\
\geq &\exp\left(\frac{\|\Delta \mu_t(b^\prime,\widetilde b)\|^2+\|\Delta \mu^\prime_t(b^\prime,\widetilde b)\|^2-r_t^2}{2\widetilde \sigma_t^2}\right)\,.\notag
\end{align}
Elementary calculus along with the bounds $\|\Delta \mu_t(b^\prime,\widetilde b)\|\leq r_t$, $\|\Delta \mu^\prime_t(b^\prime,\widetilde b)\|\leq r_t$ then imply
\begin{align}\label{eq:F2_bound}
 \frac{d^{2}}{dq^{2}}F_\alpha(0)  
\leq& 2\alpha(\alpha-1)\left(e^{r_t^2/\widetilde{\sigma_t}^2}-e^{r_t^2/(2\widetilde{\sigma_t}^2)}\right)=2\alpha(\alpha-1)\left(e^{4/{\sigma_t}^2}-e^{2/{\sigma_t}^2}\right)\,,
\end{align}
where we used \eqref{eq:sigma_tilde_over_r}.
These terms are sufficient to precisely capture the leading-order behavior.

{\bf Bounding the  coefficients for $k\geq 3$:} For the higher-order terms we obtain an upper bound which relies on the following lemmas. It will be useful to phrase various bounds in terms of the  $\chi^\beta$ divergences.
\begin{definition}\label{def:chi_beta}
 For $\beta> 0$ define the $\chi^\beta$ divergences 
 \begin{align}
     \mathcal{D}_{\chi^\beta}(Q\|P)=E_P[|dQ/dP-1|^\beta]\,.
 \end{align}
 It will also be convenient to adopt the notation $\mathcal{D}_{\chi^0}\coloneqq 1$, despite it not defining a divergence.
 \end{definition}
The following property is a imply consequence of H{\"o}lder's inequality.
\begin{lemma}\label{lemma:chi_beta_holder}
    Let $\gamma\geq \beta>0$. Then
    \begin{align}
\mathcal{D}_{\chi^\beta}(Q\|P)\leq \mathcal{D}_{\chi^\gamma}(Q\|P)^{\beta/\gamma}\,.
    \end{align}
\end{lemma}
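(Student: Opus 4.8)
The plan is to recognize this as an immediate consequence of H\"older's (equivalently Jensen's) inequality applied to the nonnegative random variable $X\coloneqq |dQ/dP-1|$ under the probability measure $P$. With this notation $\mathcal{D}_{\chi^\beta}(Q\|P)=E_P[X^\beta]$ and $\mathcal{D}_{\chi^\gamma}(Q\|P)=E_P[X^\gamma]$, so the claim reduces to the standard moment inequality $E_P[X^\beta]\leq (E_P[X^\gamma])^{\beta/\gamma}$ for $0<\beta\leq\gamma$.

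First I would dispose of the trivial case $\gamma=\beta$, where the inequality is an equality. For $\gamma>\beta$, I would apply H\"older's inequality with conjugate exponents $p=\gamma/\beta>1$ and $p'=\gamma/(\gamma-\beta)$, writing
\[
E_P[X^\beta]=E_P[X^\beta\cdot 1]\leq\left(E_P[(X^\beta)^p]\right)^{1/p}\left(E_P[1]\right)^{1/p'}=\left(E_P[X^\gamma]\right)^{\beta/\gamma},
\]
where we used $E_P[1]=1$ since $P$ is a probability measure. Equivalently, one can phrase this via Jensen's inequality: the map $t\mapsto t^{\gamma/\beta}$ is convex on $[0,\infty)$ because $\gamma/\beta\geq 1$, hence $(E_P[X^\beta])^{\gamma/\beta}\leq E_P[X^\gamma]$, and raising both sides to the power $\beta/\gamma\leq 1$ (a monotone operation on nonnegative reals) yields the stated bound.

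There is essentially no obstacle here; the only points worth a word of care are that $P$ is a probability measure (so the extra factor of $1$ contributes nothing), that $X$ is well-defined $P$-a.e.\ (which holds whenever $Q\ll P$, as is the case for the Gaussian mixtures under consideration), and that the inequality holds trivially, with both sides possibly infinite, even without integrability hypotheses. The convention $\mathcal{D}_{\chi^0}\coloneqq 1$ plays no role since $\beta>0$ by assumption.
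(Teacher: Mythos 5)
Your proof is correct and matches the paper's intent exactly: the paper states this lemma without proof, remarking only that it is a simple consequence of H\"older's inequality, and your write-up supplies precisely that H\"older (or equivalently Jensen) argument with the conjugate exponents $\gamma/\beta$ and $\gamma/(\gamma-\beta)$. Nothing further is needed.
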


 The next lemma will be used to obtain a uniform bound, i.e., not dependent on $b^\prime$, $\widetilde b$, or $\theta_t$.
\begin{lemma}\label{lemma:chi_beta}
   For $k\geq 0$ an  integer we have
\begin{align}\label{eq:mathcal_D_bound}
    &\max_{b^\prime,\widetilde{b}}\mathcal{D}_{\chi^{k}}\left(N_{\Delta \mu_t(b^\prime,\widetilde b),\widetilde\sigma_t^2}\|N_{0,\widetilde\sigma_t^2}\right)     \leq \widetilde{B}_{\sigma_t,k}\,,\\
      &\max_{b^\prime,\widetilde{b}}\mathcal{D}_{\chi^{k}}\left(N_{\Delta \mu_t(b^\prime,\widetilde b),\widetilde\sigma_t^2}\|N_{\Delta \mu^\prime_t(b^\prime,\widetilde b),\widetilde\sigma_t^2}\right)     \leq \widetilde{B}_{\sigma_t,k}\notag\,,
\end{align}
where $\widetilde{B}$ was defined in \eqref{eq:B_tilde_def}.  These bounds also hold  with $\Delta\mu_t$ and $\Delta\mu_t^\prime$ interchanged.
\end{lemma}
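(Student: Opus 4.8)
The plan is as follows. The case $k=0$ is immediate: $\mathcal{D}_{\chi^0}\coloneqq 1$ and, since $0$ is even, $\widetilde{B}_{\sigma_t,0}=M_{\sigma_t,0}=1$, so assume $k\geq 1$. I would proceed in three parts. First, reduce each $\chi^k$ divergence between two Gaussians with common covariance $\widetilde\sigma_t^2 I$ to a one-dimensional divergence depending only on the distance between the means: for any vectors $a,b$ the likelihood ratio $dN_{a,\widetilde\sigma_t^2}/dN_{b,\widetilde\sigma_t^2}(\theta)$ depends on $\theta$ only through its component along $a-b$, and for $k\geq 1$ the quantity $\mathcal{D}_{\chi^k}$ is an $f$-divergence (with $f(x)=|x-1|^k$), hence invariant under applying the same affine isometry to both arguments; translating by $-b$ and rotating $a-b$ onto a coordinate axis gives $\mathcal{D}_{\chi^k}(N_{a,\widetilde\sigma_t^2}\|N_{b,\widetilde\sigma_t^2})=\mathcal{D}_{\chi^k}(N_{\|a-b\|,\widetilde\sigma_t^2}\|N_{0,\widetilde\sigma_t^2})$ with one-dimensional Gaussians on the right. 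Taking $(a,b)=(\Delta\mu_t(b^\prime,\widetilde b),0)$ and $(a,b)=(\Delta\mu_t(b^\prime,\widetilde b),\Delta\mu_t^\prime(b^\prime,\widetilde b))$ reduces the two left-hand sides of \eqref{eq:mathcal_D_bound} to $\mathcal{D}_{\chi^k}(N_{r,\widetilde\sigma_t^2}\|N_{0,\widetilde\sigma_t^2})$ with $r=\|\Delta\mu_t(b^\prime,\widetilde b)\|$ and $r=\|\Delta\mu_t(b^\prime,\widetilde b)-\Delta\mu_t^\prime(b^\prime,\widetilde b)\|$ respectively.

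Second, I would show that $\mathcal{D}_{\chi^k}(N_{r,\widetilde\sigma_t^2}\|N_{0,\widetilde\sigma_t^2})$ is nondecreasing in $r\geq 0$. Fix $0\leq r_1\leq r_2$, set $\lambda=r_1/r_2\in[0,1]$, and let $K$ be the Markov kernel $X\mapsto\lambda X+W$ with $W\sim N_{0,(1-\lambda^2)\widetilde\sigma_t^2}$ independent of $X$. Then $K$ sends $N_{r_2,\widetilde\sigma_t^2}\mapsto N_{r_1,\widetilde\sigma_t^2}$ and $N_{0,\widetilde\sigma_t^2}\mapsto N_{0,\widetilde\sigma_t^2}$, so the data-processing inequality for the $f$-divergence $\mathcal{D}_{\chi^k}$ gives $\mathcal{D}_{\chi^k}(N_{r_1,\widetilde\sigma_t^2}\|N_{0,\widetilde\sigma_t^2})\leq\mathcal{D}_{\chi^k}(N_{r_2,\widetilde\sigma_t^2}\|N_{0,\widetilde\sigma_t^2})$. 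The DPI applies because $|x-1|^k$ is convex on $[0,\infty)$, the range of a likelihood ratio, and every quantity involved is finite since the relevant Gaussian moment generating functions exist. This is the same mechanism underlying the mean-rescaling monotonicity of the R\'enyi divergence used in \cite{mironov2019r}.

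Third, combine everything. By \eqref{eq:Delta_mu_bounds} every separation $r$ appearing above is at most $r_t$, so the two left-hand sides of \eqref{eq:mathcal_D_bound} are bounded by $\mathcal{D}_{\chi^k}(N_{r_t,\widetilde\sigma_t^2}\|N_{0,\widetilde\sigma_t^2})$; scale invariance of $\mathcal{D}_{\chi^k}$ together with $\widetilde\sigma_t/r_t=\sigma_t/2$ from \eqref{eq:sigma_tilde_over_r} rewrites this as $\mathcal{D}_{\chi^k}(N_{1,\sigma_t^2/4}\|N_{0,\sigma_t^2/4})=B_{\sigma_t,k}$, which is $\leq\widetilde{B}_{\sigma_t,k}$ by \eqref{eq:B_m_def}--\eqref{eq:B_tilde_def}. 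The versions with $\Delta\mu_t$ and $\Delta\mu_t^\prime$ interchanged follow identically, using $\|\Delta\mu_t^\prime(b^\prime,\widetilde b)\|\leq r_t$ and the symmetry of $\|\Delta\mu_t-\Delta\mu_t^\prime\|$.

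The main obstacle is the monotonicity in $r$ of step two: a direct differentiation in $r$ is workable but unwieldy, whereas the data-processing argument is clean, and the one thing that genuinely needs checking there is that $f(x)=|x-1|^k$ is convex on $[0,\infty)$ — which holds precisely because $k$ is an integer with $k\geq 1$ (for non-integer $k\in(0,1)$ convexity fails, but that case does not arise here).
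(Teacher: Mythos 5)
Your proof is correct and follows essentially the same approach as the paper: the paper also reduces to monotonicity of $\mathcal{D}_{\chi^k}(N_{r,\widetilde\sigma_t^2}\|N_{0,\widetilde\sigma_t^2})$ in the mean separation $r$ (by citing the argument from Section 2 of Mironov et al., which is precisely the contraction-plus-noise Markov kernel you construct, justified there, as here, via convexity of $f(x)=|x-1|^k$ for integer $k\geq 1$), then rescales using $\widetilde\sigma_t/r_t=\sigma_t/2$ and invokes the bounds \eqref{eq:Delta_mu_bounds} and \eqref{eq:B_tilde_def}. The only differences are cosmetic: you spell out the kernel and the reduction to one dimension explicitly, and you handle $k=0$ as a separate trivial case, whereas the paper leaves both implicit.
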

\begin{proof}
$\mathcal{D}_{\chi^k}$ is the  $f$ divergence corresponding to $f_\beta(y)=|y-1|^k$, therefore it is jointly convex in $(Q,P)$.  Therefore the same argument as  in Section 2 of \cite{mironov2019r} (which only relies on the convexity of the divergence) implies
    \begin{align}
    &\mathcal{D}_{\chi^{k}}\left(N_{\Delta \mu_t(b^\prime,\widetilde b),\widetilde\sigma_t^2}\|N_{0,\widetilde\sigma_t^2}\right)     \leq\mathcal{D}_{\chi^{k}}\left(N_{r_t,\widetilde\sigma_t^2}\|N_{0,\widetilde\sigma_t^2}\right) \\
        =&\mathcal{D}_{\chi^{k}}\left(N_{1,\sigma_t^2/4}\|N_{0,\sigma_t^2/4}\right)=B_{\sigma_t,k}\leq\widetilde{B}_{\sigma_t,k}\notag\,,
\end{align}
where we used \eqref{eq:Delta_mu_bounds} and \eqref{eq:B_tilde_def}. The second result in \eqref{eq:mathcal_D_bound} is proven in a similar manner, where we must also change variables to center the second Gaussian at zero and then employ the third bound in \eqref{eq:Delta_mu_bounds}.
    \end{proof}

    The second lemma we require is a slight improvement on  Lemma 24 of \citep{wang2019subsampled}.  
\begin{lemma}\label{lemma:Wang24}
    Let $p,q,r$ be strictly positive probability densities and $k\geq 1$ be an integer. Then
    \begin{align}
     &\int \left(\frac{p(\theta)-q(\theta)}{r(\theta)}\right)^k r(\theta) d\theta\\
     \leq&\mathcal{D}_{\chi^k}(q\|p)+\mathcal{D}_{\chi^k}(p\|q)+\mathcal{D}_{\chi^k}(p\|r)+\begin{cases}
       \mathcal{D}_{\chi^k}(q\|r)&\text{ if } k\text{ is even}\\
        0 &\text{ if } k\text{ is odd}
     \end{cases}\,.\notag
    \end{align}
\end{lemma}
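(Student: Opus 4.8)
\emph{Proof plan.} The plan is to reduce the claim to a pointwise inequality between integrands after normalizing everything by $r$, and then integrate over a suitably chosen partition of the sample space. Writing $x=p(\theta)/r(\theta)$ and $y=q(\theta)/r(\theta)$ — both strictly positive by hypothesis — the quantity to be bounded has integrand $(x-y)^k r$, while the four candidate upper-bound terms have integrands $|x-y|^k y^{1-k} r$, $|x-y|^k x^{1-k} r$, $|x-1|^k r$, and $|y-1|^k r$, whose integrals over the whole space are $\mathcal{D}_{\chi^k}(p\|q)$, $\mathcal{D}_{\chi^k}(q\|p)$, $\mathcal{D}_{\chi^k}(p\|r)$, and $\mathcal{D}_{\chi^k}(q\|r)$ respectively. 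So it suffices to produce a partition of $\{x,y>0\}$ on each cell of which $(x-y)^k$ is dominated pointwise by one of these four nonnegative functions; one then bounds the contribution of that cell by the corresponding divergence by enlarging the domain of integration back to the whole space, which only adds nonnegative terms.

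First I would dispose of the case $k$ odd. On $\{p<q\}=\{x<y\}$ the integrand $(x-y)^k$ is negative, so that region is simply discarded. On $\{x\ge y\}$ we have $(x-y)^k=|x-y|^k$, and I split further by whether $y\le 1$ or $y>1$: when $y\le 1$ then $y^{1-k}\ge 1$, so $(x-y)^k\le |x-y|^k y^{1-k}$, contributing at most $\mathcal{D}_{\chi^k}(p\|q)$; when $y>1$ then also $x>1$ and $0\le x-y\le x-1$, so $(x-y)^k\le(x-1)^k=|x-1|^k$, contributing at most $\mathcal{D}_{\chi^k}(p\|r)$. This already yields the claimed bound for odd $k$ (in fact even without the $\mathcal{D}_{\chi^k}(q\|p)$ term, so the stated inequality follows a fortiori).

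For $k$ even the integrand is nonnegative everywhere and no region may be thrown away, so the crux is a genuine four-way partition of $\{x,y>0\}$: (i) $y\le1$; (ii) $y>1$ and $x\le1$; (iii) $x\ge y>1$; (iv) $y>x>1$. On (i), $(x-y)^k\le|x-y|^k y^{1-k}$ as above; on (ii), symmetrically $x^{1-k}\ge1$ gives $(x-y)^k\le|x-y|^k x^{1-k}$; on (iii), $0\le x-y\le x-1$ gives $(x-y)^k\le|x-1|^k$; on (iv), $0\le y-x\le y-1$ gives $(x-y)^k=(y-x)^k\le|y-1|^k$. Summing the four contributions gives exactly $\mathcal{D}_{\chi^k}(p\|q)+\mathcal{D}_{\chi^k}(q\|p)+\mathcal{D}_{\chi^k}(p\|r)+\mathcal{D}_{\chi^k}(q\|r)$. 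The main obstacle — and precisely where the analogous Lemma 24 of \citet{wang2019subsampled} is lossy — is cells (iii) and (iv): there both $p$ and $q$ exceed $r$, so neither a $\chi^k(p\|q)$- nor a $\chi^k(q\|p)$-type bound is available (replacing $r$ in the denominator by $\min\{p,q\}$ points the wrong way), and one must instead exploit the full ordering of $p,q,r$ to compare $(p-q)^k$ directly with $(p-r)^k$ or $(q-r)^k$; this is what forces the even/odd dichotomy and the appearance of the extra $\mathcal{D}_{\chi^k}(q\|r)$ term in the even case.
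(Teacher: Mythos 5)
Your proof is correct and takes essentially the same approach as the paper's: a four-way partition of the domain according to the orderings of $p$, $q$, $r$, a pointwise bound on each cell (replacing $r^{k-1}$ in the denominator by the smaller $p^{k-1}$ or $q^{k-1}$, or bounding $|p-q|^k$ by $|p-r|^k$ or $|q-r|^k$ on the cell where both $p,q>r$), followed by enlarging each piece back to the full domain — the only cosmetic difference is that the paper's partition opens by testing $r\ge p$ while yours opens by testing $q\le r$, so the quadrant $\{p\le r,\,q\le r\}$ is charged to $\mathcal{D}_{\chi^k}(q\|p)$ in the paper and to $\mathcal{D}_{\chi^k}(p\|q)$ in yours, both of which are valid. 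Your side-observation that the $\mathcal{D}_{\chi^k}(q\|p)$ term can be dropped when $k$ is odd is correct and slightly sharpens the bound, though the lemma as stated (and the paper's proof) keeps it.
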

\begin{remark}\label{remark:Wang_factor_4}
Note that a key difference between our technique and that of \cite{wang2019subsampled} is that we only apply Lemma \ref{lemma:Wang24} to the higher order terms (i.e., when $k\geq 2$) while we bounded the leading contribution \eqref{eq:F2_bound} more precisely. In contrast, \cite{wang2019subsampled} used Lemma \ref{lemma:Wang24} (their Lemma 24) on all terms.
\end{remark}
\begin{proof}
We proceed by breaking the domain of integration into four regions, depending on the sizes of $p,q,r$ and then use the appropriate bounds to replace one of the densities with another:
\begin{align}	
&\int \left(\frac{p(\theta)-q(\theta)}{r(\theta)}\right)^k r(\theta) d\theta\\
=&\int_{r\geq p}  \frac{(p(\theta)-q(\theta))^k}{r(\theta)^{k-1}} d\theta+\int_{r<p,r\geq q}\frac{(p(\theta)-q(\theta))^k}{r(\theta)^{k-1}} d\theta\notag\\
&+\int_{r<p,r<q,q\geq p}  \frac{(p(\theta)-q(\theta))^k}{r(\theta)^{k-1}} d\theta+\int_{r<p,r<q,q<p}\frac{(p(\theta)-q(\theta))^k}{r(\theta)^{k-1}} d\theta\notag\\
\leq&\int_{r\geq p}  \frac{|p(\theta)-q(\theta)|^k}{p(\theta)^{k-1}} d\theta+\int_{r<p,r\geq q}\frac{|p(\theta)-q(\theta)|^k}{q(\theta)^{k-1}} d\theta\notag\\
&+\begin{cases}
    \int_{r<p,r<q,q\geq p}  \frac{(q(\theta)-r(\theta))^k}{r(\theta)^{k-1}} d\theta &\text{ if } k\text{ is even}\\
    0 &\text{ if } k\text{ is odd}\\
\end{cases}+\int_{r<p,r<q,q<p}\frac{(p(\theta)-r(\theta))^k}{r(\theta)^{k-1}} d\theta\notag\\
\leq&\int \frac{|p(\theta)-q(\theta)|^k}{p(\theta)^{k-1}} d\theta+\int\frac{|p(\theta)-q(\theta)|^k}{q(\theta)^{k-1}} d\theta\notag\\
&+\begin{cases}
    \int  \frac{|q(\theta)-r(\theta)|^k}{r(\theta)^{k-1}} d\theta &\text{ if } k\text{ is even}\\
    0 &\text{ if } k\text{ is odd}\\
\end{cases}+\int\frac{(p(\theta)-r(\theta))^k}{r(\theta)^{k-1}} d\theta\,.\notag
\end{align}
This completes the proof.
\end{proof}

\begin{lemma}\label{lemma:prod_int_bound}
Let $\beta_1,\beta_2>0$ with $\beta_1+\beta_2$ and integer.  Then
   \begin{align}
   &  \max_{b^\prime,\widetilde{b}}  \int\left|N_{\Delta \mu_t(b^\prime,\widetilde b),\widetilde\sigma_t^2}(\theta)/N_{0,\widetilde\sigma_t^2}(\theta)-1\right|^{\beta_1}\!\left|N_{ \Delta \mu^\prime_t(b^\prime,\widetilde b),\widetilde\sigma_t^2}(\theta)/N_{0,\widetilde\sigma_t^2}(\theta)-1\right|^{\beta_2}\! N_{0,\widetilde\sigma_t^2}(\theta)d\theta\\
     \leq&\widetilde{B}_{\sigma_t,\beta_1+\beta_2}\notag\,,
 \end{align}
 where $\widetilde{B}_{\sigma,k}$ was defined in  \eqref{eq:B_tilde_def}. 
\end{lemma}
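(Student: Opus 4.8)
The plan is to reduce this joint estimate to the single-density bounds of Lemma~\ref{lemma:chi_beta} through a single application of H\"older's inequality. Fix $b^\prime,\widetilde b$ and abbreviate $Q\coloneqq N_{\Delta \mu_t(b^\prime,\widetilde b),\widetilde\sigma_t^2}$, $Q^\prime\coloneqq N_{\Delta \mu^\prime_t(b^\prime,\widetilde b),\widetilde\sigma_t^2}$, and $P\coloneqq N_{0,\widetilde\sigma_t^2}$, so that the integral in question equals $\int |dQ/dP-1|^{\beta_1}|dQ^\prime/dP-1|^{\beta_2}\,dP$. Put $k\coloneqq\beta_1+\beta_2$, a positive integer by hypothesis, and take the conjugate exponents $p\coloneqq k/\beta_1$ and $p^\prime\coloneqq k/\beta_2$, which satisfy $p,p^\prime\geq 1$ and $1/p+1/p^\prime=1$. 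H\"older's inequality with respect to the probability measure $P$ then gives
\begin{align*}
\int |dQ/dP-1|^{\beta_1}|dQ^\prime/dP-1|^{\beta_2}\,dP
\leq \left(\int |dQ/dP-1|^{k}\,dP\right)^{\beta_1/k}\left(\int |dQ^\prime/dP-1|^{k}\,dP\right)^{\beta_2/k}
=\mathcal{D}_{\chi^{k}}(Q\|P)^{\beta_1/k}\,\mathcal{D}_{\chi^{k}}(Q^\prime\|P)^{\beta_2/k}\,.
\end{align*}

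Next I would invoke Lemma~\ref{lemma:chi_beta}: since $k$ is a non-negative integer, $\mathcal{D}_{\chi^{k}}(Q\|P)\leq\widetilde B_{\sigma_t,k}$, and, by the interchanged version of that lemma (equivalently, relabelling $\Delta\mu_t\leftrightarrow\Delta\mu^\prime_t$, both of which have norm at most $r_t$ by \eqref{eq:Delta_mu_bounds}), also $\mathcal{D}_{\chi^{k}}(Q^\prime\|P)\leq\widetilde B_{\sigma_t,k}$; crucially, both bounds are uniform in $b^\prime,\widetilde b$. Substituting these into the H\"older estimate and using $\beta_1/k+\beta_2/k=1$ yields
\begin{align*}
\int |dQ/dP-1|^{\beta_1}|dQ^\prime/dP-1|^{\beta_2}\,dP \leq \widetilde B_{\sigma_t,k}^{\beta_1/k}\,\widetilde B_{\sigma_t,k}^{\beta_2/k}=\widetilde B_{\sigma_t,\beta_1+\beta_2}\,.
\end{align*}
As the right-hand side is independent of $b^\prime,\widetilde b$, taking the maximum over $(b^\prime,\widetilde b)$ completes the proof.

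There is no serious obstacle here; the one hypothesis that must genuinely be used is the integrality of $\beta_1+\beta_2$, since that is precisely what permits Lemma~\ref{lemma:chi_beta} to replace the generally non-closed-form quantity $B_{\sigma_t,\beta_1+\beta_2}$ by the explicitly computable upper bound $\widetilde B_{\sigma_t,\beta_1+\beta_2}$ of \eqref{eq:B_tilde_def}. (When $\beta_1$ and $\beta_2$ are themselves integers, a domain-splitting argument in the spirit of Lemma~\ref{lemma:Wang24} is also available, but the H\"older route is shorter and avoids that case distinction.)
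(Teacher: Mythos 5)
Your proof is correct and follows essentially the same route as the paper: you apply H\"older's inequality with conjugate exponents $p=k/\beta_1$, $p^\prime=k/\beta_2$ (which the paper arrives at via a brief heuristic optimization, noting $p=1+\beta_2/\beta_1=k/\beta_1$) and then invoke Lemma~\ref{lemma:chi_beta} to bound each factor by $\widetilde B_{\sigma_t,k}$. The only cosmetic difference is that the paper first routes through Lemma~\ref{lemma:chi_beta_holder} with ceilings $\lceil\beta_1 p\rceil$, $\lceil\beta_2 p^\prime\rceil$, which is vacuous here since $\beta_1 p=\beta_2 p^\prime=\beta_1+\beta_2$ is already an integer; you correctly observe this and skip straight to the conclusion.
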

\begin{proof}
    Let $p,q>1$ be conjugate exponents, i.e., they satisfy $1/p+1/q=1$.  Using H{\"o}lder's inequality we can compute
   \begin{align}
   & \int \left|N_{\Delta \mu_t(b^\prime,\widetilde b),\widetilde\sigma_t^2}(\theta)/N_{0,\widetilde\sigma_t^2}(\theta)-1\right|^{\beta_1}\!\left|N_{ \Delta \mu^\prime_t(b^\prime,\widetilde b),\widetilde\sigma_t^2}(\theta)/N_{0,\widetilde\sigma_t^2}(\theta)-1\right|^{\beta_2}\! N_{0,\widetilde\sigma_t^2}(\theta)d\theta\\
     \leq&\mathcal{D}_{\chi^{\beta_1p}}\left( N_{\Delta \mu_t(b^\prime,\widetilde b),\widetilde\sigma_t^2}\|N_{0,\widetilde\sigma_t^2}\right)^{1/p}\mathcal{D}_{\chi^{\beta_2q}}\left( N_{\Delta \mu^\prime_t(b^\prime,\widetilde b),\widetilde\sigma_t^2}\|N_{0,\widetilde\sigma_t^2}\right)^{1/q}\,.\notag
 \end{align} 
 We now use a simple heuristic that allow for $p,q$ to be chosen so as to minimize the right-hand side of this upper bound when $\beta_1,\beta_2$ are large. Lemma \ref{lemma:chi_beta} together with \eqref{eq:B_tilde_def} and \eqref{eq:M_k_formula} suggest that the two factors on the right-hand side approximately behave like $\exp(2p\beta_1^2/\sigma_t^2)$ and $\exp(2q\beta_2^2/\sigma_t^2)$ respectively. Therefore, to make the upper bound as tight as possible, we want to to minimize $p\beta_1^2+q\beta_2^2$ subject to $p>1$, $q=p/(p-1)$. Simple calculus shows that the minimum occurs at $p\coloneqq 1+\beta_2/\beta_1$, $q= 1+\beta_1/\beta_2$.   Making this choice, using Lemma \ref{lemma:chi_beta_holder} to convert to integer orders, and then employing Lemma \ref{eq:mathcal_D_bound} we find
    \begin{align}
   & \int \left|N_{\Delta \mu_t(b^\prime,\widetilde b),\widetilde\sigma_t^2}(\theta)/N_{0,\widetilde\sigma_t^2}(\theta)-1\right|^{\beta_1}\!\left|N_{ \Delta \mu^\prime_t(b^\prime,\widetilde b),\widetilde\sigma_t^2}(\theta)/N_{0,\widetilde\sigma_t^2}(\theta)-1\right|^{\beta_2}\! N_{0,\widetilde\sigma_t^2}(\theta)d\theta\\
     \leq&\mathcal{D}_{\chi^{\lceil\beta_1p\rceil}}\left( N_{\Delta \mu_t(b^\prime,\widetilde b),\widetilde\sigma_t^2}\|N_{0,\widetilde\sigma_t^2}\right)^{\beta_1/\lceil\beta_1p\rceil}\mathcal{D}_{\chi^{\lceil\beta_2q\rceil}}\left( N_{\Delta \mu^\prime_t(b^\prime,\widetilde b),\widetilde\sigma_t^2}\|N_{0,\widetilde\sigma_t^2}\right)^{\beta_2/\lceil\beta_2q\rceil}\notag\\
          \leq& \left(\widetilde{B}_{\sigma_t,\lceil\beta_1p\rceil}\right)^{\beta_1/\lceil\beta_1p\rceil}\left(\widetilde{B}_{\sigma_t,\lceil\beta_2q\rceil}\right)^{\beta_2/\lceil\beta_2q\rceil}\notag\,.
 \end{align} 
In the case where $\beta_1+\beta_2$ is an  integer the above can be simplified via
 \begin{align}
  \left(\widetilde{B}_{\sigma_t,\lceil\beta_1p\rceil}\right)^{\beta_1/\lceil\beta_1p\rceil}\left(\widetilde{B}_{\sigma_t,\lceil\beta_2q\rceil}\right)^{\beta_2/\lceil\beta_2q\rceil}=\widetilde{B}_{\sigma_t,\beta_1+\beta_2}\,.\notag   
 \end{align}
\end{proof}
We use these lemmas to bound the higher order derivatives as follows. First  rewrite \eqref{eq:dk_H_tilde} as
\begin{align}\label{eq:dk_H_tilde_bound}
 &\frac{d^{k}}{dq^{k}}F_\alpha(0)   \\
 =&(\alpha-1)\alpha^{k-1}\sum_{j=0}^k(-1)^{k-j}\binom{k}{j}\notag\\
 &\qquad\times\int\! \left(N_{\Delta \mu_t(b^\prime,\widetilde b),\widetilde\sigma_t^2}(\theta)/N_{0,\widetilde\sigma_t^2}(\theta)-1\right)^j\!\left(N_{ \Delta \mu^\prime_t(b^\prime,\widetilde b),\widetilde\sigma_t^2}(\theta)/N_{0,\widetilde\sigma_t^2}(\theta)-1\right)^{k-j}\! N_{0,\widetilde\sigma_t^2}(\theta)d\theta\notag\\
 &+ (\alpha-1)\alpha^{k-1}\sum_{j=0}^k(-1)^{k-j}\binom{k}{j}\!\left[\frac{\alpha}{\alpha-1}\left(\prod_{\ell=0}^{j-1}(1-\ell/\alpha)\!\right)\left(\prod_{\ell=0}^{k-j-1}(1+(\ell-1)/\alpha)\!\right)-1\right]\notag\\
 &\qquad\times\int\! \left(N_{\Delta \mu_t(b^\prime,\widetilde b),\widetilde\sigma_t^2}(\theta)/N_{0,\widetilde\sigma_t^2}(\theta)-1\right)^j\!\left(N_{ \Delta \mu^\prime_t(b^\prime,\widetilde b),\widetilde\sigma_t^2}(\theta)/N_{0,\widetilde\sigma_t^2}(\theta)-1\right)^{k-j}\! N_{0,\widetilde\sigma_t^2}(\theta)d\theta\notag\\
 =&(\alpha-1)\alpha^{k-1}\int\!  \left(N_{\Delta \mu_t(b^\prime,\widetilde b),\widetilde\sigma_t^2}(\theta)/N_{0,\widetilde\sigma_t^2}(\theta)-N_{ \Delta \mu^\prime_t(b^\prime,\widetilde b),\widetilde\sigma_t^2}(\theta)/N_{0,\widetilde\sigma_t^2}(\theta)\right)^{k}\! N_{0,\widetilde\sigma_t^2}(\theta)d\theta\notag\\
 &+ (\alpha-1)\alpha^{k-1}\sum_{j=0}^k(-1)^{k-j}\binom{k}{j}\!\left[\frac{\alpha}{\alpha-1}\left(\prod_{\ell=0}^{j-1}(1-\ell/\alpha)\!\right)\left(\prod_{\ell=0}^{k-j-1}(1+(\ell-1)/\alpha)\!\right)-1\right]\notag\\
 &\qquad\times\int\! \left(N_{\Delta \mu_t(b^\prime,\widetilde b),\widetilde\sigma_t^2}(\theta)/N_{0,\widetilde\sigma_t^2}(\theta)-1\right)^j\!\left(N_{ \Delta \mu^\prime_t(b^\prime,\widetilde b),\widetilde\sigma_t^2}(\theta)/N_{0,\widetilde\sigma_t^2}(\theta)-1\right)^{k-j}\! N_{0,\widetilde\sigma_t^2}(\theta)d\theta\notag\,.
 \end{align}
The first family of integrals can be bounded using Lemma \ref{lemma:Wang24} and then Lemma \ref{lemma:chi_beta} to obtain
\begin{align}
 &\int\!  \left(N_{\Delta \mu_t(b^\prime,\widetilde b),\widetilde\sigma_t^2}(\theta)/N_{0,\widetilde\sigma_t^2}(\theta)-N_{ \Delta \mu^\prime_t(b^\prime,\widetilde b),\widetilde\sigma_t^2}(\theta)/N_{0,\widetilde\sigma_t^2}(\theta)\right)^{k}\! N_{0,\widetilde\sigma_t^2}(\theta)d\theta \\
 \leq& \begin{cases}
  4M_{\sigma_t,k}&\text{ if } k\text{ is even}\\[6pt]
        3M_{\sigma_t,k-1}^{1/2}M_{\sigma_t,k+1}^{1/2} &\text{ if } k\text{ is odd}
     \end{cases}\,.\notag
\end{align}
The  second family of integrals can be bounded by using Lemma \ref{lemma:prod_int_bound} (when $j,k-j>0$) and Lemma \ref{lemma:chi_beta} (when $j=0$ or $j=k$), which yields
\begin{align}\
  &  \left|\int\! \left(N_{\Delta \mu_t(b^\prime,\widetilde b),\widetilde\sigma_t^2}(\theta)/N_{0,\widetilde\sigma_t^2}(\theta)-1\right)^j\!\left(N_{ \Delta \mu^\prime_t(b^\prime,\widetilde b),\widetilde\sigma_t^2}(\theta)/N_{0,\widetilde\sigma_t^2}(\theta)-1\right)^{k-j}\! N_{0,\widetilde\sigma_t^2}(\theta)d\theta\right|\\
    \leq&
\widetilde{B}_{\sigma_t,k}\,,\notag
\end{align}
 Therefore we find
\begin{align}
 &\frac{d^{k}}{dq^{k}}F_\alpha(0)   \\
\leq&(\alpha-1)\alpha^{k-1}\begin{cases}
  4M_{\sigma_t,k}&\text{ if } k\text{ is even}\\[6pt]
        3M_{\sigma_t,k-1}^{1/2}M_{\sigma_t,k+1}^{1/2} &\text{ if } k\text{ is odd}
     \end{cases}\notag\\
 &+ (\alpha-1)\alpha^{k-1}\sum_{j=0}^k\binom{k}{j}\!\left|\frac{\alpha}{\alpha-1}\left(\prod_{\ell=0}^{j-1}(1-\ell/\alpha)\!\right)\left(\prod_{\ell=0}^{k-j-1}(1+(\ell-1)/\alpha)\!\right)-1\right| \widetilde{B}_{\sigma_t,k}\notag\\
  \coloneqq &\widetilde{F}_{\alpha,\sigma_t,k}\,, \label{eq:F_prime_k_bound}
 \end{align}
where   $M$ and $\widetilde{B}$ are given by  \eqref{eq:M_k_formula} and \eqref{eq:B_tilde_def} respectively.

{\bf Bounding the remainder term:} We now proceed to bound the remainder term \eqref{eq:E_int_formula}. Using  \eqref{eq:F_deriv_general_q}, for $q<1$ we can compute
\begin{align}
    &|E_{\alpha,m}(q)|\leq  
  q^{m} \int_0^1 \frac{(1-s)^{m-1}}{(m-1)!}\left|\frac{d^{m}}{dq^{m}}F_\alpha(sq)\right|ds\\
  \leq& q^m\sum_{j=0}^m\!\binom{m}{j}\!\!\left(\prod_{\ell=0}^{j-1}|\alpha-\ell|\!\right)\!\!\left(\prod_{\ell=0}^{m-j-1}\!\!\!(\alpha+\ell-1)\!\right)\notag\\
  &\times\int_0^1 \frac{(1-s)^{m-1}}{(m-1)!}\int\!\! \frac{\left(sqN_{\Delta \mu_t(b^\prime,\widetilde b),\widetilde\sigma_t^2}(\theta)/N_{0,\widetilde\sigma_t^2}(\theta) +(1-sq) \right)^{\alpha-j} }{ \left(sqN_{ \Delta \mu^\prime_t(b^\prime,\widetilde b),\widetilde\sigma_t^2}(\theta)/N_{0,\widetilde\sigma_t^2}(\theta) +(1-sq) \right)^{\alpha+m-j-1}}\notag\\
&\qquad\quad\times\left|N_{\Delta \mu_t(b^\prime,\widetilde b),\widetilde\sigma_t^2}(\theta)/N_{0,\widetilde\sigma_t^2}(\theta)-1\right|^j\left|N_{ \Delta \mu^\prime_t(b^\prime,\widetilde b),\widetilde\sigma_t^2}(\theta)/N_{0,\widetilde\sigma_t^2}(\theta)-1\right|^{m-j}  N_{0,\widetilde\sigma_t^2}(\theta) d\theta ds\notag\\
  \leq& \frac{q^m}{m!} \sum_{j=0}^m\left(1-q \right)^{-(\alpha+m-j-1)}\!\binom{m}{j}\!\!\left(\prod_{\ell=0}^{j-1}|\alpha-\ell|\!\right)\!\!\left(\prod_{\ell=0}^{m-j-1}\!\!\!(\alpha+\ell-1)\!\right)K_{\alpha,m,j}(q)\,,\notag\\
  &K_{\alpha,m,j}(q)\coloneqq m\int_0^1 (1-s)^{m-1}\int\!\! \left(sqN_{\Delta \mu_t(b^\prime,\widetilde b),\widetilde\sigma_t^2}(\theta)/N_{0,\widetilde\sigma_t^2}(\theta) +(1-sq) \right)^{\alpha-j} \notag\\
&\qquad\times\left|N_{\Delta \mu_t(b^\prime,\widetilde b),\widetilde\sigma_t^2}(\theta)/N_{0,\widetilde\sigma_t^2}(\theta)-1\right|^j \left|N_{ \Delta \mu^\prime_t(b^\prime,\widetilde b),\widetilde\sigma_t^2}(\theta)/N_{0,\widetilde\sigma_t^2}(\theta)-1\right|^{m-j}  N_{0,\widetilde\sigma_t^2}(\theta) d\theta ds\notag\,,
\end{align}
where to obtain the final inequality we used
 the fact that $\alpha+m-j-1>0$ and
\begin{align}
    sqN_{ \Delta \mu^\prime_t(b^\prime,\widetilde b),\widetilde\sigma_t^2}(\theta)/N_{0,\widetilde\sigma_t^2}(\theta) +(1-sq)\geq 1-q\,,
\end{align}
to upper bound the reciprocal of the denominator.

To bound the integrals $K_{\alpha,m,j}$ we consider the following two cases.

{\bf Case 1: $\alpha-j\leq 0$:}\\
In this case we use
\begin{align}
    \left(sqN_{\Delta \mu_t(b^\prime,\widetilde b),\widetilde\sigma_t^2}(\theta)/N_{0,\widetilde\sigma_t^2}(\theta) +(1-sq) \right)^{\alpha-j}\leq (1-q)^{\alpha-j}
\end{align}
to bound the first factor in the integrand. Then, using Lemma \ref{lemma:prod_int_bound} and  Lemma \ref{lemma:chi_beta}, we can compute
\begin{align}\label{eq:K_bound_case1_derivation}
    &K_{\alpha,m,j}(q)\\
    \leq&(1-q)^{\alpha-j}\!\int\! \left|N_{\Delta \mu_t(b^\prime,\widetilde b),\widetilde\sigma_t^2}(\theta)/N_{0,\widetilde\sigma_t^2}(\theta)-1\right|^j \left|N_{ \Delta \mu^\prime_t(b^\prime,\widetilde b),\widetilde\sigma_t^2}(\theta)/N_{0,\widetilde\sigma_t^2}(\theta)-1\right|^{m-j}  N_{0,\widetilde\sigma_t^2}(\theta) d\theta\notag\\
    \leq&(1-q)^{\alpha-j}
\widetilde{B}_{\sigma_t,m} \,.\notag
\end{align}

{\bf Case 2: $\alpha-j>0$:}\\
In this case use the bound $x^{\alpha-j}\leq 1+x^{\lceil\alpha\rceil-j}$ for all $x>0$, followed by the binomial formula and Lemma \ref{lemma:prod_int_bound} to compute
\begin{align}
&K_{\alpha,m,j}(q)\leq m\int_0^1 (1-s)^{m-1}\int\!\! \left(1+\left(sqN_{\Delta \mu_t(b^\prime,\widetilde b),\widetilde\sigma_t^2}(\theta)/N_{0,\widetilde\sigma_t^2}(\theta) +(1-sq) \right)^{\lceil\alpha\rceil-j}\right) \\
&\qquad\times\left|N_{\Delta \mu_t(b^\prime,\widetilde b),\widetilde\sigma_t^2}(\theta)/N_{0,\widetilde\sigma_t^2}(\theta)-1\right|^j \left|N_{ \Delta \mu^\prime_t(b^\prime,\widetilde b),\widetilde\sigma_t^2}(\theta)/N_{0,\widetilde\sigma_t^2}(\theta)-1\right|^{m-j}  N_{0,\widetilde\sigma_t^2}(\theta) d\theta ds\notag\\
=&\int\! \left|N_{\Delta \mu_t(b^\prime,\widetilde b),\widetilde\sigma_t^2}(\theta)/N_{0,\widetilde\sigma_t^2}(\theta)-1\right|^j \left|N_{ \Delta \mu^\prime_t(b^\prime,\widetilde b),\widetilde\sigma_t^2}(\theta)/N_{0,\widetilde\sigma_t^2}(\theta)-1\right|^{m-j}  N_{0,\widetilde\sigma_t^2}(\theta) d\theta \notag\\
&+m\sum_{\ell=0}^{\lceil\alpha\rceil-j}q^\ell\binom{\lceil\alpha\rceil-j}{\ell}\int_0^1 (1-s)^{m-1}s^\ell\int\!\! \left(N_{\Delta \mu_t(b^\prime,\widetilde b),\widetilde\sigma_t^2}(\theta)/N_{0,\widetilde\sigma_t^2}(\theta) -1\right)^\ell  \notag\\
&\qquad\times\left|N_{\Delta \mu_t(b^\prime,\widetilde b),\widetilde\sigma_t^2}(\theta)/N_{0,\widetilde\sigma_t^2}(\theta)-1\right|^j \left|N_{ \Delta \mu^\prime_t(b^\prime,\widetilde b),\widetilde\sigma_t^2}(\theta)/N_{0,\widetilde\sigma_t^2}(\theta)-1\right|^{m-j}  N_{0,\widetilde\sigma_t^2}(\theta) d\theta ds\notag\\
\leq&\widetilde{B}_{\sigma_t,m}+\sum_{\ell=0}^{\lceil\alpha\rceil-j}q^\ell\frac{(\lceil\alpha\rceil-j)!m!}{(\lceil\alpha\rceil-j-\ell)! (m+\ell)!} \widetilde{B}_{\sigma_t,\ell+m}\,.\notag
\end{align}

Putting the above two cases together we arrive at the remainder bound  
\begin{align}\label{eq:E_tilde_bound_final}
    &|E_{\alpha,m}(q)|\leq  
  q^{m} \int_0^1 \frac{(1-s)^{m-1}}{(m-1)!}\left|\frac{d^{m}}{dq^{m}}F_\alpha(sq)\right|ds\\
  \leq& \frac{q^m}{m!} \sum_{j=0}^m\left(1-q \right)^{-(\alpha+m-j-1)}\!\binom{m}{j}\!\!\left(\prod_{\ell=0}^{j-1}|\alpha-\ell|\!\right)\!\!\left(\prod_{\ell=0}^{m-j-1}\!\!\!(\alpha+\ell-1)\!\right)\notag\\
  &\qquad\qquad\qquad\times\begin{cases}
   (1-q)^{\alpha-j}
\widetilde{B}_{\sigma_t,m}    &\text{ if }\alpha-j\leq 0\\[6pt]
  \widetilde{B}_{\sigma_t,m}+\sum_{\ell=0}^{\lceil\alpha\rceil-j}q^\ell\frac{(\lceil\alpha\rceil-j)!m!}{(\lceil\alpha\rceil-j-\ell)! (m+\ell)!} \widetilde{B}_{\sigma_t,\ell+m}    &\text{ if }\alpha-j>0
  \end{cases}\notag\\
  \coloneqq & \widetilde{E}_{\alpha,\sigma_t,m}(q) \,.\label{eq:E_tilde_def}
\end{align}

Combining the above results, we obtain the following \fswor-RDP bounds under replace-one adjacency:
\begin{align}\label{eq:one_step_RDP_replace_one_general}
  &\sup_{\theta_t}D_\alpha(p(\theta_{t+1}|\theta_t,D)\|p_t({\theta_{t+1}}|\theta_t),D^\prime)\\
    \leq&\frac{1}{\alpha-1}\!\log\!\!\Bigg[1+q^2\alpha(\alpha-1)\left(e^{4/{\sigma_t}^2}-e^{2/{\sigma_t}^2}\right)+\!\sum_{k=3}^{m-1} \frac{q^k}{k!} \widetilde{F}_{\alpha,\sigma_t,k}+\widetilde{E}_{\alpha,\sigma_t,m}(q)\Bigg],\notag
\end{align}
where  $m\geq 3$ is an integer, $\widetilde{F}_{\alpha,\sigma_t,k}$ is given by \eqref{eq:F_prime_k_bound}, and $\widetilde{E}_{\alpha,\sigma_t,m}(q)$ is given by \eqref{eq:E_tilde_def}; both of these quantities are expressed in terms of   $M$ and $\widetilde{B}$, as  given by  \eqref{eq:M_k_formula} and \eqref{eq:B_tilde_def} respectively. This completes the proof of Theorem \ref{thm:FS_woR_replace_one}. 
\begin{remark}
    The same $M$'s appear many times during in the formulas for $\widetilde{F}_{\alpha,\sigma_t,k}$ and  $\widetilde{E}_{\alpha,\sigma_t,m}(q)$. Therefore, in practice,  we compute all required values once and recall them as needed. 
\end{remark}

\subsection{Comparison with Poisson Subsampling under Replace-one Adjacency}\label{app:Poisson_ro}
In this appendix we provide a precise comparison between RDP for fixed-size and Poisson subsampling under replace-one adjacency, complimenting the intuitive discussion in Section \ref{sec:privacy_comparison}.

Here we let $p^{\text{Poisson}}_t(\theta_{t+1}|\theta_t,D)$ denote the transition probabilities for one step of DP-SGD with Poisson subsampling. By Mirroring the analysis in Section 2 of \cite{mironov2019r}, but now under the assumption that $D$ and $D^\prime$ have the same size and differ in only their last elements, we arrive at the one-step R{\'e}nyi divergence bound
\begin{align}
&D_\alpha\left(p^{\text{Poisson}}_t(\theta_{t+1}|\theta_t,D)\|p^{\text{Poisson}}_t(\theta_{t+1}|\theta_t,D^\prime)\right)\\
\leq&\max_T D_\alpha\left( qN_{\Delta \mu_t(T),\widetilde\sigma_t^2} +(1-q) N_{0,\widetilde\sigma_t^2}\|qN_{ \Delta \mu^\prime_t(T),\widetilde\sigma_t^2} +(1-q) N_{0,\widetilde\sigma_t^2}\right)\,,\notag    \\
       &\Delta \mu_t(T)\coloneqq \mu_t(\theta_t,T\cup\{|D|-1\},D)-\mu_t(\theta_t,T,D)\,,\notag\\
       &\Delta \mu^\prime_t(T)\coloneqq \mu_t(\theta_t,T\cup\{|D|-1\},D^\prime)-\mu_t(\theta_t,T,D)\notag\,,
\end{align}
where $T$ denotes the collection of indexes from $\{0,...,|D|-2\}$ that were randomly selected during Poisson sampling.

This is similar to the result \eqref{eq:RDP_reverse_FS}  for \fswor-subsampling, except here the means satisfy the constraints
\begin{align}\label{eq:Poisson_ro_mean_constraints}
  \| \Delta \mu_t(T)\|\leq r_t/2\,,\,\,\,  \| \Delta \mu^\prime_t(T)\|\leq r_t/2\,,\,\,\, \| \Delta \mu_t(T)-\Delta \mu^\prime_t(T)\|\leq r_t\,,
\end{align}
and these bounds are achieved when the differing elements from $D$ and $D^\prime$ are anti-parallel and saturate the clipping threshold $C$. As we will show, this difference only contributes at higher order.

We now mirror the analysis from earlier in this section, taking the new constraints \eqref{eq:Poisson_ro_mean_constraints} into account.  We again have a Taylor expansion of the form
\begin{align}
&D_\alpha\left( qN_{\Delta \mu_t(T),\widetilde\sigma_t^2} +(1-q) N_{0,\widetilde\sigma_t^2}\|qN_{ \Delta \mu^\prime_t(T),\widetilde\sigma_t^2} +(1-q) N_{0,\widetilde\sigma_t^2}\right)\\
=&\frac{1}{\alpha-1}\log\left[1+\sum_{k=2}^{m-1}\frac{q^k}{k!}\frac{d^k}{dq^k}\widehat{F}_\alpha(0)+\widehat{E}_{\alpha,m}(q)\right]\,.
\end{align}
The coefficient at $k=2$ can be computed as in \eqref{eq:dk_H_tilde}, which gives
\begin{align}\
 &\frac{d^{2}}{dq^{2}}\widehat{F}_\alpha(0)   \\
=& \alpha(\alpha-1)\left(\exp\left(\frac{\|\Delta \mu_t(T)\|^2}{\widetilde\sigma_t^2}\right)+\exp\left(\frac{\|\Delta \mu^\prime_t(T)\|^2}{\widetilde\sigma_t^2}\right)-2\exp\left(\frac{\Delta \mu_t(T)\cdot \Delta \mu^\prime_t(T)}{\widetilde \sigma_t^2}\right)\!\right)\notag\\
\leq&2\alpha(\alpha-1)\left(e^{1/\sigma_t^2}-e^{-1/\sigma_t^2}\right)\,. \notag
\end{align}
This bound is optimal in that equality is achieved when the distinct elements in $D$ and $D^\prime$ are anti-parallel and saturate the clipping bound.  To leading order in $1/\sigma_t^2$ we therefore have
\begin{align}
 \frac{d^{2}}{dq^{2}}\widehat{F}_\alpha(0) \leq 4\alpha(\alpha-1)/\sigma_t^2+O(1/\sigma_t^4)\,,
 \end{align}
which agrees with the leading order behavior of the corresponding term in the \fswor-subsampling case \eqref{eq:F2_bound}.  As the second order coefficient in the Taylor expansion controls the dominant behavior of the RDP bound, we can therefore conclude that Poisson and \fswor subsampling give the same DP bounds under replace-one adjacency to leading order.

Our method for bounding the higher order terms, as in \eqref{eq:F_prime_k_bound}, and remainder, as in \eqref{eq:E_tilde_bound_final}, only uses bounds on the norms of $\Delta\mu^\prime$ and $\mu^\prime$; it does not incorporate the bound that couples these two quantities.  Therefore the corresponding bounds on the higher order terms for Poisson subsampling are obtained by simply replacing $r_t$ with $r_t/2$ in our prior calculations; this is equivalent to replacing $\sigma_t$ with $2\sigma_t$ in \eqref{eq:F_prime_k_bound} and \eqref{eq:E_tilde_bound_final}.  Therefore we arrive at the following non-asymptotic bound for Poisson subsampling under replace-one adjacency.
\begin{theorem}[Poisson-RDP Upper Bounds for Replace-one Adjacency]\label{thm:Poisson_replace_one}
Let $D\simeq_{r\text{-}o}D^\prime$ be adjacent datasets.  Assuming $q\coloneqq|B|/|D|<1$, for any integer $m\geq 3$ we have
\begin{align}\label{eq:Poisson_ro_bounds}
     &\sup_{\theta_t}D_\alpha(p_t^{\text{Poisson}}\!(\theta_{t+1}|\theta_t,D)\|p_t^{\text{Poisson}}\!({\theta_{t+1}}|\theta_t,D^\prime))\\
       \leq&\frac{1}{\alpha-1}\log\left[1+q^2\alpha(\alpha-1)\left(e^{1/\sigma_t^2}-e^{-1/\sigma_t^2}\right)+\sum_{k=3}^{m-1}\frac{q^k}{k!}\widetilde{F}_{\alpha,2\sigma_t,k}+\widetilde{E}_{\alpha,2\sigma_t,m}(q)\right]\,,\notag
\end{align}
where $\widetilde{F}_{\alpha,2\sigma_t,k}$ and $\widetilde{E}_{\alpha,2\sigma_t,m}(q)$ are computed via  \eqref{eq:F_prime_k_bound} and   \eqref{eq:E_tilde_def} respectively (with $\sigma_t$ replaced by $2\sigma_t$).
\end{theorem}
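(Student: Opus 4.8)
The plan is to follow closely the derivation of Theorem~\ref{thm:FS_woR_replace_one} in Appendix~\ref{app:RDP_replace_one}, replacing the \fswor\ decomposition of Lemma~\ref{lemma:B_dist} by the corresponding decomposition for Poisson subsampling and tracking the single place where the sensitivity is halved. First I would decompose the Poisson mechanism under replace-one adjacency: assuming (without loss of generality) that $D$ and $D^\prime$ differ only in their last element, write the random Poisson minibatch as $T\cup S$, where $T\subseteq\{0,\dots,|D|-2\}$ is the random set of selected common indices and $S$ is, independently of $T$, equal to $\{|D|-1\}$ with probability $q$ and $\emptyset$ with probability $1-q$. This yields
\begin{align}
p^{\text{Poisson}}_t(\theta_{t+1}|\theta_t,D)=E_T\!\left[qN_{\mu_t(\theta_t,T\cup\{|D|-1\},D),\widetilde\sigma_t^2}(\theta_{t+1})+(1-q)N_{\mu_t(\theta_t,T,D),\widetilde\sigma_t^2}(\theta_{t+1})\right]\,,
\end{align}
and the analogous expression for $D^\prime$, with $\mu_t(\theta_t,T,D^\prime)=\mu_t(\theta_t,T,D)$ since $T$ omits index $|D|-1$. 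Applying quasiconvexity of $D_\alpha$ in both arguments (as in Remark~\ref{remark:general_divergence_bound2}) and centering the common Gaussian at zero reduces $\sup_{\theta_t}D_\alpha(p^{\text{Poisson}}_t(\cdot|\theta_t,D)\|p^{\text{Poisson}}_t(\cdot|\theta_t,D^\prime))$ to $\max_T\frac{1}{\alpha-1}\log[\widehat F_\alpha(q)]$, where $\widehat F_\alpha$ has the same form as $F_\alpha$ in \eqref{eq:F_alpha_def} with $\Delta\mu_t(b^\prime,\widetilde b)$, $\Delta\mu^\prime_t(b^\prime,\widetilde b)$ replaced by $\Delta\mu_t(T)$, $\Delta\mu^\prime_t(T)$.

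Next I would record the sensitivity bounds \eqref{eq:Poisson_ro_mean_constraints}: $\Delta\mu_t(T)$ is a single rescaled clipped gradient, hence $\|\Delta\mu_t(T)\|\le C\eta_t/|B|=r_t/2$ (a factor of $2$ smaller than in \eqref{eq:delta_mu_bound}, since a Poisson add/remove changes the minibatch by one element rather than by a replacement), likewise $\|\Delta\mu^\prime_t(T)\|\le r_t/2$, while $\|\Delta\mu_t(T)-\Delta\mu^\prime_t(T)\|\le 2C\eta_t/|B|=r_t$; all three are uniform in $T$ and $\theta_t$. I would then Taylor-expand $\widehat F_\alpha$ as in \eqref{eq:H_tilde_taylor}--\eqref{eq:F_deriv_general_q}. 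The $k=0,1$ terms give $1$ and $0$. For $k=2$, the MGF computation of \eqref{eq:dk_H_tilde_k_2}, the polarization identity $\Delta\mu_t(T)\cdot\Delta\mu^\prime_t(T)=\tfrac12(\|\Delta\mu_t(T)\|^2+\|\Delta\mu^\prime_t(T)\|^2-\|\Delta\mu_t(T)-\Delta\mu^\prime_t(T)\|^2)$, the three bounds above, and the same elementary calculus used for \eqref{eq:F2_bound} give $\tfrac{d^2}{dq^2}\widehat F_\alpha(0)\le 2\alpha(\alpha-1)\big(e^{r_t^2/(4\widetilde\sigma_t^2)}-e^{-r_t^2/(4\widetilde\sigma_t^2)}\big)=2\alpha(\alpha-1)(e^{1/\sigma_t^2}-e^{-1/\sigma_t^2})$, using $\widetilde\sigma_t/r_t=\sigma_t/2$ from \eqref{eq:sigma_tilde_over_r}; this is the $q^2\alpha(\alpha-1)(e^{1/\sigma_t^2}-e^{-1/\sigma_t^2})$ term of \eqref{eq:Poisson_ro_bounds}, and equality holds when the distinguishing elements are anti-parallel and clip-saturated.

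For the $k\ge 3$ coefficients and the remainder $\widehat E_{\alpha,m}(q)$, the step I expect to be the main obstacle is checking that the bounding machinery of Appendix~\ref{app:RDP_replace_one} (Lemmas~\ref{lemma:chi_beta}, \ref{lemma:Wang24}, \ref{lemma:prod_int_bound} and the formulas \eqref{eq:F_prime_k_bound}, \eqref{eq:E_tilde_def}) nowhere exploits the coupling bound $\|\Delta\mu_t-\Delta\mu^\prime_t\|\le r_t$ for $k\ge 3$ — it uses only $\|\Delta\mu_t\|\le r_t$, $\|\Delta\mu^\prime_t\|\le r_t$, and the difference bound taken as a stand-alone $\chi^\beta$ divergence after centering one Gaussian. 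Granting this, the Poisson estimates follow by replacing $r_t$ by $r_t/2$ throughout; by the rescaling $\theta\mapsto(r_t/2)^{-1}\theta$ each resulting quantity $\mathcal D_{\chi^k}(N_{r_t/2,\widetilde\sigma_t^2}\|N_{0,\widetilde\sigma_t^2})$ becomes $\mathcal D_{\chi^k}(N_{1,\sigma_t^2}\|N_{0,\sigma_t^2})=\mathcal D_{\chi^k}(N_{1,(2\sigma_t)^2/4}\|N_{0,(2\sigma_t)^2/4})=B_{2\sigma_t,k}$, i.e., exactly the effect of the substitution $\sigma_t\mapsto 2\sigma_t$ in $M_{\sigma_t,k}$ and $\widetilde B_{\sigma_t,k}$, hence in $\widetilde F_{\alpha,\sigma_t,k}$ and $\widetilde E_{\alpha,\sigma_t,m}(q)$. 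Assembling the $k=2$ term with the rescaled $k\ge 3$ terms and the rescaled remainder inside $\tfrac{1}{\alpha-1}\log[\,\cdot\,]$ yields \eqref{eq:Poisson_ro_bounds}, which (being uniform in $T$ and $\theta_t$) completes the proof.
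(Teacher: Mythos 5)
Your plan mirrors the paper's Appendix~\ref{app:Poisson_ro} argument step for step: the Poisson decomposition into a $q$/$(1-q)$ Gaussian mixture, the quasiconvexity reduction, the mean constraints \eqref{eq:Poisson_ro_mean_constraints}, the $k=2$ polarization computation producing $2\alpha(\alpha-1)(e^{1/\sigma_t^2}-e^{-1/\sigma_t^2})$, and the $\sigma_t\mapsto2\sigma_t$ substitution for the higher-order machinery. You also flag exactly the place where the argument is delicate: whether the $k\ge 3$ bounds exploit the coupling constraint $\|\Delta\mu_t-\Delta\mu_t^\prime\|\le r_t$.

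The problem is that you grant this step rather than check it, and in fact it fails. The coefficient bound $\widetilde{F}_{\alpha,\sigma_t,k}$ of \eqref{eq:F_prime_k_bound} is assembled from two pieces of \eqref{eq:dk_H_tilde_bound}: the pure-difference integral $\int\!\left(N_{\Delta\mu_t,\widetilde\sigma_t^2}/N_{0,\widetilde\sigma_t^2}-N_{\Delta\mu_t^\prime,\widetilde\sigma_t^2}/N_{0,\widetilde\sigma_t^2}\right)^{k}N_{0,\widetilde\sigma_t^2}$ bounded via Lemma~\ref{lemma:Wang24}, and the correction sum bounded via Lemma~\ref{lemma:prod_int_bound}. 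The second piece, and the entire remainder $\widetilde{E}_{\alpha,\sigma_t,m}$, do use only $\|\Delta\mu_t\|$ and $\|\Delta\mu_t^\prime\|$ individually, so the $\sigma_t\to2\sigma_t$ substitution is sound there. But Lemma~\ref{lemma:Wang24}, applied to the first piece, produces the terms $\mathcal{D}_{\chi^k}\!\left(N_{\Delta\mu_t^\prime,\widetilde\sigma_t^2}\|N_{\Delta\mu_t,\widetilde\sigma_t^2}\right)$ and $\mathcal{D}_{\chi^k}\!\left(N_{\Delta\mu_t,\widetilde\sigma_t^2}\|N_{\Delta\mu_t^\prime,\widetilde\sigma_t^2}\right)$, which after centering are controlled solely by $\|\Delta\mu_t-\Delta\mu_t^\prime\|$. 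In the Poisson replace-one case this difference is still $\le r_t$ (saturated at anti-parallel, clip-saturated gradients), not $\le r_t/2$, so those two terms contribute $\widetilde{B}_{\sigma_t,k}$, not $\widetilde{B}_{2\sigma_t,k}$. The valid bound on the first piece (for even $k$) is therefore $2M_{\sigma_t,k}+2M_{2\sigma_t,k}$, which is strictly larger than the $4M_{2\sigma_t,k}$ inside $\widetilde{F}_{\alpha,2\sigma_t,k}$, and one can verify directly (take $\Delta\mu_t=(r_t/2)v$, $\Delta\mu_t^\prime=-(r_t/2)v$ with a unit vector $v$, $k=4$, and expand to second order in $1/\sigma_t^2$) that the achievable value of the first piece exceeds $4M_{2\sigma_t,4}$. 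So the blanket $\sigma_t\to2\sigma_t$ replacement is not a valid upper bound for the $k\ge3$ coefficients; one needs a hybrid estimate that keeps $\widetilde{B}_{\sigma_t,k}$ for the two Lemma~\ref{lemma:Wang24} terms governed by the difference and uses $\widetilde{B}_{2\sigma_t,k}$ only for the remaining pieces. The paper's Appendix~\ref{app:Poisson_ro} asserts the same unchecked claim (``only uses bounds on the norms''), so the gap is inherited rather than introduced by you, but it is a genuine gap and should not be granted.
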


While our \fswor and Poisson-subsampled RDP bounds agree to leading order, the effect of the higher order terms  leads to Poisson subsampling having a slight privacy advantage over \fswor-subsampling in practice, especially when larger $\alpha$'s are required. This proven in Figure \ref{fig:comp_with_Poisson_ro_adjacency}, where we see that the upper bound on Poisson-subsampled RDP (dashed line) is slightly below the theoretical lower bound for \fswor-subsampled RDP derived in \cite{wang2019subsampled} (circles) while our \fswor-RDP bound from Theorem \ref{thm:FS_woR_replace_one} (solid line) is slightly above the theoretical lower bound.  All three are significantly below the upper bound from \cite{wang2019subsampled} (dot-dashed line).  We emphasize that the Poisson result does not contradict the lower bound from \cite{wang2019subsampled}, which only applies to \fswor-subsampled RDP.

\begin{figure}[h]
\begin{center}
\centerline{\includegraphics[width=.5\columnwidth]{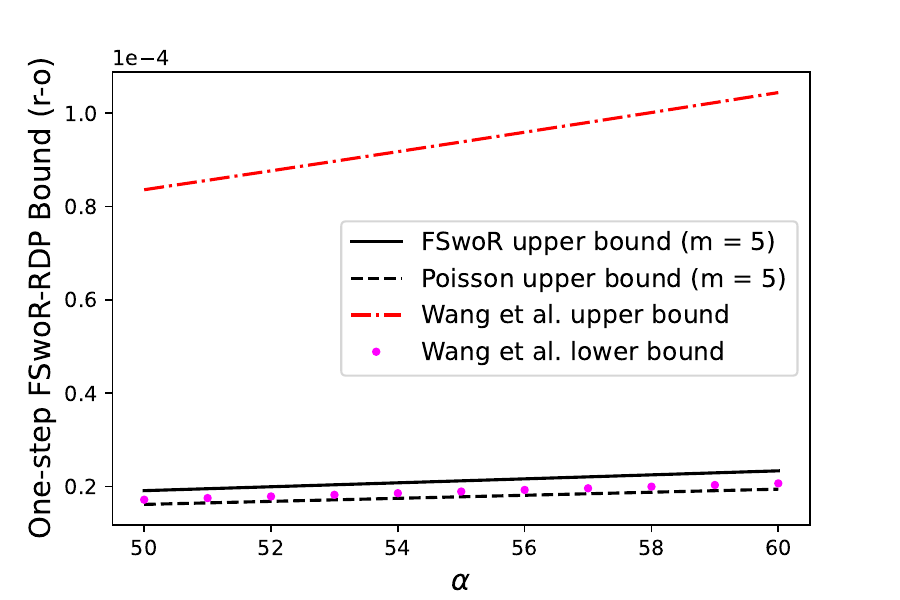}}
\caption{Comparison of our \fswor-RDP and Poisson-RDP upper bounds under replace-one adjacency, Theorems \ref{thm:FS_woR_replace_one}  and \ref{thm:Poisson_replace_one} respectively, with the upper and lower bounds on \fswor-RDP from \cite{wang2019subsampled}. We used $\sigma_t=6$, $|B|=120$, and $|D|=50,000$.}
\label{fig:comp_with_Poisson_ro_adjacency}
\end{center}
\vskip -0.2in
\end{figure}

\section{R{\'e}nyi DP-SGD Bounds for Fixed-size Subsampling with Replacement}\label{app:FS_replacement} 
In this Appendix we provide a detailed derivation of R{\'e}nyi DP-SGD bounds when using fixed-size subsampling with replacement (FSwR). In addition to the lower bound stated in Theorem \ref{thm:FSR_RDP_LB}, we also derive an upper bound.

As in case without replacement in Appendix \ref{app:Renyi_bounds_without_replacement}, a key step is a decomposition of the sampling distribution into a baseline (where $D$ and $D^\prime$ agree) and a perturbation where they disagree.  The derivation for subsampling with replacement is a non-trivial generalization of sampling without replacement, with Lemma \ref{lemma:unif_w_replacement_rv_decomp} below requiring a several key new ingredients, as compared to Lemma \ref{lemma:B_dist}.

First let $D^\prime$ be obtained from $D$ by removing one element. Without loss of generality, assume it is the last (i.e., at index $|D|-1$) and call that element $d$.  Let $B^\prime=(B_1^\prime,...,B^\prime_{|B|})$ where the components are iid $\mathrm{Uniform}(\{0,...,|D|^\prime-1\})$ random variables, so that $B^\prime$ uniformly samples from $\{0,...,|D|^\prime-1\}$ with replacement.   Now let $N\sim \mathrm{Binomial}(|B|,|D|^{-1})$ and $\Pi$ be a uniformly random permutation of $(1,...,|B|)$.  Assume that $B^\prime$, $N$, and $\Pi$ are independent.   In the following lemma we show how sampling from $D$ with replacement can be obtained from $B^\prime$, $N$, and $\Pi$. In short, $N$ will determine the number of entries of $B^\prime$ are replaced by $|D|-1$ and $\Pi$ will determine the indices of the entries that are to be replaced.
\begin{lemma}\label{lemma:unif_w_replacement_rv_decomp}
    For $i=1,...,|B|$ define
    \begin{align}
        B_i=\begin{cases}
             |D|-1  & \text{ if } \Pi(j)=i \text{ for some }j\leq N\\
             B_i^\prime &\text{ otherwise }
        \end{cases}\coloneqq \Phi_i(B^\prime,N,\Pi)\,.
    \end{align}
    Then the $B_i$ are iid $\mathrm{Uniform}(\{0,...,|D|-1\})$ random variables, i.e.,  $B\coloneqq(B_1,...,B_{|B|})$ uniformly samples from $\{0,...,|D|-1\}$  with replacement.
\end{lemma}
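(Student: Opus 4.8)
The plan is to show that the constructed random vector $B=(B_1,\dots,B_{|B|})$ has the same law as $|B|$ iid uniform draws from $\{0,\dots,|D|-1\}$ by computing $P(B=b)$ for an arbitrary tuple $b=(b_1,\dots,b_{|B|})\in\{0,\dots,|D|-1\}^{|B|}$ and checking it equals $|D|^{-|B|}$. First I would partition according to how many coordinates of $b$ equal the special index $|D|-1$; write $k=\#\{i:b_i=|D|-1\}$ and let $S=\{i:b_i=|D|-1\}$. Since $B_i^\prime$ only takes values in $\{0,\dots,|D|-2\}$, the event $B=b$ forces every coordinate in $S$ to have been overwritten (i.e.\ $\{i:\Pi(j)=i\text{ for some }j\le N\}\supseteq S$) and every coordinate not in $S$ to have \emph{not} been overwritten and to satisfy $B_i^\prime=b_i$.

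Next I would condition on $N=n$. Given $N=n$, the set of overwritten indices $O_n\coloneqq\Pi(\{1,\dots,n\})$ is a uniformly random $n$-subset of $\{1,\dots,|B|\}$ (because $\Pi$ is a uniform permutation), independent of $B^\prime$. Hence
\begin{align}
P(B=b)=\sum_{n=0}^{|B|}P(N=n)\,P(O_n\supseteq S,\ O_n\cap S^c=\emptyset\text{ appropriately})\,P\big(B_i^\prime=b_i\ \forall i\notin O_n\big).\notag
\end{align}
The event ``$B=b$'' given $N=n$ requires $O_n\supseteq S$ and, for $i\notin S$, that $i\notin O_n$ and $B_i^\prime=b_i$; together this forces $O_n=S$, so only the term $n=k$ survives. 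For that term, $P(N=k)=\binom{|B|}{k}|D|^{-k}(1-|D|^{-1})^{|B|-k}$, the probability that $O_k=S$ (a specific $k$-subset) is $1/\binom{|B|}{k}$, and given $O_k=S$ the remaining $|B|-k$ coordinates $B_i^\prime$ are iid uniform on $\{0,\dots,|D|-2\}$, so the probability they match the prescribed values $b_i\in\{0,\dots,|D|-2\}$ is $(|D|-1)^{-(|B|-k)}$. Multiplying,
\begin{align}
P(B=b)=\binom{|B|}{k}|D|^{-k}(1-|D|^{-1})^{|B|-k}\cdot\frac{1}{\binom{|B|}{k}}\cdot(|D|-1)^{-(|B|-k)} = |D|^{-|B|},\notag
\end{align}
since $(1-|D|^{-1})^{|B|-k}=((|D|-1)/|D|)^{|B|-k}$ cancels against $(|D|-1)^{-(|B|-k)}$ up to the factor $|D|^{-(|B|-k)}$. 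This is independent of $b$, which is exactly the claim. To conclude I would note that a probability measure on the finite set $\{0,\dots,|D|-1\}^{|B|}$ that assigns equal mass $|D|^{-|B|}$ to every point is precisely the law of $|B|$ iid uniform samples, so $B$ samples from $D$ with replacement.

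The main obstacle — really the only subtlety — is the bookkeeping that pins down $O_n=S$ exactly: one must argue carefully that ``$B=b$ on $\{N=n\}$'' forces both inclusions $S\subseteq O_n$ (because overwritten is the only way a coordinate can equal $|D|-1$, as $B_i^\prime\le|D|-2$) and $O_n\subseteq S$ (because any overwritten coordinate equals $|D|-1$, hence lies in $S$), so that all terms with $n\ne k$ vanish and the surviving combinatorial factors telescope. The independence of $B^\prime$, $N$, and $\Pi$ is used crucially to factor the probability, and the exchangeability of the uniform permutation is what makes $O_n$ uniform over $n$-subsets; both are hypotheses we are granted. Everything else is the elementary identity $(1-|D|^{-1})/( |D|-1)=|D|^{-1}$ arranged to produce the cancellation.
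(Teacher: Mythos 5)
Your proposal is correct and follows essentially the same route as the paper's proof: both reduce $P(B=b)$ to a single surviving term by noting that the value of $b$ forces $N$ to equal the number of coordinates of $b$ that equal $|D|-1$, then factor by independence and cancel. The only cosmetic difference is that you work directly with the unordered image set $O_n=\Pi(\{1,\dots,n\})$ and use that it is a uniform $n$-subset (probability $1/\binom{|B|}{n}$), whereas the paper sums over the $n!$ ordered tuples $(\Pi(1),\dots,\Pi(n))=(i_{\tau(1)},\dots,i_{\tau(n)})$ and collapses the sum at the end; these yield identical combinatorial factors, so the two presentations are the same argument.
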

\begin{proof}
Let $b\in\{0,...,|D|-1\}^{|B|}$ and define $n=|\{i:b_i=|D|-1\}|$.  Let $i_1,...,i_n$ be the unique indices with $b_{i_j}=|D|-1$. If $B=b$ then we must have $N=n$, therefore if we let $S_n$ denote the set of permutations on $\{1,...,n\}$ we have
\begin{align}
    P(B=b)=&P\left(B=b,N=n,\{\Pi(1),...,\Pi(n)\}=\{i_1,...,i_n\}\right)\\
    =&\sum_{\tau\in S_n}P\left(B_i=b_i\text{ for }i\not\in\{i_1,...,i_n\},N=n,(\Pi(1),...,\Pi(n))=(i_{\tau(1)},...,i_{\tau(n)})\right)\notag\\
    =&\sum_{\tau\in S_n}P\left(B^\prime_i=b_i \text{ for }i\not\in\{i_1,...,i_n\},N=n,(\Pi(1),...,\Pi(n))=(i_{\tau(1)},...,i_{\tau(n)})\right)\notag\,.
\end{align}
Using  independence and  that $|D^\prime|=|D|-1$ and $|S_n|=n!$ we can then compute
 \begin{align}
    &P(B=b)\\
    =&\sum_{\tau\in S_n}\left(\prod_{i\not\in\{i_1,...,i_n\}}P(B^\prime_i=b_i)\right) P(N=n)P\left((\Pi(1),...,\Pi(n))=(i_{\tau(1)},...,i_{\tau(n)})\right)\notag\\
    =&\sum_{\tau\in S_n}|D^\prime|^{-(|B|-n)} \binom{|B|}{n}|D|^{-n}(1-|D|^{-1})^{|B|-n}
\frac{(|B|-n)!}{|B|!}\notag\\
    =&n!|D|^{-|B|} \binom{|B|}{n}
\frac{(|B|-n)!}{|B|!}\notag\\
=&|D|^{-|B|}\,.\notag
\end{align}

This holds for all $b$, therefore  we can conclude that the distribution of $B$ is the uniform distribution on $\{0,...,|D|-1\}^{|B|}$.  All the remaining claims then immediately follow from this fact.
 \end{proof}

 Using Lemma \ref{lemma:unif_w_replacement_rv_decomp} we can decompose the transition probabilities for  one step of DP-SGD with \fswr-subsampling  as follows
 \begin{align}
 &p_t(d\theta_{t+1}|\theta_t,D^\prime)= \int N_{\mu_t(\theta,b^\prime,D^\prime),\widetilde{\sigma}_t^2}(d\theta_{t+1}) P_{B^\prime,\Pi}(db^\prime d\pi)\,,\\
     &p_t(d\theta_{t+1}|\theta_t,D)
     =\int N_{\mu_t(\theta_t,b,D),\widetilde{\sigma}_t^2}(d\theta_{t+1}) P_B(db)\\
     &=\int \sum_{n=0}^{|B|} \binom{|B|}{n}|D|^{-n}(1-|D|^{-1})^{|B|-n} N_{\mu_t(\theta_t,\Phi(b^\prime,n,\pi),D),\widetilde{\sigma}_t^2}(d\theta_{t+1})  P_{B^\prime,\Pi}(db^\prime d\pi) \,.\notag
 \end{align}
Noting that $\mu_t(\theta,b^\prime,D^\prime)=\mu_t(\theta,b^\prime,D)$, as $D^\prime$ differs from $D$ only by  removal of the last element, we now use quasiconvexity to compute
 \begin{align}\label{eq:binomial_mixture}
 &D_\alpha(  p_t(d\theta_{t+1}|\theta_t,D)\|  p_t(d\theta_{t+1}|\theta_t,D^\prime))\\
 \leq & \max_{b^\prime,\pi}D_\alpha\left(\sum_{n=0}^{|B|} \binom{|B|}{n}|D|^{-n}(1-|D|^{-1})^{|B|-n} N_{\mu_t(\theta_t,\Phi(b^\prime,n,\pi),D),\widetilde{\sigma}_t^2} \bigg\| N_{\mu_t(\theta_t,b^\prime,D),\widetilde{\sigma}_t^2} \right)\notag\,.
 \end{align}
\begin{remark}\label{remark:general_divergence_bound3}
As in Remarks \ref{remark:general_divergence_bound} and \ref{remark:general_divergence_bound2}, Eq.~\eqref{eq:binomial_mixture}  remains true when the R{\'e}nyi divergences $D_\alpha$ are replaced by any other divergence $\mathcal{D}$ that is quasiconvex in both of its arguments, e.g., the hockey-stick divergences.  This fact is  useful for other differential privacy paradigms.
\end{remark}

\subsection{Upper Bound on FS${}_{\text{wR}}$-RDP}\label{app:FSR_UB}
 The next stages in the calculation are more complex than in the case of sampling without replacement, as the first argument of the above R{\'e}nyi divergence is  a mixture of $|B|+1$ Gaussians, as opposed to simply two as in the case without replacement.  This is because  the binomial random variable $N$ in \eqref{lemma:unif_w_replacement_rv_decomp} is playing the role that the Bernoulli random variable $J$ did in Lemma \ref{lemma:B_dist}.

Start by rewriting Eq.~\eqref{eq:binomial_mixture} as 
 \begin{align}\label{eq:binomial_mixture2}
 &D_\alpha(  p_t(d\theta_{t+1}|\theta_t,D)\|  p_t(d\theta_{t+1}|\theta_t,D^\prime)) \leq \max_{b^\prime,\pi}D_\alpha\left(\sum_{n=0}^{|B|}a_n N_{\Delta\mu_n,\widetilde{\sigma}_t^2} \bigg\| N_{0,\widetilde{\sigma}_t^2} \right)\notag\,,\\
 a_n\coloneqq&  \binom{|B|}{n}|D|^{-n}(1-|D|^{-1})^{|B|-n}\,,\,\,\,\Delta\mu_n\coloneqq \mu_t(\theta_t,\Phi(b^\prime,n,\pi),D) -\mu_t(\theta_t,b^\prime,D)\,.
 \end{align}
 Let $1<K\leq |B|$, choose $\tilde q$ satisfying
 \begin{align}\label{eq:q_tilde_assump}
1\geq\widetilde q\geq \left(1+\frac{a_0}{\sum_{n=1}^{K}a_n}\right)^{-1}
\end{align}
and decompose the mixture of distributions as follows
\begin{align}
\sum_{n=0}^{|B|}a_n N_{\Delta\mu_n,\widetilde{\sigma}_t^2}
=&a_0N_{0,\widetilde{\sigma}_t^2}+\sum_{n=1}^{K}a_n N_{\Delta\mu_n,\widetilde{\sigma}_t^2}+\sum_{n=K+1}^{|B|}a_n N_{\Delta\mu_n,\widetilde{\sigma}_t^2}\\
=&\widetilde{a}_0N_{0,\widetilde{\sigma}_t^2}+\sum_{n=1}^{K}\widetilde{a}_n \left(\widetilde{q}N_{\Delta\mu_n,\widetilde{\sigma}_t^2}+(1-\widetilde{q})N_{0,\widetilde{\sigma}_t^2}\right)+\sum_{n=K+1}^{|B|}a_n N_{\Delta\mu_n,\widetilde{\sigma}_t^2}\,,\notag\\
&\widetilde{a}_0\coloneqq a_0-(1/\widetilde{q}-1)\sum_{n=1}^{K}a_n\,,\,\,\,\widetilde{a}_n\coloneqq a_n/\widetilde{q} \text{\,\, for }n\geq 1\,.\notag
\end{align}
The assumption \eqref{eq:q_tilde_assump} implies $\widetilde{a}_n\in[0,1]$ and $\widetilde{a}_0+\sum_{n=1}^K \widetilde{a}_n+\sum_{n=K+1}^{|B|}a_n=1$ and therefore we can use convexity to bound \eqref{eq:binomial_mixture2} as follows:
\begin{align}
    &D_\alpha(  p_t(d\theta_{t+1}|\theta_t,D)\|  p_t(d\theta_{t+1}|\theta_t,D^\prime))\\
    \leq& \max_{b^\prime,\pi}\frac{1}{\alpha-1}\log\left[\widetilde{a}_0+\sum_{n=1}^K\widetilde{a}_n \int  \left(\widetilde{q}\frac{N_{\Delta\mu_n,\widetilde{\sigma}_t^2}(\theta)}{N_{0,\widetilde{\sigma}_t^2}(\theta)}+(1-\widetilde{q})\right)^\alpha N_{0,\widetilde{\sigma}_t^2}(\theta)d\theta\right.\notag \\
    &\left.\hspace{2.5cm}+\sum_{n=K+1}^{|B|}{a}_n \int  \left(\frac{N_{\Delta\mu_n,\widetilde{\sigma}_t^2}(\theta)}{N_{0,\widetilde{\sigma}_t^2}(\theta)}\right)^\alpha N_{0,\widetilde{\sigma}_t^2}(\theta)d\theta    \right]\notag\,,\\ 
        \leq&\frac{1}{\alpha-1}\log\left[\widetilde{a}_0+\sum_{n=1}^{K}\widetilde{a}_n \int  \left(\widetilde{q}\frac{N_{1,\sigma_t^2/(4n^2)}(\theta)}{N_{0,\sigma_t^2/(4n^2)}(\theta)}+(1-\widetilde{q})\right)^\alpha N_{0,\sigma_t^2/(4n^2)}(\theta)d\theta \right.\notag\\
        &\left.\qquad\qquad\qquad\qquad+\sum_{n=K+1}^{|B|}a_n e^{2\alpha(\alpha-1)n^2/\sigma_t^2}  \right]\notag\\
=&\frac{1}{\alpha-1}\log\left[\widetilde{a}_0+\sum_{n=1}^{K}\widetilde{a}_n H_{\alpha,\sigma_t/n}(\widetilde{q})  +\sum_{n=K+1}^{|B|}a_n e^{2\alpha(\alpha-1)n^2/\sigma_t^2} \right]\,,\notag  
\end{align}
where $H$ was defined in Eq.~\eqref{H_def} and can be bounded using  the techniques from Appendix \ref{app:Taylor}.

Now suppose $D^\prime$ is obtained from $D$ by adding one new element.  By reversing the role of $D$ and $D^\prime$ in Lemma \ref{lemma:unif_w_replacement_rv_decomp} and then mirroring the above derivation while also employing Theorem 5 in \cite{mironov2019r} we arrive at
\begin{align}
    &D_\alpha(  p_t(d\theta_{t+1}|\theta_t,D)\|  p_t(d\theta_{t+1}|\theta_t,D^\prime))\\
    \leq& \frac{1}{\alpha-1}\log\left[\widetilde{a}^\prime_0+\sum_{n=1}^{K}\widetilde{a}^\prime_n H_{\alpha,\sigma_t/n}(\widetilde{q})  +\sum_{n=K+1}^{|B|}a^\prime_n e^{2\alpha(\alpha-1)n^2/\sigma_t^2} \right]\,,\notag  \\
    &a_n^\prime\coloneqq  \binom{|B|}{n}|D^\prime|^{-n}(1-|D^\prime|^{-1})^{|B|-n}\,,\notag\\
    &\widetilde{a}^\prime_0\coloneqq  a_0^\prime-(1/\widetilde{q}-1)\sum_{n=1}^{K}a^\prime_n\,, \,\,\,\widetilde{a}^\prime_n\coloneqq a^\prime_n/\widetilde{q}\text{\,\, for }n\geq 1\,,\notag
\end{align}
whenever $\widetilde{q}$ satisfies \eqref{eq:q_tilde_assump}.  Note that \eqref{eq:q_tilde_assump} also implies
\begin{align}
q\geq \left(1+\frac{a^\prime_0}{\sum_{n=1}^{K}a^
\prime_n}\right)^{-1}
\end{align}
due to the fact that $a_0$ is increasing in $|D|$ and $a_n$ is decreasing in $|D|$ for $n\geq 1$.  These properties also imply
\begin{align}
    &\widetilde{a}^\prime_0+\sum_{n=1}^{K}\widetilde{a}^\prime_n H_{\alpha,\sigma_t/n}(\widetilde{q})  +\sum_{n=K+1}^{|B|}a^\prime_n e^{2\alpha(\alpha-1)n^2/\sigma_t^2}\\
        \leq&\widetilde{a}_0+\sum_{n=1}^{K}\widetilde{a}_n H_{\alpha,\sigma_t/n}(\widetilde{q})  +\sum_{n=K+1}^{|B|}a_n e^{2\alpha(\alpha-1)n^2/\sigma_t^2}\,,\notag
\end{align}
and therefore we can conclude that
\begin{align}
    &D_\alpha(  p_t(d\theta_{t+1}|\theta_t,D)\|  p_t(d\theta_{t+1}|\theta_t,D^\prime))\\
    \leq& \frac{1}{\alpha-1}\log\left[\widetilde{a}_0+\sum_{n=1}^{K}\widetilde{a}_n H_{\alpha,\sigma_t/n}(\widetilde{q})  +\sum_{n=K+1}^{|B|}a_n e^{2\alpha(\alpha-1)n^2/\sigma_t^2} \right]\notag  
\end{align}
whenever $D^\prime\simeq_{a/r}D$ and any choice of $\widetilde{q}$ that satisfies  
\begin{align}
1\geq\widetilde q\geq \left(1+\frac{a_0}{\sum_{n=1}^{K}a_n}\right)^{-1}\,.
\end{align}
Note that if one chooses $\widetilde{q}=\left(1+\frac{a_0}{\sum_{n=1}^{K}a_n}\right)^{-1}$ then $\widetilde{a}_0=0$. If one also takes $K=|B|$ then $\widetilde{q}=1-(1-|D|^{-1})^{|B|}$. This is the version of the result we present above in  Theorem \ref{thm:FSR_RDP_UB}.

\subsection{Lower Bound on FS${}_{\text{wR}}$-RDP}\label{app:FSR_lower_bound}
Here we derive a lower bound on the one-step RDP of SGD with fixed-size subampling with replacement.  To do this, suppose that all entries of $D^\prime$ are the same, equal to $d^\prime$, and that the extra entry $d$ in $D$ satisfies $d\neq d^\prime$.  Moreover, suppose there exists $\theta_t$ such that the clipped gradients at $d$ and $d^\prime$ are anti-parallel with both having norm equal to the clipping bound, $C$.  In this case, $\mu_t(\theta_t,b^\prime,D)$ and $\mu_t(\theta_t,\Phi(b^\prime,n,\pi),D)$ are independent of both $\pi$ and $b^\prime$ and
\begin{align}\label{eq:lb_norm_equality}
\|\mu_t(\theta_t,b^\prime,D)-\mu(\theta_t,\Phi(b^\prime,n,\pi),D)\|=nr_t\,.
\end{align}
This also implies that \eqref{eq:binomial_mixture} is an equality (and the argument of the divergence is independent of $b^\prime$ and $\pi$). By changing variables and using \eqref{eq:lb_norm_equality} we can therefore write
 \begin{align}
 &D_\alpha(  p_t(d\theta_{t+1}|\theta_t,D)\|  p_t(d\theta_{t+1}|\theta_t,D^\prime))\\
 = & \max_{b^\prime,\pi}D_\alpha\left(\sum_{n=0}^{|B|} \binom{|B|}{n}|D|^{-n}(1-|D|^{-1})^{|B|-n} N_{\|\mu_t(\theta_t,\Phi(b^\prime,n,\pi),D)-\mu_t(\theta_t,b^\prime,D)\|,\widetilde{\sigma}_t^2} \bigg\| N_{0,\widetilde{\sigma}_t^2} \right)\notag\\
  = &D_\alpha\left(\sum_{n=0}^{|B|} \binom{|B|}{n}|D|^{-n}(1-|D|^{-1})^{|B|-n} N_{nr_t,\widetilde{\sigma}_t^2} \bigg\| N_{0,\widetilde{\sigma}_t^2} \right)\notag\\
    = &D_\alpha\left(\sum_{n=0}^{|B|} \binom{|B|}{n}|D|^{-n}(1-|D|^{-1})^{|B|-n} N_{n,{\sigma}_t^2/4} \bigg\| N_{0,{\sigma}_t^2/4} \right)\notag\,.
 \end{align}
This gives the one-step RDP lower bound
\begin{align}\label{eq:FSR_lb}
    \epsilon^\prime_t(\alpha)\geq \frac{1}{\alpha-1}\log\left[\int\left(\sum_{n=0}^{|B|} \binom{|B|}{n}|D|^{-n}(1-|D|^{-1})^{|B|-n} \frac{N_{n,{\sigma}_t^2/4}(\theta) }{ N_{0,{\sigma}_t^2/4}(\theta)}\right)^\alpha  N_{0,{\sigma}_t^2/4}(\theta)d\theta\right]\,.
\end{align}
  The integral over $\theta$ can be computed as follows. 
First note that summation in \eqref{eq:FSR_lb} is the average of $N_{n,{\sigma}_t^2/4}(\theta)/ N_{0,{\sigma}_t^2/4}(\theta)$ under the Binomial distribution $\mathrm{Binomial}(|B|,|D|^{-1})$.  As a convenient shorthand we write this average as $\int ...dn$. With this notation, for integer $\alpha$ we can rewrite the argument of the logarithm in the lower bound \eqref{eq:FSR_lb}  an iterated integral, then exchange the order of the integrals and evaluate the integral over $\theta$ using the formula for the MGF of a Gaussian as follows
\begin{align}
    &\int\left(\sum_{n=0}^{|B|} \binom{|B|}{n}|D|^{-n}(1-|D|^{-1})^{|B|-n} \frac{N_{n,{\sigma}_t^2/4}(\theta) }{ N_{0,{\sigma}_t^2/4}(\theta)}\right)^\alpha  N_{0,{\sigma}_t^2/4}(\theta)d\theta\\
    =&\int \int...\int \prod_{i=1}^\alpha\frac{N_{n_i,{\sigma}_t^2/4}(\theta) }{ N_{0,{\sigma}_t^2/4}(\theta)} dn_1...dn_\alpha N_{0,{\sigma}_t^2/4}(\theta)d\theta\notag\\
    = &\int ...\int\int e^{-\sum_i n_i^2/(\sigma^2/2)+\theta\sum_i n_i /(\sigma^2/4)}N_{0,{\sigma}_t^2/4}(\theta)d\theta dn_1...dn_\alpha \notag\\
    =&\int ...\int e^{\frac{4}{\sigma_t^2}\sum_{i<j}n_in_j}dn_1...dn_\alpha\,. \label{eq:iterated_n_int}
\end{align}
Therefore we arrive at the lower bound
\begin{align}\label{eq:FSR_LB_iterated_n_int}
    \epsilon_t^\prime(\alpha)\geq \frac{1}{\alpha-1}\log\left[\int ...\int e^{\frac{4}{\sigma_t^2}\sum_{i<j}n_in_j}dn_1...dn_\alpha\right]\,.
\end{align}
Recalling the definition of the $dn_i$'s, this completes the proof of Theorem \ref{thm:FSR_RDP_LB}.

\begin{figure}[ht]
\begin{center}
\centerline{\includegraphics[width=.5\columnwidth]{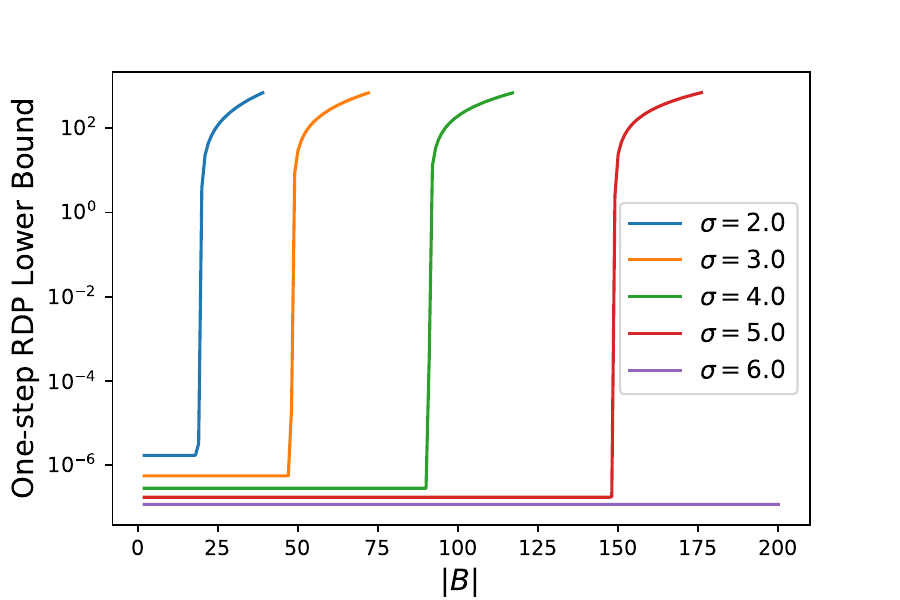}}
\caption{FS${}_{\text{wR}}$-RDP lower bounds as a function of $|B|$, with $\alpha=2$ and $q=0.001$. }
\label{fig:FSR_exact_vs_B}
\end{center}
\vskip -0.2in
\end{figure}

The lower bound \eqref{eq:iterated_n_int} behaves somewhat differently from the one  derived for subsampling without replacement  in \cite{wang2019subsampled} and restated in Appendix \ref{app:Wang_et_al_lb}.  In particular, note that the explicit $|B|$ and $|D|$ dependence in the lower bound \eqref{eq:FSR_lb} cannot be removed simply by reparameterizing the expression in terms of $q=|B|/|D|$.    In contrast, the FS${}_{\text{woR}}$-RDP lower bound in Appendix \ref{app:Wang_et_al_lb} depends on $|B|$ and $|D|$ only through $q$.  This fact can be further illustrated through a simple loosening of the bound \eqref{eq:FSR_lb}, where we bound the sum below by the term at  $n=|B|$ to obtain
\begin{align}\label{eq:FSR_lb_loosened}
    \epsilon^\prime_t(\alpha)\geq& \frac{1}{\alpha-1}\log\left[|D|^{-\alpha|B|}\int\left(   \frac{N_{|B|,{\sigma}_t^2/4}(\theta) }{ N_{0,{\sigma}_t^2/4}(\theta)}\right)^\alpha  N_{0,{\sigma}_t^2/4}(\theta)d\theta\right]\\
    =&\frac{\alpha|B|}{\alpha-1}\left(2|B|(\alpha-1)/\sigma_t^2-\log|B|-\log(1/q)\right)\,.\notag
\end{align}
For fixed ratio $q=|B|/|D|$, the right-hand-side of \eqref{eq:FSR_lb_loosened}  approaches $\infty$ as $|B|\to\infty$.  Therefore we can conclude that $\epsilon^\prime_t(\alpha)$ has nontrivial $|B|$ dependence, even for fixed $q$ as claimed.  This accounts for some of the  difference in behavior between fixed-size subsampling with and without replacement.  This behavior is also illustrated numerically in Figure \ref{fig:FSR_exact_vs_B}, where we show the exact value of the lower bound \eqref{eq:FSR_lb} as a function of $|B|$ for $\alpha=2$ (this case is straightforward to compute via the binomial theorem and MGF of a Gaussian) and for fixed $q=0.001$; we show results for several values of $\sigma$.  Note that initially each curve starts out horizontal, mirroring the case of FS${}_{\text{woR}}$-RDP, which has no $|B|$ dependence outside of $q$.  However, for each $\sigma$ there is a  threshold value of $|B|$ as which there is a ``phase transition", i.e., after  which the explicit $|B|$ dependence  becomes important and the RDP value increases rapidly.

\subsubsection{Comptutable FS${}_{\text{wR}}$-RDP Lower Bounds}\label{app:FSR_recursive_lower_bound}
When $\alpha$ is an integer the lower bound \eqref{eq:FSR_LB_iterated_n_int} can, in principle, be computed exactly by recalling that the integrals with respect to $dn_i$ are shorthand for the expectation with respect to the distribution $\mathrm{Binomial}(B,|D|^{-1})$. Therefore the lower bound \eqref{eq:FSR_LB_iterated_n_int} consists of $\alpha$ nested sums of $|B|+1$ terms each. Clearly,  when  $\alpha$ is larger than $2$ and $|B|$ is not small iterated summation this quickly becomes   computationally impractical. The looser lower bound  \eqref{eq:FSR_lb_loosened} is easy to compute, but is much too inaccurate for most purposes. Here, for integer $\alpha$, we will show how obtain lower bounds that are both  practical and significantly more accurate.

Recall that the integrals with respect to the $dn$'s are actually finite sums, each over $|B|+1$ elements, and so in principle the above expression can be computed exactly but it is still not practical.  We next show how to obtain recursively defined lower bounds that are significantly less computationally expensive than the $O(|B|^\alpha)$ computations that the more naive formula \eqref{eq:FSR_LB_iterated_n_int} requires.  To begin, for any integer $k\geq 2$ define
\begin{align}
    F_k(c,d)=\int...\int e^{c\sum_{i<j} n_in_j+d\sum_i n_i}dn_k...dn_1\,.
\end{align}
For $k=2$ we can reduce the computation to a single summation by  using the formula for the MGF of the Binomial distribution:
\begin{align}\label{eq:F_2}
    F_2(c,d)=&\sum_{n=0}^{|B|}\binom{|B|}{n}|D|^{-n}(1-|D|^{-1})^{|B|-n} e^{dn}(1-|D|^{-1}+|D|^{-1}e^{cn+d})^{|B|}\,.
\end{align}
For $k>2$ we explicitly write out the integral over $n_k$ as a summation and then simplify to obtain
\begin{align}\label{eq:F_inductive}
&F_k(c,d)\\
=&\int...\int e^{c\sum_{j=2}^{k-1}\sum_{i=1}^{j-1}n_in_j+d\sum_{i=1}^{k-1} n_i}\int \!e^{(c\sum_{i=1}^{k-1}n_i+d)n_k}dn_k dn_{k-1}...dn_1\notag\\
=&\sum_{n=0}^{|B|}\!\binom{|B|}{n}|D|^{-n}(1-|D|^{-1})^{|B|-n}\!\!\int\!\!...\!\!\int \!e^{c\sum_{j=2}^{k-1}\sum_{i=1}^{j-1}n_in_j+d\sum_{i=1}^{k-1} n_i} e^{(c\sum_{i=1}^{k-1}n_i+d)n}dn_{k-1}...dn_1\notag\\
=&\sum_{n=0}^{|B|}\!\binom{|B|}{n}|D|^{-n}(1-|D|^{-1})^{|B|-n}e^{dn}\!\!\int\!\!...\!\!\int\! e^{c\sum_{j=2}^{k-1}\sum_{i=1}^{j-1}n_in_j+(d+cn)\sum_{i=1}^{k-1}n_i}dn_{k-1}...dn_1\notag\\
=&\sum_{n=0}^{|B|}\!\binom{|B|}{n}|D|^{-n}(1-|D|^{-1})^{|B|-n}e^{dn}F_{k-1}(c,d+cn)\,.\notag
\end{align}
As stated, the recursive formula provided by \eqref{eq:F_2} and \eqref{eq:F_inductive} is still too computationally expensive due to each recursive call requiring the summations over $|B|+1$ elements.  However, noting that terms in the  summations are all positive, we can obtain a practical lower bound by retaining only a small number of terms.  More specifically, given  choices of $T_k\subset\{0,...,|B|\}$ for each $k$, if we recursively define
\begin{align}
F_{T,k}(c,d)\coloneqq \sum_{n\in T_k}\binom{|B|}{n}|D|^{-n}(1-|D|^{-1})^{|B|-n}e^{dn}F_{k-1}(c,d+cn,T_{k-1})    
\end{align}
with 
\begin{align}\label{eq:F_2_T}
    F_{T,2}(c,d)\coloneqq&\sum_{n\in T_2}\binom{|B|}{n}|D|^{-n}(1-|D|^{-1})^{|B|-n} e^{dn}(1-|D|^{-1}+|D|^{-1}e^{cn+d})^{|B|}\,.
\end{align}
Then, by induction, we have $F_k(c,d)\geq F_{T,k}(c,d)$ for all $k,T,c,d$ and therefore
\begin{align}\label{eq:FSR_lb_T_array}
   \epsilon^\prime_t(\alpha)\geq \frac{1}{\alpha-1} F_{T,\alpha}(4\sigma_t^2,0) 
\end{align}
for all $\alpha$ and all choices of $T_k$, $k=2,...,\alpha$. By appropriately choosing the number of terms $T_k$ for each $k$ one trades  between accuracy and computational cost.   In particular, if each $|T_k|=1$ then the computation of \eqref{eq:FSR_lb_T_array} only requires $O(\alpha)$ computations. If $|T_k|=\ell>2$ for all $k$ then the computation of \eqref{eq:FSR_lb_T_array} required $O(\ell^{\alpha-1})$ computations.   In contrast, using the exact recursive formula \eqref{eq:F_inductive} requires $O(|B|^{\alpha-1})$ computations, which is intractable unless $\alpha$ or $|B|$ is sufficiently small. However, despite this improvement, we should note that  computational difficulties  persist for cases where the  number of nontrivial terms is too large.

In our experiments in Figure \ref{fig:FSR_upper_lower} we find that that using $T_k=\{|B|\}$ for $k>2$ and $T_2=\{0,...,|B|\}$ gives an accurate and computationally tractable  lower bound. In the example in Figure \ref{fig:FSR_LB_vs_alpha} more terms were required; we show results obtained by using $T_k=\{0,1,2,|B|\}$  for $k>2$ and $T_2=\{0,...,|B|\}$.

\subsection{Comparison of Upper and Lower FS${}_{\text{wR}}$-RDP Bounds}
Now we will compare our lower bound for FS${}_{\text{wR}}$-RDP from Theorem \ref{thm:FSR_RDP_LB} with the upper bound  derived in Appendix \ref{app:FSR_UB}. In Figure \ref{fig:FSR_upper_lower} we show  the FS${}_{\text{wR}}$-RDP upper bound (solid line) and lower bound  (circles) and, for comparison, we also  show our bound on FS${}_{\text{woR}}$-RDP (dashed line).  Note that all three are very similar when    $\alpha$ is close to $1$; this is to be expected, due to   them having the same leading order behavior in $q$, given by \eqref{eq:eps_leading_order_FS}.  However, as $\alpha$ increases the non-leading order behavior becomes more important and we observe that, after some threshold $\alpha$, both the FS${}_{\text{wR}}$-RDP upper and lower bounds increase rapidly.   A similar ``phase transition" was  observed earlier in \cite{wang2019subsampled} for subsampling without replacement, though for these parameters it occurs at much higher values of $\alpha$.  A key difference between these two cases is that, for FS${}_{\text{wR}}$-RDP, the phase transition can be  brought on  by increasing $|B|$  while leaving $q=|B|/|D|$ fixed, as demonstrated in Appendix \ref{app:FSR_lower_bound}; see \eqref{eq:FSR_lb_loosened} and Figure \ref{fig:FSR_exact_vs_B}.    In contrast, the lower bound for FS${}_{\text{woR}}$-RDP from \cite{wang2019subsampled}  depends on $|B|$ and $|D|$ only through their ratio $q$.  We anticipate that the gap between the upper and lower bounds on FS${}_{\text{wR}}$-RDP in Figure \ref{fig:FSR_upper_lower} could be reduced by using a higher order expansion in $q$, but we leave that to future work. We also emphasize that the dashed \fswor-RDP upper bounds is only for comparison purposes as it does not apply to the \fswr case considered in this section.

\begin{figure}[h]
\begin{center}
\centerline{\includegraphics[width=.5\columnwidth]{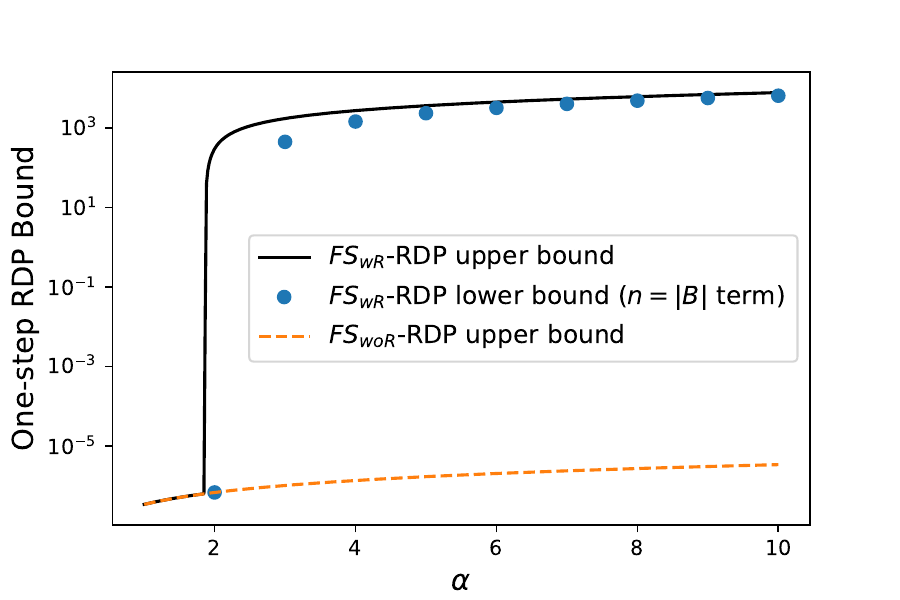}}
\caption{Comparison of our upper and lower bounds on FS${}_{\text{wR}}$-RDP from Theorems \ref{thm:FSR_RDP_UB} and \ref{thm:FSR_RDP_LB}. We used $\sigma_t=6$, $|B|=120$, and $|D|=50,000$.}
\label{fig:FSR_upper_lower}
\end{center}
\vskip -0.2in
\end{figure}

\section{Comparison with \cite{wang2019subsampled}}\label{app:Wang_et_al_comp}
In \cite{wang2019subsampled}, R{\'e}nyi-DP enhancement of fixed-size subsampling was analyzed in a framework that applies to   arbitrary mechanisms.  In this appendix we specialize their results to fixed-size subsampling DP-SGD with Gaussian noise.  This will facilitate comparison with our results. Specifically, we will show that our specialized method results in a factor  tighter bounds  by approximately a factor of $4$.
\subsection{Upper Bound from \cite{wang2019subsampled}, Theorem 9}\label{app:Wang_et_al_upper_bound}
For integer $\alpha\geq 2$, the following the general fixed-size subsampling R{\'e}nyi-DP enhancement  bound (for one step) was derived in \cite{wang2019subsampled}:
\begin{align}\label{eq:Wang_ub}
\epsilon^\prime_t(\alpha)\leq &\frac{1}{\alpha-1} K_t(\alpha)\,,\\
K_{t}(\alpha)\coloneqq &\log\left[1+2q^2\alpha(\alpha-1) (e^{\epsilon_t(2)}-1)+\sum_{j=3}^\alpha \frac{2 q^j\prod_{\ell=0}^{j-1}(\alpha-\ell)}{j!} e^{(j-1)\epsilon_t(j)}\right] \notag
\end{align}
(if $\alpha=2$ then the empty summation is interpreted to be zero), where $\epsilon_t(j)$ denotes a R{\'e}nyi-DP bound for the mechanism without random subsampling. For DP-SGD we have
\begin{align}\label{eq:Renyi_bound_non_subsampled}
  D_j(N_{\mu^\prime,\widetilde\sigma_t^2}\|N_{\mu,\widetilde\sigma_t^2})=\frac{j \|\mu'-\mu\|^2}{2\widetilde\sigma_t^2}
\leq \frac{2jC^2\eta_t^2}{|B|^2\widetilde\sigma_t^2}=\frac{2j}{\sigma_t^2}
  \end{align}
and therefore we can use
\begin{align}\label{eq:eps_j_def}
    \epsilon_t(j)\coloneqq \frac{2j}{\sigma_t^2}\,.
\end{align}
Note that \eqref{eq:Renyi_bound_non_subsampled} uses the same bound on the means that we use in our method, and therefore \eqref{eq:eps_j_def}  will yield a fair comparison.  Comparing the leading order behavior of \eqref{eq:Wang_ub} with the leading order behavior of a single step of our result in Theorem \ref{thm:RDP}, we see that our bound is smaller by a factor of $4$.  In practice we find similar behavior  even when including the higher order terms; see Figure \ref{fig:comp_with_Wang_et_al_ro_adjacency}.

For non-integer $\alpha\geq 2$ they use the following  bound, which follows from convexity of $K_t(\alpha)$ in $\alpha$:
\begin{align}\label{eq:Wang_et_al_convexity}
    \epsilon^\prime_t(\alpha)\leq \frac{1-(\alpha-\lfloor\alpha\rfloor)}{\alpha-1} K_{t}(\lfloor\alpha\rfloor)+
  \frac{\alpha-\lfloor\alpha\rfloor}{\alpha-1} K_{t}(\lfloor\alpha\rfloor+1)\,,
\end{align}
where $\lfloor x\rfloor$ denotes the floor of $x$.  For $\alpha\in (1,2)$ the convexity bound is
\begin{align}
    \epsilon^\prime_t(\alpha)\leq 
  \frac{\alpha-\lfloor\alpha\rfloor}{\alpha-1} K_{t}(2)=K_{t}(2)
\end{align}
which simplifies to 
\begin{align}
    \epsilon_t^\prime(\alpha)\leq \epsilon_t^\prime(2)\,, \,\,\,\alpha\in(1,2)\,.
\end{align}

\subsection{Lower Bound from \citep{wang2019subsampled}, Proposition 11}\label{app:Wang_et_al_lb}
A general lower-bound on fixed-size subsampling enhancement was also derived in \citep{wang2019subsampled}; we repeat it here for convenience. For integer $\alpha\geq 2$ they show
\begin{align}\label{eq:Wang_et_al_LB}
    \epsilon_t^\prime(\alpha)\geq \frac{\alpha}{\alpha-1}\log(1-q)+\frac{1}{\alpha-1}\log\!\left[1+\alpha\frac{q}{1-q}+\sum_{j=2}^\alpha \frac{\prod_{\ell=0}^{j-1}(\alpha-\ell)}{j!}\left(\frac{q}{1-q}\right)^je^{(j-1)\epsilon_t(j)}\right],
\end{align}
where $\epsilon_t(j)$ is again given by \eqref{eq:eps_j_def}.  In Figure \ref{fig:comp_with_Wang_et_al_ro_adjacency} we demonstrate that our result is very close to this theoretical RDP lower bound.

\subsection{Comparison with Replace-One Adjacency Bounds in Theorem \ref{thm:FS_woR_replace_one} }\label{app:Wang_et_al_adj_comp}
The analysis in \cite{wang2019subsampled} is done under the replace-one adjacency definition and is therefore directly compatible to our Theorem  \ref{thm:FS_woR_replace_one}. In Section \ref{sec:Wang_et_al_comp} we showed that our replace-one and our add/remove adjacency results, Theorem \ref{thm:FS_woR_replace_one}  and \ref{thm:one_step_renyi} respectively, lead to the  same RDP bounds to leading order in $q$ and $1/\sigma_t^2$, including the factor of 4 improvement over \cite{wang2019subsampled}; this confirms that  the improvement made by our approach is not reliant on the  adjacency definition. 

In this appendix we provide a conjecture as to the origin of this difference.  The leading order behavior comes from the calculations \eqref{eq:dk_H_tilde_k_2}-\eqref{eq:F2_bound}, where we used the  constraints on the means from \eqref{eq:Delta_mu_bounds}:
\begin{align}
    \|\Delta \mu_t(b^\prime,\widetilde b)\|\leq r_t\,,\,\,\,\|\Delta \mu^\prime_t(b^\prime,\widetilde b)\|\leq r_t\,,\,\,\,\|\Delta \mu_t(b^\prime,\widetilde b)-\Delta \mu^\prime_t(b^\prime,\widetilde b)\|\leq r_t\,.
\end{align}
However, if one doesn't use the constraint on the difference and only uses the first two upper bounds on the means then the best bound on the dot product becomes
\begin{align}
|\mu_t(b^\prime,\widetilde{b})\cdot\Delta\mu_t^\prime(b^\prime,\widetilde{b})|\leq r_t^2
\end{align}
and so the bound \eqref{eq:F2_bound} is weakened to
\begin{align}
\frac{d^2}{dq^2}F_\alpha(0)\leq 2\alpha(\alpha-1)\left(e^{4/{\sigma}_t^2}-e^{-4/{\sigma}_t^2}\right)=16\alpha(\alpha-1)/{\sigma}_t^2+O(1/{\sigma}_t^4)\,,
\end{align}
which at leading order is a factor of $4$ larger than  our result \eqref{eq:F2_bound} which was derived using all three constraints in  \eqref{eq:Delta_mu_bounds}.  The strategies we employ differ  from \cite{wang2019subsampled} in a way that makes it difficult to pinpoint exactly what the essential differences are, however the above calculation leads us to conjecture that the difference originates in the failure to account for the constraint that couples $\Delta\mu_t$ and $\Delta\mu_t^\prime$.
See also Remark \ref{remark:Wang_factor_4}.

\section{Variance of Fixed-size vs. Poisson Subsampling}\label{app:var}
In this appendix we provide detailed   calculations of the variances presented in Section \ref{sec:var}.  Let $a_i\in\mathbb{R}$, $i=1,...,|D|$, let $|B|\in\{1,...,|D|\}$ and define $q=|B|/|D|$.  Let $J_i$, $i=1,...,|D|$ be iid $\text{Bernoulli}(q)$ random variables.  Let $B$ be a random variable that uniformly selects a subset of size $|B|$ from $\{2,...,|D|\}$, and let $R_j$, $j=1,...,|B|$ be iid uniformly distributed over $\{1,...,|D|\}$.  We will compare the variance of the  Poisson-subsampled average,
\begin{align}
    Z_P\coloneqq\frac{1}{|B|}\sum_{i=1}^{|D|} a_i J_i\,
\end{align}
with that of fixed-size subsampling with replacement,
\begin{align}
    Z_R\coloneqq \frac{1}{|B|}\sum_{j=1}^{|B|} a_{R_j}\,,
\end{align}
and with fixed-size subsampling without replacement,
\begin{align}
    Z_B\coloneqq \frac{1}{|B|}\sum_{i\in B} a_i\,.
\end{align}
They all have the same expected value, equaling the average of the samples:
\begin{align}
    {E}[Z_P]=\frac{1}{|B|}\sum_{i=1}^{|D|} a_i E[J_i]=\frac{q}{|B|}\sum_{i=1}^{|D|} a_i =\frac{1}{|D|}\sum_{i=1}^{|D|} a_i\coloneqq \overline{a}\,,
\end{align}
\begin{align}
    E[Z_R]=&\frac{1}{|B|} \sum_{j=1}^{|B|} E[a_{R_j}]=\frac{1}{|B|} \sum_{j=1}^{|B|} E[\sum_i a_i 1_{R_j=i}]\\
    =&\frac{1}{|B|}\sum_{j=1}^{|B|}\sum_{i=1}^{|D|} a_i P(R_j=i)=\frac{1}{|D|}\sum_{i=1}^{|D|} {a}_i=\overline{a}\,,\notag
\end{align}
and
\begin{align}
    {E}[Z_B]=\frac{1}{|B|}\sum_{i=1}^{|D|} a_iP(i\in B)=\frac{1}{|B|}\sum_{i=1}^{|D|} a_i\frac{\binom{|D|-1}{|B|-1}}{\binom{|D|}{|B|}}=\overline{a}\,.
    \end{align}
The variances are
\begin{align}
\Var[Z_P]=&\frac{1}{{|B|}^2}\sum_{i,j=1}^{|D|} a_ia_jE[J_iJ_j]-\overline{a}^2\\
=&\frac{1}{{|B|}^2} \sum_i a_i^2E[J_i]+\frac{1}{{|B|}^2}\sum_{i\neq j} a_ia_j E[J_i]E[J_j]-\overline{a}^2\notag\\
=&\frac{1}{{|B|}{|D|}}\sum_i a_i^2+\frac{1}{{|D|}^2}\sum_{i\neq j} a_ia_j -\frac{1}{{|D|}^2}\sum_{i,j}a_ia_j\notag\\
=&\left(\frac{1}{{|B|}}-\frac{1}{{|D|}}\right)\frac{1}{{|D|}}\sum_{i=1}^{|D|} a_i^2\,,\notag
\end{align}
\begin{align}
    \Var[Z_R]= &\frac{1}{{|B|}^2}\sum_{\ell,j=1}^{|B|}E[ a_{R_j} a_{R_\ell}]-\overline{a}^2\notag\\
=&\frac{1}{{|B|}^2}\sum_{j=1}^{|B|}E[ a_{R_j}^2]+\frac{1}{{|B|}^2}\sum_{\ell\neq j}E[ a_{R_j}]E[a_{R_\ell}]-\overline{a}^2\notag\\
=&\frac{1}{{|B|}^2}\sum_{j=1}^{|B|}E[ a_{R_j}^2]+\frac{1}{{|B|}^2}\sum_{\ell\neq j}\overline{a}^2-\overline{a}^2\notag\\
=&\frac{1}{{|B|}^2}\sum_{j=1}^{|B|}E[\sum_i 1_{R_j=i} a_{i}^2]+\frac{1}{{|B|}^2}({|B|}^2-{|B|})\overline{a}^2-\overline{a}^2\notag\\
=&\sum_{i=1}^{|D|}a_{i}^2 \frac{1}{{|B|}^2}\sum_{j=1}^{|B|}E[1_{R_j=i} ]+(1-1/{|B|})\overline{a}^2-\overline{a}^2\notag\\
=&\sum_{i=1}^{|D|}a_{i}^2 \frac{1}{{|B|}^2}\sum_{j=1}^{|B|}P(R_j=i)-\frac{1}{{|B|}}\overline{a}^2\notag\\
=&\frac{1}{{|B|} {|D|}}\sum_{i=1}^{|D|}a_{i}^2 -\frac{1}{{|B|}}\overline{a}^2\notag\\
=&(1-{|B|}/{|D|})^{-1}\Var[Z_P] -\frac{1}{{|B|}}\overline{a}^2\,,\notag
\end{align}
and 
\begin{align}
    \Var[Z_B]=&\frac{1}{{|B|}^2}\sum_{i,j=1}^{|D|} a_ia_j P(i\in B,j\in B)-\overline{a}^2\\
    =&\frac{1}{{|B|}^2}\sum_i a_i^2P(i\in B)+\frac{1}{{|B|}^2}\sum_{i\neq j} a_ia_j P(i\in B,j\in B)-\overline{a}^2\notag\\
    =&\frac{q}{{|B|}^2}\sum_i a_i^2+\frac{1}{{|B|}^2}\sum_{i\neq j} a_ia_j \frac{\binom{{|D|}-2}{{|B|}-2}}{\binom{{|D|}}{{|B|}}}-\overline{a}^2\notag\\
    =&\left(\frac{1}{{|B|}}-\frac{1}{{|D|}}\right)\left(\frac{1}{{|D|}}\sum_i a_i^2-\frac{1}{{|D|}({|D|}-1)}\sum_{i\neq j} a_ia_j\right)\notag\\
    =&\frac{{|D|}}{{|D|}-1}\left(\Var[Z_P]-\left(\frac{1}{{|B|}}-\frac{1}{{|D|}}\right)\overline{a}^2\right)\,.\notag
\end{align}
Therefore, defining $\overline{a^2}=\frac{1}{{|D|}}\sum_{i=1}^{|D|} a_i^2$,  when ${|B|}<|D|$ and $\overline{a^2}\neq 0$ we obtain the following variance relations
\begin{align}
    \frac{Var[Z_R]}{Var[Z_P]}=\frac{1}{1-q}\left(1-\frac{\overline{a}^2}{\overline{a^2}}\right)\,,
\end{align}
\begin{align}
   \frac{\Var[Z_B]}{\Var[Z_P]}=\frac{1}{1-1/|D|}\left(1-\frac{\overline{a}^2}{\overline{a^2}}\right)\,,
\end{align}
and
\begin{align}
     \Var[Z_R]=\frac{1-1/|D|}{1-q}\Var[Z_B]\,.
\end{align}
In particular, we see that fixed-size subsampling with replacement always has larger variance than fixed-size subsampling without replacement by the constant multiple $\frac{1-1/|D|}{1-q}>1$; however, this factor is close to $1$ when $q$ is small. See Section \ref{sec:var} for  discussion of  Poisson subsampling compared to fixed-size subsampling.

\section{Comparison of Poisson and Fixed-size  DP Bounds under Add/Remove Adjacency}\label{app:DP_comp}
 Using the conversion from Proposition 3 in \cite{mironov2017renyi}, if a mechanism satisfies $\epsilon^\prime(\alpha)$-RDP then it satisfies $(\epsilon,\delta)$-DP with
\begin{align}
 \epsilon=\epsilon^\prime(\alpha)+\frac{\log(1/\delta)}{\alpha-1}\,.
\end{align}
 Under add/remove adjacency, to leading order, both Poisson and fixed-size subsampling satisfy RDP bounds with $\epsilon^\prime(\alpha)=\alpha R$ for some $R>0$ (see \cite{abadi2016deep,mironov2019r}) and \eqref{eq:eps_leading_order_FS}.  Assuming that the optimal $\alpha$ is not close to one, and hence we can approximate  $\alpha$ with $\lambda\coloneqq\alpha-1$,  minimizing over $\alpha$ gives
\begin{equation}\label{eq:eps_DP_approx}
    \epsilon=\inf_{\alpha>1}\left\{\epsilon^\prime(\alpha)+\frac{\log(1/\delta)}{\alpha-1}\right\}
    \approx  \inf_{\lambda>0}\left\{\lambda R+\frac{\log(1/\delta)}{\lambda}\right\}
    =2\sqrt{\log(1/\delta) R}\,.
\end{equation}
For $t$ steps of fixed-size subsampling (with or without replacement), and with constant variance $\sigma$, Eq.~\eqref{eq:eps_leading_order_FS} implies
\begin{align}
  R_{FS}\approx\frac{tq^2}{2}(e^{4/\sigma^2}-1)\approx \frac{2tq^2}{\sigma^2}\,.
\end{align}
In contrast, for Poisson subsampling the results in \cite{abadi2016deep,mironov2019r} imply
\begin{align}
  R_{P}\approx\frac{tq^2}{2}(e^{1/\sigma^2}-1)\approx \frac{tq^2}{2\sigma^2}\,.
\end{align}
 Therefore $R_{FS}\approx 4R_{P}$  at leading order and so \eqref{eq:eps_DP_approx} implies   
\begin{align}\label{eq:eps_FS_P_relation}
  \epsilon_{FS}\approx 2\sqrt{2t\log(1/\delta)} \frac{q}{\sigma}\approx 2 \epsilon_{P}\,.
\end{align}  
From a privacy perspective, this suggests that  $\epsilon$ for Poisson subsampling has an intrinsic advantage over fixed-size subsampling by approximately a factor of $2$ when using the add/remove adjacency relation.  Noting the scaling of \eqref{eq:eps_FS_P_relation} with $\sigma$, we see that  fixed-size subsampling requires noise $2\sigma$ to obtain the same privacy guarantee (at leading order) as Poisson sampling with noise $\sigma$.   Tighter translation between RDP and $(\epsilon,\delta)$-DP, such as  Theorem 21 in \cite{rdp-to-dp} which is used by \cite{Opacus}, alters this story somewhat but the  general picture we have outlined here is corroborated by the empirical comparison provided in Section \ref{exp}.  We emphasize that this factor of two gap disappears when using replace-one adjacency; in that case Poisson subsampling and FS${}_{\text{woR}}$ subsampling lead to the same DP guarantees, as discussed in Section \ref{sec:privacy_comparison}.

\section{Additional Experiments}
\label{additional}
To assess the sensitivity of our proposed method to important parameters such as $\sigma$ and batch size $|B|$, we conducted a series of additional experiments to compare our proposed fixed-size method with the non-fixed size setting while varying values of  $B \in \{50, 100, 150, 200\}$ and $\sigma \in \{3.0, 4.5, 6.0, 12.0\}$. Figure~\ref{fig:ab} depicts the results of these experiments.

\begin{figure}[!b]
     \captionsetup[subfigure]{aboveskip=-1pt, belowskip=-1pt, justification=centering}
          \begin{subfigure}[b]{1.1\textwidth}
         \centering
         \includegraphics[width=1\textwidth]{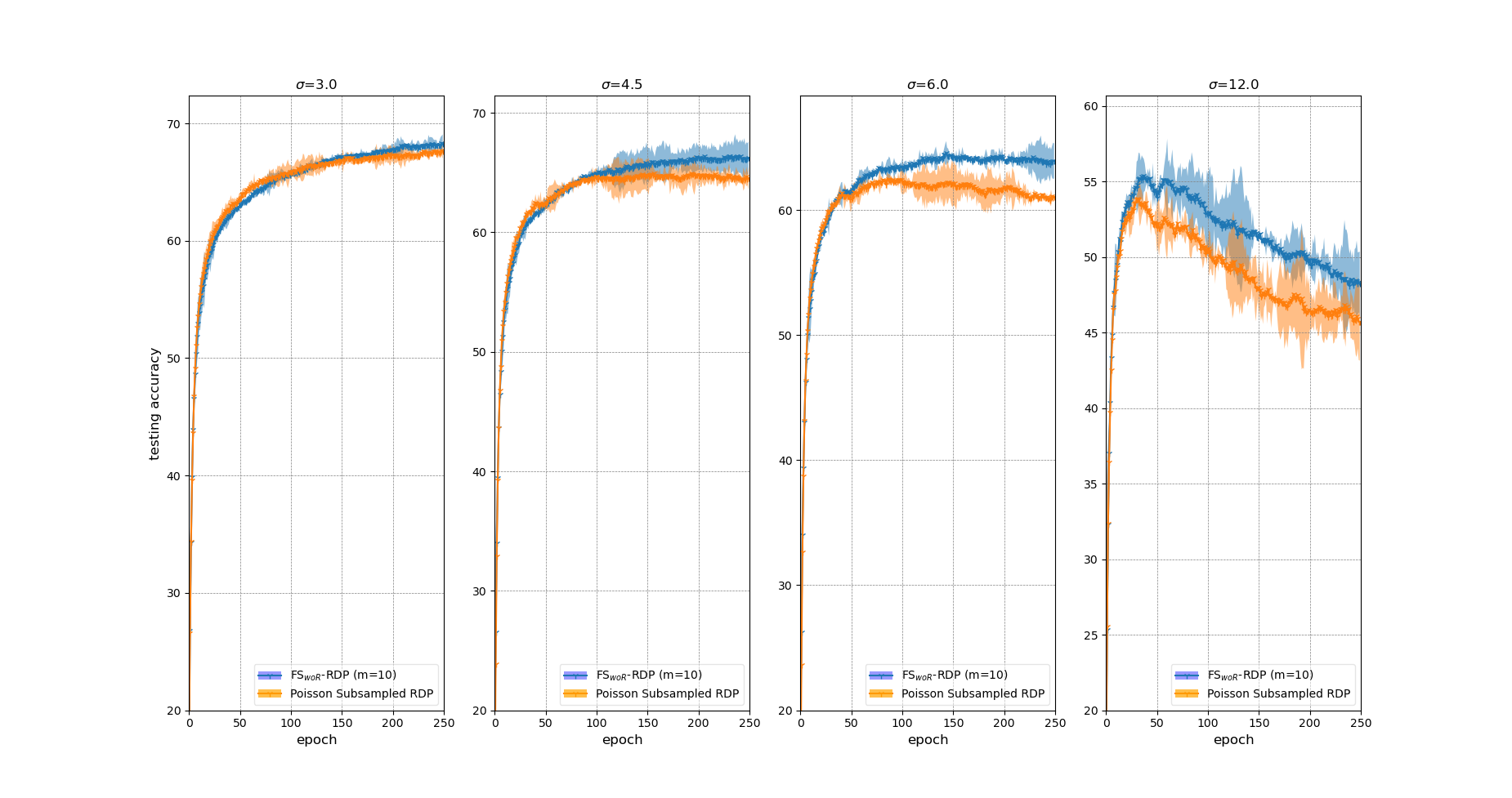}
         \caption{Performance with different values of $\sigma \in \{3.0, 4.5, 6.0, 12.0\}$}
         \label{fig:a}
     \end{subfigure}
   
     \begin{subfigure}[b]{1.1\textwidth}
         \includegraphics[width=1\textwidth]{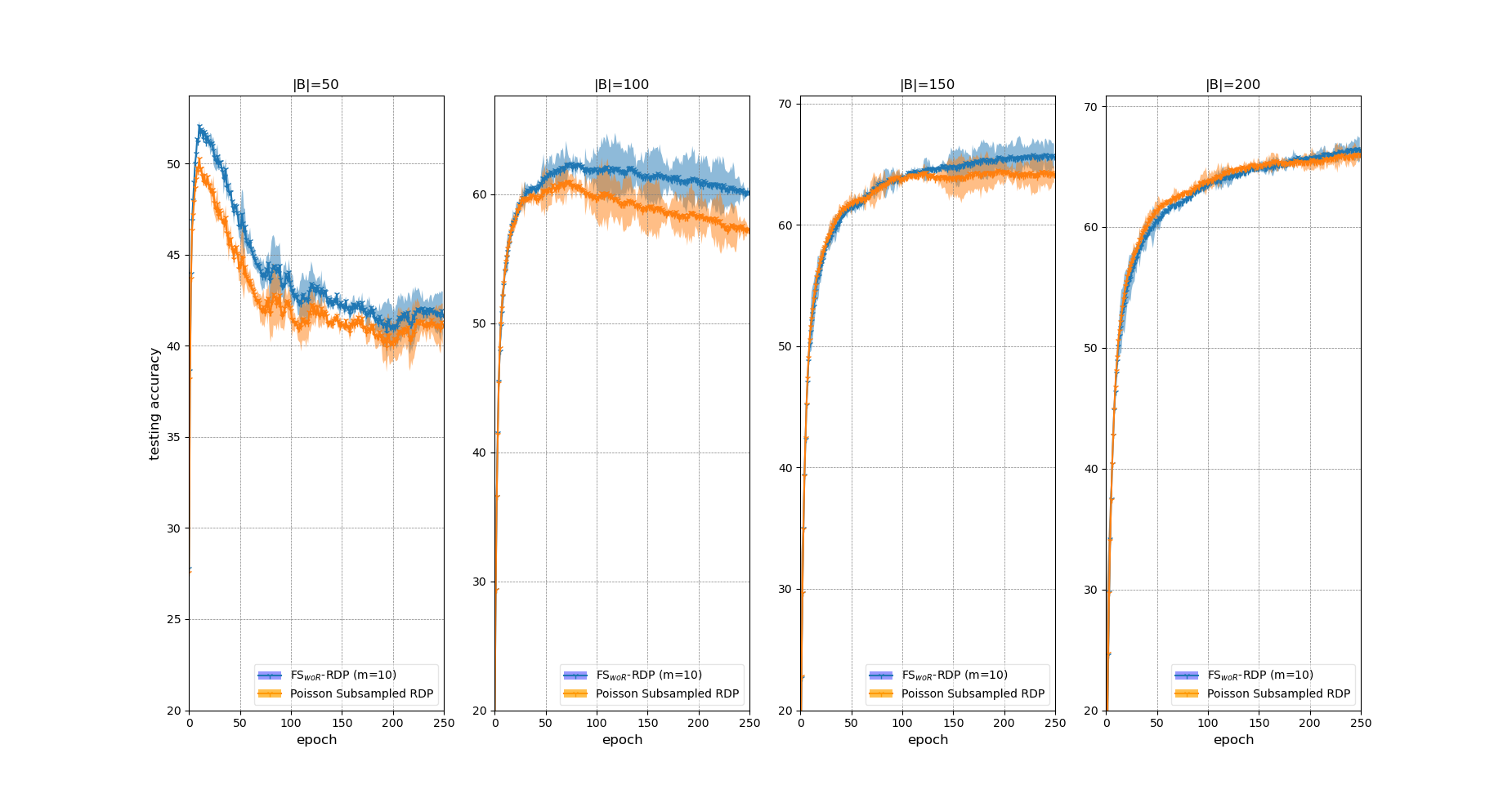}
         \caption{Performance with different values of batch size $|B| \in \{50, 100, 150, 200\}$}
         \label{fig:b}
     \end{subfigure}
    \caption{Comparing Fixed and Non-Fixed Size Performance with different $\sigma$ (a) and batch sizes (b) on CIFAR10 }
    \label{fig:ab}
\end{figure}

As shown in Figure~\ref{fig:ab}, the results for varying values of $\sigma$ and $|B|$ align with what we reported in Section~\ref{exp}, showing that the proposed method is not sensitive to parameter settings.

\end{document}